\title{\huge Towards Understanding the Spectral Bias of \\Deep Learning}
\author
{
	Yuan Cao\thanks{Equal contribution}~\thanks{Department of Computer Science, University of California, Los Angeles, CA 90095, USA; e-mail: {\tt yuancao@cs.ucla.edu}}
	,
	Zhiying Fang\footnotemark[1]~\thanks{Department of Mathematics, City University of Hong Kong, Kowloon, Hong Kong, China; e-mail: {\tt zyfang4-c@my.cityu.edu.hk}}
	,
	Yue Wu\footnotemark[1]~\thanks{Department of Computer Science, University of California, Los Angeles, CA 90095, USA; e-mail: {\tt ywu@cs.ucla.edu}} 
	,
	Ding-Xuan Zhou\thanks{Department of Mathematics, City University of Hong Kong, Kowloon, Hong Kong, China; e-mail: {\tt mazhou@cityu.edu.hk}} 
	,
	Quanquan Gu\thanks{Department of Computer Science, University of California, Los Angeles, CA 90095, USA; e-mail: {\tt qgu@cs.ucla.edu}}
}
\date{}
\def\Tr{\mathrm{Tr}}
\def\poly{\mathrm{poly}}
\newcommand{\la}{\langle}
\newcommand{\ra}{\rangle}
\begin{document}
\maketitle

\begin{abstract}
An intriguing phenomenon observed during training neural networks is the spectral bias, which states that neural networks are biased towards learning less complex functions. The priority of learning functions with low complexity might be at the core of explaining generalization ability of neural network, and certain efforts have been made to provide theoretical explanation for spectral bias. However, there is still no satisfying theoretical result justifying the underlying mechanism of spectral bias. In this paper, we give a comprehensive and rigorous explanation for spectral bias and relate it with the neural tangent kernel function proposed in recent work. We prove that the training process of neural networks can be decomposed along different directions defined by the eigenfunctions of the neural tangent kernel, where each direction has its own convergence rate and the rate is determined by the corresponding eigenvalue. We then provide a case study when the input data is uniformly distributed over the unit sphere, and show that lower degree spherical harmonics are easier to be learned by over-parameterized neural networks. Finally, we provide numerical experiments to demonstrate the correctness of our theory. Our experimental results also show that our theory can tolerate certain model misspecification in terms of the input data distribution.
\end{abstract}

\section{Introduction}

Over-parameterized neural networks have achieved great success in many applications such as computer vision \citep{he2016deep}, natural language processing \citep{collobert2008unified} and speech recognition \citep{hinton2012deep}.
It has been shown that over-parameterized neural networks can fit complicated target function or even randomly labeled data \citep{zhang2016understanding} and still exhibit good generalization performance when trained with real labels. Intuitively, this is at odds with the traditional notion of generalization ability such as model complexity.
In order to understand neural network training, a line of work \citep{soudry2017implicit,gunasekar2018implicit,gunasekar2018characterizing} has made efforts in the perspective of ``implicit bias'',
which states that training algorithms for deep learning implicitly pose an inductive bias onto the training process and lead to a solution with low complexity measured by certain norms in the parameter space of the neural network.

Among many attempts to establish implicit bias, \citet{rahaman2018spectral} pointed out an intriguing phenomenon called \textit{spectral bias}, which says that during training, neural networks tend to learn the components of lower complexity faster. Similar observation has also been pointed out in \citet{xu2019training,xu2019frequency}.
The concept of spectral bias is appealing because this may intuitively explain why over-parameterized neural networks can achieve a good generalization performance without overfitting. During training, the networks fit the low complexity components first and thus lie in the concept class of low complexity. Arguments like this may lead to rigorous guarantee for generalization.

Great efforts have been made in search of explanations about the spectral bias.
\citet{rahaman2018spectral} evaluated the Fourier spectrum of ReLU networks and empirically showed that the lower frequencies are learned first; also lower frequencies are more robust to random perturbation. \citet{andoni2014learning} showed that for a sufficiently wide two-layer network, gradient descent with respect to the second layer can learn any low degree bounded polynomial. 
\citet{xu2018understanding} provided Fourier analysis to two-layer networks and showed similar empirical results on one-dimensional functions and real data.
\citet{nakkiran2019sgd} used information theoretical approach to show that networks obtained by stochastic gradient descent can be explained by a linear classifier during early training. These studies provide certain explanations about why neural networks exhibit spectral bias in real tasks. But explanations in the theoretical aspect, if any, are to some extent limited. For example, Fourier analysis is usually done in the one-dimensional setting, and thus lacks generality.


Meanwhile, a recent line of work has taken a new approach to analyze neural networks based on the \textit{neural tangent kernel} (NTK) \citep{jacot2018neural}. In particular, they show that under certain over-parameterization condition, 
the neural network trained by gradient descent behaves similarly to the kernel regression predictor using the neural tangent kernel. For training a neural network with hidden layer width $m$ and sample size $n$, recent optimization results on the training loss in the so-called ``neural tangent kernel regime'' can be roughly categorized into the following two families: (i) Without any assumption on the target function (the function used to generate the true labels based on the data input), if the network width $m \geq \poly(n,\lambda_{\min}^{-1})$, where $\lambda_{\min}$ is the smallest eigenvalue of the NTK Gram matrix, then square loss/cross-entropy loss can be optimized to zero \citep{du2018gradient,allen2018convergence,du2018gradientdeep,zou2018stochastic,zou2019improved}; and (ii) If the target function has bounded norm in the NTK-induced reproducing kernel Hilbert space (RKHS), then global convergence can be achieved with milder requirements on $m$. \citep{arora2019fine,arora2019exact,cao2019generalizationsgd,ji2019polylogarithmic}.


Inspired by these works mentioned above in the neural tangent kernel regime, in this paper we study the spectral bias of over-parameterized two-layer neural networks and its connection to the neural tangent kernel. 
Note that a basic connection between neural tangent kernel and the spectral bias can be indicated by the observation that neural networks evolve as linear models \citep{jacot2018neural,lee2019wide} in the NTK regime, which suggests that some corresponding results for linear models \citep{advani2017high} can be applied. However, 
direct combinations of existing techniques cannot show how many hidden nodes can guarantee the learning of simple/complex components, which is the key problem of interest. Therefore, although such combinations of existing results can provide some mathematical intuition, they are not sufficient to explain the spectral bias for neural networks. To give a thorough characterization of spectral bias, we study the training of mildly over-parameterized neural networks. 
We show that, given a training data set that is generated based on a target function, a fairly narrow network, although cannot fit the training data well due to its limited width, can still learn certain low-complexity components of the target function in the eigenspace corresponding to large eigenvalues of neural tangent kernel. As the width of the network increases, more high-frequency components of the target function can be learned with a slower convergence rate. 
As a special case, our result implies that when the  input data follows uniform distribution on the unit sphere, polynomials of lower degrees can be learned by a narrower neural network at a faster rate.
We also conduct experiments to corroborate the theory we establish.

Our contributions are as follows:
\begin{enumerate}[leftmargin = *]
    \item We prove a generic theorem for arbitrary data distributions, which states that under certain sample complexity and over-parameterization conditions, the convergence of the training error along different eigendirections of NTK relies on the corresponding eigenvalues. This theorem gives a more precise control on the regression residual than \citet{su2019learning}, 
    where the authors focused on the case when the labeling function is close to the subspace spanned by the first few eigenfunctions.
    \item We present a characterization of the spectra of the neural tangent kernel that is more general than existing results. 
    In particular, we show that when the input data follow uniform distribution over the unit sphere, the eigenvalues of neural tangent kernel are $\mu_k = \Omega( \max\{k^{-d-1}, d^{-k+1}\} )$, $k\geq 0$, with corresponding eigenfunctions being the $k$-th order spherical harmonics. Our result is better than the bound $\Omega(k^{-d-1})$ derived in \citet{bietti2019inductive} when $d \gg k$, which is in a more practical setting.
    \item We establish a rigorous explanation for the spectral bias based on the aforementioned theoretical results without any specific assumptions on the target function. We show that the error terms from different frequencies are provably controlled by the eigenvalues of the NTK, and the lower-frequency components can be learned with less training examples and narrower networks at a faster convergence rate. 
    
\end{enumerate}


\section{Related Work}
This paper follows the line of research studying the training of over-parameterized neural networks in the neural tangent kernel regime. As mentioned above, \citep{du2018gradient,allen2018convergence,du2018gradientdeep,zou2018stochastic,zou2019improved} proved the global convergence of (stochastic) gradient descent regardless of the target function, at the expense of requiring an extremely wide neural network whose width depends on the smallest eigenvalue of the NTK Gram matrix. Another line of work \citep{arora2019fine,ghorbani2019linearized,arora2019exact,cao2019generalizationsgd,ji2019polylogarithmic} studied the generalization bounds of neural networks trained in the neural tangent kernel regime under various assumptions that essentially require the target function have finite NTK-induced RKHS norm. A side product of these results on generalization is a greatly weakened over-parameterization requirement for global convergence, with the state-of-the-art result requiring a network width only polylogarithmic in the sample size $n$ \citep{ji2019polylogarithmic}. \citet{su2019learning} studied the network training from a  functional approximation perspective, and established a global convergence guarantee when the target function lies in the eigenspace corresponding to the large eigenvalues of the integrating operator $L_{\kappa}f(s) := \int_{\SSS^{d}}\kappa(x,s)f(s) d\tau(s)$, where $\kappa(\cdot,\cdot)$ is the NTK  function  and $\tau(s)$ is the input distribution.

A few theoretical results have been established towards understanding the spectra of neural tangent kernels. To name a few,
\citet{bach2017harmonics} studied two-layer ReLU networks by relating it to kernel methods, and proposed a harmonic decomposition for the functions in the reproducing kernel Hilbert space which we utilize in our proof.
Based on the technique in \citet{bach2017harmonics}, 
\citet{bietti2019inductive} studied the eigenvalue decay of integrating operator defined by the neural tangent kernel on unit sphere by using spherical harmonics. 
\citet{vempala2018gradient} calculated the eigenvalues of neural tangent kernel corresponding to two-layer neural networks with sigmoid activation function.
\citet{basri2019convergence} established similar results as  \citet{bietti2019inductive}, but considered the case of training the first layer parameters of a two-layer networks with bias terms. \citet{yang2019fine} studied the
the eigenvalues of integral operator with respect to the NTK on Boolean cube by Fourier analysis. 
Very recently, \citet{chen2020deep} studied the connection between NTK and Laplacian kernels. 
\citet{bordelon2020spectrum} gave a spectral analysis on the generalization error of NTK-based kernel ridge regression. 
\citet{basri2020frequency} studied the convergence of full training residual with a focus on one-dimensional, non-uniformly distributed data.

A series of papers \citep{gunasekar2017implicit,soudry2017implicit,gunasekar2018characterizing,gunasekar2018implicit,nacson2018stochastic,li2018algorithmic,jacot2020implicit} studied implicit bias problem, aiming to figure out when there are multiple optimal solutions of a training objective function, what kind of nice properties the optimal found by a certain training algorithm would have. Implicit bias results of gradient descent, stochastic gradient descent, or mirror descent for various problem settings including matrix factorization, logistic regression, deep linear networks as well as homogeneous models. The major difference between these results and our work is that implicit bias results usually focus on the parameter space, while we study the functions a neural network prefer to learn in the function space. 



\section{Preliminaries}\label{problemsetup}
In this section we introduce the basic problem setup including the neural network structure and the training algorithm, as well as some background on the neural tangent kernel proposed recently in \citet{jacot2018neural} and the corresponding integral operator. 
\subsection{Notation}
We use lower case, lower case bold face, and upper case bold face letters to denote scalars, vectors and matrices respectively. For a vector $\vb=(v_1,\ldots,v_d)^T\in \RR^d$ and a number $1\leq p < \infty$, we denote its $p-$norm by $\|\vb\|_p = (\sum_{i=1}^d |v_i|^p)^{1/p}$. We also define infinity norm by $\|\vb\|_\infty = \max_i|v_i|$. For a matrix $\Ab = (A_{i,j})_{m\times n} $, we use $\|\Ab \|_{0}$ to denote the number of non-zero entries of $\Ab$, and use $\| \Ab  \|_F = (\sum_{i,j=1}^d A_{i,j}^2)^{1/2}$ to denote its Frobenius norm. Let $\|\Ab \|_p = \max_{\| \vb \|_p \leq 1} \| \Ab \vb\|_p$ for $p\geq 1$, and $\|\Ab \|_{\max} = \max_{i,j} |A_{i,j}|$. For two matrices $\Ab,\Bb \in \RR^{m\times n}$, we define $\la \Ab, \Bb \ra = \Tr(\Ab^\top \Bb)$. We use $\Ab \succeq \Bb$ if $\Ab - \Bb$ is positive semi-definite. For a collection of two matrices $ \Ab = (\Ab_1,\Ab_2 ) \in \RR^{m_1 \times n_1} \otimes \RR^{m_2 \times n_2}$, we denote $\cB(\Ab, \omega ) = \{ \Ab' = (\Ab_1',\Ab_2'): \|\Ab_1' - \Ab_1\|_F, \|\Ab_2' - \Ab_2\|_F \leq \omega \}$. 
In addition, we define the asymptotic notations $\cO(\cdot)$, $\tilde{\cO}(\cdot)$, $\Omega(\cdot)$ and $\tilde\Omega(\cdot)$ as follows. Suppose that $a_n$ and $b_n$ be two sequences. 
We write $a_n = \cO(b_n)$ if $\limsup_{n\rightarrow \infty} |a_n/b_n| < \infty$, and $a_n = \Omega(b_n)$ if $\liminf_{n\rightarrow \infty} |a_n/b_n| > 0$. We use $\tilde{\cO}(\cdot)$ and $\tilde{\Omega}(\cdot)$ to hide the logarithmic factors in $\cO(\cdot)$ and $\Omega(\cdot)$.  

\subsection{Problem Setup}
Here we introduce the basic problem setup. We consider two-layer fully connected neural networks of the form
\begin{align*}
    f_\Wb(\xb) = \sqrt{m}\cdot \Wb_2 \sigma(  \Wb_{1} \xb ),
\end{align*}
where $\Wb_1 \in \RR^{m\times (d+1)}$, $\Wb_2\in\RR^{1\times m}$ are\footnote{Here the input dimension is $d+1$ since in this paper we assume that all training data lie in the $d$-dimensional unit sphere $\SSS^d\in\RR^{d+1}$.} the first and second layer weight matrices respectively, and $\sigma(\cdot) = \max\{0,\cdot\}$ is the entry-wise ReLU activation function. 
The network is trained according to the square loss on $n$ training examples $S = \cbr{(\xb_i, y_i) : i \in [n]}$: 
\begin{align*}
    L_S(\Wb) = \frac{1}{n} \sum_{i=1}^n \rbr{y_i - \theta f_{\Wb}(\xb_i)}^2,
\end{align*}
where $\theta$ is a small coefficient to control the effect of initialization,  
and the data inputs $\left\{\xb_i\right\}_{i=1}^n $ are assumed to follow some unknown distribution $\tau$ on the unit sphere $\SSS^d \in \RR^{d+1}$. Without loss of generality, we also assume that $ |y_i| \leq 1$.

We first randomly initialize the parameters of the network, and run gradient descent for both layers. We present our detailed neural network training algorithm in Algorithm~\ref{alg:GDrandominit}.
\begin{algorithm}[H]
\caption{GD for DNNs starting at Gaussian initialization}\label{alg:GDrandominit}
\begin{algorithmic}
\STATE \textbf{Input:} Number of iterations $T$, step size $\eta$.
\STATE Generate each entry of $\Wb_1^{(0)}$ and $\Wb_2^{(0)}$ from $N(0,2/m)$ and $N(0,1/m)$ respectively.
\FOR{$t=0,1,\ldots, T-1$}
\STATE Update $\Wb^{(t+1)} = \Wb^{(t)} - \eta\cdot \nabla_{\Wb} L_{S}(\Wb^{(t)})$.
\ENDFOR
\STATE \textbf{Output:} $\Wb^{(T)}$.
\end{algorithmic}
\end{algorithm}
The initialization scheme for $\Wb^{(0)}$ given in Algorithm 1 is known as He initialization \citep{he2015delving}. It is consistent with the initialization scheme used in \citet{cao2019generalizationsgd}.

\subsection{Neural Tangent Kernel}
Many attempts have been made to study the convergence of gradient descent assuming the width of the network is extremely large \citep{du2018gradient,li2018learning}. When the width of the network goes to infinity, with certain initialization on the model weights, 
 the limit of inner product of network gradients defines a kernel function, namely the neural tangent kernel \citep{jacot2018neural}. In this paper, we denote the neural tangent kernel as
$$
\kappa(\xb,\xb') = \lim_{m\rightarrow \infty} m^{-1} \la \nabla_{\Wb} f_{\Wb^{(0)}}(\xb), \nabla_{\Wb} f_{\Wb^{(0)}}(\xb') \ra. $$ 
For two-layer networks, standard concentration results gives 
\begin{align}\label{definition:kappa}
    \kappa(\xb,\xb') = \la \xb, \xb'\ra\cdot \kappa_1(\xb,\xb') +   2\cdot \kappa_2(\xb,\xb'),
\end{align}
where
\begin{equation}\label{definition:kernel}
\begin{aligned}
   &\kappa_1(\xb,\xb') = \EE_{\wb\sim N(\mathbf{0}, \Ib)} [\sigma'(\la \wb,\xb \ra)\sigma'(\la \wb,\xb' \ra)],\\
   &\kappa_2(\xb,\xb') = \EE_{\wb\sim N(\mathbf{0}, \Ib)} [\sigma(\la \wb,\xb \ra)\sigma(\la \wb,\xb' \ra)].
\end{aligned}
\end{equation}
Since we apply gradient descent to both layers, the neural tangent kernel is the sum of the two different kernel functions and clearly it can be reduced to one layer training setting. These two kernels are arc-cosine kernels of degree 0 and 1 \citep{cho2009kernel}, which are given as $\kappa_1(\xb,\xb') = \hat \kappa_1(\la\xb,\xb'\ra( \left\|\xb\right\|_2 \left\|\xb'\right\|_2))$, $\kappa_2(\xb,\xb') = \hat \kappa_2(\la\xb,\xb'\ra/( \left\|\xb\right\|_2 \left\|\xb'\right\|_2))$, where 
\begin{equation}
\begin{aligned}
  &\hat\kappa_1(t) = \frac{1}{2\pi} \left( \pi - \arccos{(t)}\right), \\
  &\hat\kappa_2(t) = \frac{1}{2\pi} \left( t \cdot \left( \pi - \arccos{(t)}\right) + \sqrt{1-t^2} \right).
\end{aligned}
\end{equation}

\subsection{Integral Operator}
 The theory of integral operator with respect to kernel function has been well studied in literature \citep{smale2007learning,rosasco2010learning} thus we only give a brief introduction here. Let $L^2_\tau(X)$ be the Hilbert space of square-integrable functions with respect to a  Borel measure $\tau$ from $X \rightarrow \RR$. For any continuous kernel function $\kappa : X \times X \rightarrow \RR$ and $\tau$ we can define an integral operator $L_\kappa$ on $L^2_\tau(X)$ by
\begin{equation}\label{definition:integraloperator}
\begin{aligned}
L_\kappa(f)(\xb)= \int_X \kappa(\xb,\yb) f(\yb) d\tau(\yb),  ~~~ \xb \in X.
\end{aligned}
\end{equation}
It has been pointed out in \cite{cho2009kernel} that arc-cosine kernels are positive semi-definite. Thus the kernel function $\kappa$ defined by (\ref{definition:kappa}) is positive semi-definite being a product and a sum of positive semi-definite kernels. Clearly this kernel is also continuous and symmetric, which implies that the neural tangent kernel $\kappa$ is a Mercer kernel. 

\section{Main Results}\label{mainresults}
In this section we present our main results. We first give a general result on the convergence rate of gradient descent along different eigendirections of neural tangent kernel. Motivated by this result, we give a case study on the spectrum of $L_\kappa$ when the input data are uniformly distributed over the unit sphere $\SSS^d$. At last, we combine the spectrum analysis with the general convergence result to give an explicit convergence rate for uniformly distributed data on the unit sphere.

\subsection{Convergence Analysis of Gradient Descent}\label{sec:convergence}
In this section we study the convergence of Algorithm~\ref{alg:GDrandominit}. Instead of studying the standard convergence of loss function value, we provide a refined analysis on the speed of convergence along different directions defined by the eigenfunctions of $L_\kappa$. We first introduce the following notations.

Let $\{\lambda_i\}_{i\geq 1}$ with $\lambda_1 \geq \lambda_2 \geq \cdots$ be the strictly positive eigenvalues of $L_\kappa$, and $\phi_1(\cdot),\phi_2(\cdot),\ldots$ be the corresponding orthonormal eigenfunctions. Set $\vb_i = n^{-1/2}(\phi_i(\xb_1),\ldots, \phi_i(\xb_n))^\top$, $i = 1,2,\ldots$. Note that $L_\kappa$ may have eigenvalues with multiplicities larger than $1$ and $\lambda_i$, $i\geq 1$ are not distinct. Therefore for any integer $k$, we define $r_k$ as the sum of the multiplicities of the first $k$ distinct eigenvalues of $L_\kappa$. Define $\Vb_{{r_k}} = (\vb_1,\ldots,\vb_{{r_k}})$. 
By definition, $\vb_i$, $i \in [r_k]$ are rescaled restrictions of orthonormal functions in $L_\tau^2(\SSS^{d})$ on the training examples. Therefore we can expect them to form a set of almost orthonormal bases in the vector space $\RR^{n}$. The following lemma follows by standard concentration inequality. The proof is in the appendix. 

\begin{lemma}\label{lemma:projectionconcentration}
Suppose that $ |\phi_i(\xb) | \leq M$ for all $\xb\in \SSS^{d}$ and $i\in[r_k]$. For any $\delta > 0$, with probability at least $ 1 - \delta $, 
\begin{align*}
    \| \Vb_{r_k}^\top \Vb_{r_k} - \Ib \|_{\max} \leq  C M^2 \sqrt{\log(r_k / \delta) / n},
\end{align*}
where $C$ is an absolute constant.
\end{lemma}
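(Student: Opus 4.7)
The plan is to apply Hoeffding's inequality entrywise to $\Vb_{r_k}^\top \Vb_{r_k} - \Ib$ and then take a union bound over all pairs of indices.

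First I would observe that by the definition $\vb_i = n^{-1/2}(\phi_i(\xb_1),\ldots,\phi_i(\xb_n))^\top$, the $(i,j)$-entry of $\Vb_{r_k}^\top \Vb_{r_k}$ equals
\[
(\Vb_{r_k}^\top \Vb_{r_k})_{ij} \;=\; \vb_i^\top \vb_j \;=\; \frac{1}{n}\sum_{l=1}^{n} \phi_i(\xb_l)\phi_j(\xb_l).
\]
Since $\xb_1,\ldots,\xb_n$ are i.i.d.\ samples from $\tau$ and $\{\phi_i\}$ are orthonormal in $L^2_\tau(\SSS^d)$, I have $\EE[\phi_i(\xb_l)\phi_j(\xb_l)] = \delta_{ij}$. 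Hence each entry of $\Vb_{r_k}^\top \Vb_{r_k} - \Ib$ is an empirical average of $n$ i.i.d.\ centered random variables, each of which lies in an interval of length at most $2M^2$ by the assumption $|\phi_i(\xb)|\leq M$.

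Next I would apply Hoeffding's inequality to each $(i,j)$ entry to obtain
\[
\Pr\!\left(\,\big|(\Vb_{r_k}^\top \Vb_{r_k} - \Ib)_{ij}\big| \geq t\,\right) \;\leq\; 2\exp\!\left(-\frac{n t^2}{2 M^4}\right),
\]
and then take a union bound over the at most $r_k^2$ entries. Choosing $t = C M^2 \sqrt{\log(r_k/\delta)/n}$ for a sufficiently large absolute constant $C$ makes the right-hand side at most $\delta$ after absorbing the factor $\log(r_k^2) = 2\log r_k$ into the constant. This yields the claimed max-norm bound.

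There is no real obstacle here: the main points are (i) using the orthonormality of $\{\phi_i\}$ in $L^2_\tau$ to identify the mean as $\delta_{ij}$, (ii) noting that the uniform bound $M$ on $|\phi_i|$ gives a uniform bound $M^2$ on each summand so that Hoeffding applies, and (iii) ensuring that the union bound over $r_k^2$ entries only costs an extra factor of $2$ inside the logarithm, which is harmless. The absolute constant $C$ in the statement absorbs the numerical factors from Hoeffding's inequality and from this union bound.
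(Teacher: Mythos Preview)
Your proposal is correct and matches the paper's proof essentially line for line: the paper also writes out the $(i,j)$-entry as $\frac{1}{n}\sum_s \phi_i(\xb_s)\phi_j(\xb_s)$, uses orthonormality in $L^2_\tau$ to get mean $\delta_{ij}$, applies Hoeffding's inequality with the bound $|\phi_i\phi_j|\le M^2$, and finishes with a union bound over the $r_k^2$ entries, absorbing $\log(r_k^2/\delta)$ into $C\log(r_k/\delta)$.
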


Denote $\yb = (y_1,\ldots,y_n)^\top$ and $\hat\yb^{(t)} = \theta\cdot (f_{\Wb^{(t)}}(\xb_1),\ldots,f_{\Wb^{(t)}}(\xb_n) )^\top$ for $t=0,\ldots, T$. Then
Lemma~\ref{lemma:projectionconcentration} shows that the convergence rate of $ \| \Vb_{r_k}^\top (\yb - \hat\yb^{(t)} ) \|_2 $ roughly represents the speed gradient descent learns the components of the target function corresponding to the first $r_k$ eigenvalues. The following theorem gives the convergence guarantee of $ \| \Vb_{r_k}^\top (\yb - \hat\yb^{(t)} ) \|_2$.

\begin{theorem}\label{thm:projectionconvergence}
Suppose $ |\phi_j(\xb) | \leq M$ for $j\in [r_k]$ and $\xb\in \SSS^{d}$. For any $\epsilon,\delta >0$ and integer $k$, if $n \geq \tilde\Omega( \epsilon^{-2}\cdot \max\{ ( \lambda_{r_k } - \lambda_{r_k + 1}  )^{-2} , M^4 r_k^2 \}   ) $, $m \geq \tilde\Omega( \poly(T, \lambda_{r_k}^{-1},\epsilon^{-1}) )$, then with probability at least $1 - \delta$, Algorithm~\ref{alg:GDrandominit} with $\eta = \tilde\cO ( m^{-1} \theta^{-2} )$, $\theta = \tilde \cO (\epsilon)$ satisfies 
\begin{align*}
    &n^{-1/2}\cdot \| \Vb_{r_k}^\top (\yb - \hat\yb^{(T)}) \|_2  \leq 2 ( 1  - \lambda_{r_k})^T \cdot n^{-1/2}\cdot \| \Vb_{r_k}^\top \yb \|_2  + \epsilon.
\end{align*}

\end{theorem}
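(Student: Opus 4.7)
The plan is to reduce the nonlinear training dynamics to a linear kernel-gradient recursion and then diagonalize that recursion in the basis $\Vb_{r_k}$. Three approximations drive the argument: (i) in the NTK regime the map $\Wb\mapsto\theta f_{\Wb}$ is well approximated on the training set by its first-order Taylor expansion at $\Wb^{(0)}$; (ii) the empirical tangent Gram matrix $\mathbf{G}:=m^{-1}\bigl(\la\nabla_{\Wb}f_{\Wb^{(0)}}(\xb_i),\nabla_{\Wb}f_{\Wb^{(0)}}(\xb_j)\ra\bigr)_{i,j}$ concentrates as $m$ grows to the Mercer matrix $\mathbf{H}:=(\kappa(\xb_i,\xb_j))_{i,j}$; and (iii) Lemma~\ref{lemma:projectionconcentration} together with a Hoeffding-type argument shows that the columns of $\Vb_{r_k}$ behave as approximate right-eigenvectors of $\mathbf{H}/n$ with ``eigenvalues'' $\mathbf{\Lambda}_{r_k}:=\mathrm{diag}(\lambda_1,\ldots,\lambda_{r_k})$. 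Once these three approximations are secured, projecting the residual onto $\Vb_{r_k}$ produces a coordinatewise contraction at rate $(\Ib-\mathbf{\Lambda}_{r_k})^T$, which is precisely the claim.

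First I would invoke the standard NTK perturbation estimates (in the spirit of \citet{cao2019generalizationsgd}) to show that, provided $m\ge\tilde\Omega(\poly(T,\lambda_{r_k}^{-1},\epsilon^{-1}))$, all iterates $\Wb^{(t)}$ stay in a small ball $\cB(\Wb^{(0)},\omega)$ with $\omega=\tilde{\cO}(\poly(T,\lambda_{r_k}^{-1})/\sqrt{m})$. A direct calculation with the GD update and the chosen $\eta=(2m\theta^2)^{-1}$ then gives, for the linearized network, the exact residual recursion
\begin{align*}
\mathbf{r}^{(t+1)}=(\Ib-n^{-1}\mathbf{G})\,\mathbf{r}^{(t)},
\end{align*}
where $\mathbf{r}^{(t)}:=\yb-\hat\yb^{(t)}$. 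Restoring the true nonlinear network adds a Taylor-remainder term that is polynomially small in $m^{-1}$, and standard concentration over the Gaussian initialization makes $\|\mathbf{G}-\mathbf{H}\|_{\mathrm{op}}$ small, so $\mathbf{G}/n$ may be replaced by $\mathbf{H}/n$ at the cost of another polynomially small perturbation per step.

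Next, since $L_\kappa\phi_\ell=\lambda_\ell\phi_\ell$, for each $\ell\in[r_k]$ and $i\in[n]$ the entry $((\mathbf{H}/n)\vb_\ell)_i=n^{-3/2}\sum_j\kappa(\xb_i,\xb_j)\phi_\ell(\xb_j)$ has conditional mean $\lambda_\ell(\vb_\ell)_i$ given $\xb_i$, so Hoeffding with a union bound over $i\in[n]$, $\ell\in[r_k]$ yields $\|(\mathbf{H}/n)\Vb_{r_k}-\Vb_{r_k}\mathbf{\Lambda}_{r_k}\|_F=\tilde{\cO}(M\sqrt{r_k/n})$. Setting $\mathbf{E}:=(\mathbf{H}/n)\Vb_{r_k}-\Vb_{r_k}\mathbf{\Lambda}_{r_k}$ and using symmetry of $\mathbf{H}$, the projected recursion becomes
\begin{align*}
\Vb_{r_k}^\top\mathbf{r}^{(t+1)}=(\Ib-\mathbf{\Lambda}_{r_k})\Vb_{r_k}^\top\mathbf{r}^{(t)}-\mathbf{E}^\top\mathbf{r}^{(t)}+\bm{\xi}^{(t)},
\end{align*}
where $\bm{\xi}^{(t)}$ gathers the linearization and kernel-concentration errors. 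Unrolling over $T$ steps and using $\|\mathbf{r}^{(t)}\|_2=\cO(\sqrt{n})$ and $\|(\Ib-\mathbf{\Lambda}_{r_k})^{T-1-t}\|_2\le 1$, the accumulated perturbation has $\ell_2$-norm $\tilde{\cO}(TM\sqrt{r_k})$, and after dividing by $\sqrt{n}$ it is $\le\epsilon$ once $n\ge\tilde\Omega(T^2M^4r_k^2/\epsilon^2)$. The leading term $(\Ib-\mathbf{\Lambda}_{r_k})^T\Vb_{r_k}^\top\mathbf{r}^{(0)}$, combined with $\|\hat\yb^{(0)}\|_2=\cO(\theta\sqrt{n})$, produces the $2(1-\lambda_{r_k})^T n^{-1/2}\|\Vb_{r_k}^\top\yb\|_2$ term (the factor of $2$ absorbing the initialization offset).

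The hardest part is to keep these three sources of error---the Taylor remainder, $\|\mathbf{G}-\mathbf{H}\|_{\mathrm{op}}$, and the off-diagonal perturbation $\mathbf{E}^\top\mathbf{r}^{(t)}$---simultaneously small over the full horizon $T$ without destroying the crisp geometric decay $(1-\lambda_{r_k})^T$. The $(\lambda_{r_k}-\lambda_{r_k+1})^{-2}$ factor in the sample complexity enters at this step: a Davis--Kahan-type argument is needed to control the angle between $\Vb_{r_k}$ and the true top-$r_k$ eigenspace of $\mathbf{H}/n$, so that mass in the complementary eigenspace does not leak back into $\Vb_{r_k}^\top\mathbf{r}^{(t)}$ through the non-diagonal part of the perturbed kernel action; that control is tight only when there is a genuine spectral gap at $\lambda_{r_k}$.
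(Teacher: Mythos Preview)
Your outline captures the three ingredients correctly---NTK linearization, kernel concentration, and an approximate-eigenvector identity for $\Vb_{r_k}$---but the way you assemble them differs from the paper and leaves a real gap.

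\medskip
\noindent\textbf{What the paper does differently.} The paper does \emph{not} project the recursion onto $\Vb_{r_k}$. Instead it first introduces the \emph{exact} empirical eigensystem of $n^{-1}\Kb^{\infty}=n^{-1}\mathbf{H}$: eigenvalues $\hat\lambda_1\ge\cdots\ge\hat\lambda_n$ with orthonormal eigenvectors $\hat\vb_1,\ldots,\hat\vb_n$, and sets $\hat\Vb_{r_k}=(\hat\vb_1,\ldots,\hat\vb_{r_k})$, $\hat\Vb_{r_k}^\bot=(\hat\vb_{r_k+1},\ldots,\hat\vb_n)$. The recursion is then analyzed in this basis, where $n^{-1}\mathbf{H}$ acts \emph{exactly} diagonally: the $\hat\Vb_{r_k}$ component contracts with factor $(1-\eta m\theta^2\hat\lambda_{r_k})$, and the $\hat\Vb_{r_k}^\bot$ component is non-expansive. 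All per-step perturbations (Taylor remainder, $\|\mathbf G-\mathbf H\|$) are $\omega^{1/3}$-sized with $\omega=\tilde\cO(T/(\theta\lambda_{r_k}\sqrt{m}))$, hence are $m$-small, not $n$-small. Only at the very end does the paper translate $\hat\Vb_{r_k}\mapsto\Vb_{r_k}$ via a Davis--Kahan-type bound (Lemma~\ref{lemma:projectiondifference}), and that translation is applied \emph{once}, so it costs one factor of $(\lambda_{r_k}-\lambda_{r_k+1})^{-1}\sqrt{\log(1/\delta)/n}$, not $T$ of them.

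\medskip
\noindent\textbf{Why your direct approach falls short of the stated theorem.} Your per-step error $\mathbf E^\top\mathbf r^{(t)}$ has $\|\mathbf E\|_F=\tilde\cO(M\sqrt{r_k/n})$, which is an $n^{-1/2}$-sized quantity. Unrolling over $T$ steps and using only $\|(\Ib-\mathbf\Lambda_{r_k})^{s}\|_2\le 1$, as you do, yields a sample-size requirement $n\ge\tilde\Omega(T^2M^2r_k/\epsilon^2)$ (you even write the $T^2$ explicitly). But the theorem's condition on $n$ is \emph{independent of $T$}; this is not a cosmetic issue, since one typically wants $T\sim\lambda_{r_k}^{-1}\log(1/\epsilon)$. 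Even if you sharpen the sum via $\sum_s(1-\lambda_{r_k})^s\le\lambda_{r_k}^{-1}$, the resulting requirement $n\ge\tilde\Omega(\lambda_{r_k}^{-2}M^2r_k/\epsilon^2)$ still does not reproduce the stated $\max\{(\lambda_{r_k}-\lambda_{r_k+1})^{-2},M^4r_k^2\}$ form. The root cause is structural: by staying in the $\Vb_{r_k}$ basis you let an $n$-small quantity leak in at every iteration, whereas in the $\hat\Vb_{r_k}$ basis the kernel action produces no leakage at all and every per-step perturbation can be pushed into $m$.

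\medskip
\noindent\textbf{How to repair it.} Your final paragraph already identifies the right tool (Davis--Kahan and the spectral gap), but you are using it in the wrong place. Do not try to bound per-step leakage with it. Instead: (i) run the entire contraction argument for $\|\hat\Vb_{r_k}^\top(\yb-\hat\yb^{(t)})\|_2$ and $\|(\hat\Vb_{r_k}^\bot)^\top(\yb-\hat\yb^{(t)})\|_2$ separately, with all error terms controlled by $\omega$ and hence by $m$; (ii) at the end, use Lemma~\ref{lemma:projectiondifference}, namely $\|\Vb_{r_k}\Vb_{r_k}^\top-\hat\Vb_{r_k}\hat\Vb_{r_k}^\top\|_2\le\tilde\cO\bigl((\lambda_{r_k}-\lambda_{r_k+1})^{-1}n^{-1/2}+M^2r_k n^{-1/2}\bigr)$, together with Lemma~\ref{lemma:projectionconcentration}, to convert the bound from $\hat\Vb_{r_k}$ to $\Vb_{r_k}$. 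This single translation is where both the $(\lambda_{r_k}-\lambda_{r_k+1})^{-2}$ and the $M^4r_k^2$ terms in the sample complexity arise, and it removes any $T$-dependence from $n$.
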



Theorem~\ref{thm:projectionconvergence} shows that the convergence rate of $ \| \Vb_{r_k}^\top (\yb - \hat\yb^{(t)} ) \|_2 $ is determined by the $r_k$-th eigenvalue $\lambda_{r_k}$. This reveals the spectral bias of neural network training under the NTK regime. Specifically, as long as the network is wide enough and the sample size is large enough, gradient descent first learns the target function along the eigendirections of neural tangent kernel with larger eigenvalues, and then learns the rest components corresponding to smaller eigenvalues. Moreover, by showing that learning the components corresponding to larger eigenvalues can be done with smaller sample size and narrower networks, our theory pushes the study of neural networks in the NTK regime towards a more practical setting. 
For these reasons, we believe that Theorem~\ref{thm:projectionconvergence} to certain extent provides an explanation of the empirical observations given in \citet{rahaman2018spectral}, and demonstrates that the difficulty of a function to be learned by neural network can be characterized in the eigenspace of neural tangent kernel: if the target function has a component corresponding to a small eigenvalue of neural tangent kernel, then learning this function takes longer time, and requires more examples and wider networks. 

Note that the results in this paper are all in the ``neural tangent kernel regime'' \citep{jacot2018neural} or the ``lazy training regime''\citep{chizat2018note}. Therefore, our results share certain common limitations of NTK-type results discussed in \citep{chizat2018note,allen2019can}. However, we believe the study of spectral bias in the NTK regime is still an important research direction. It is worth noting that Theorem~\ref{thm:projectionconvergence} is \textit{not} based on the common NTK-type assumption that the target function belongs to the NTK-induced RKHS \citep{arora2019fine,cao2019generalizationsgd,ji2019polylogarithmic}. Instead, Theorem~\ref{thm:projectionconvergence} works for arbitrary labeling of the data, and is therefore rather general. 
We believe the characterization of spectral bias under general settings beyond NTK regime is an important future work direction.

Our work follows the same intuition as recent results studying the residual dynamics of over-parameterized two-layer neural networks \citep{arora2019fine,su2019learning}. 
Compared with \citet{su2019learning}, the major difference is that while \citet{su2019learning} studied the full residual $\| \yb - \hat\yb^{(T)} \|_2$ and required that the target function lies approximately in the eigenspace of large eigenvalues of the neural tangent kernel, our result in Theorem~\ref{thm:projectionconvergence} works for arbitrary target function, and shows that even if the target function has very high frequency components, its components in the eigenspace of large eigenvalues can still be learned very efficiently by neural networks. We note that although the major results in \citet{arora2019fine} are presented in terms of the full residual, certain part of their proof in \citet{arora2019fine} can indicate the convergence of projected residual. However, \citet{arora2019fine} do not build any quantitative connection between the Gram matrix and kernel function. Since the eigenvalues and eigenfunctions of NTK Gram matrix depend on the exact realizations of the $n$ training samples, they are not directly tractable for the study of spectral bias. Moreover, \citet{arora2019fine} focus on the setting where the network is wide enough to guarantee global convergence, while our result works for narrower networks for which global convergence may not even be possible. More recently, \citet{bordelon2020spectrum} studied the solution of NTK-based kernel ridge regression. Compared with thier result, our analysis is directly on the practical neural network training procedure, and specifies the width requirement for a network to successfully learn a certain component of the target function. Another recent work by \citet{basri2020frequency} provided theoretical analysis for one-dimensional non-uniformly distributed data, and only studied the convergence of the full residual vector. In comparison, our results cover high-dimensional data, and provide a more detailed analysis on the convergence of different projections of the residual.

    


\subsection{Spectral Analysis of NTK for Uniform Distribution}\label{sec:spectral}
We now study the case when the data inputs are uniformly distributed over the unit sphere as an example where the eigendecompositon of NTK can be calculated. We present our results (an extension of Proposition~5 in \citet{bietti2019inductive}) of spectral analysis of neural tangent kernel in the form of a Mercer decomposition, which explicitly gives the eigenvalues and eigenfunctions of NTK.
\begin{theorem}\label{theorem:spectralanalysis}
For any $\xb,\xb' \in \SSS^d \subset \RR^{d+1}$, we have the Mercer decomposition of the neural tangent kernel $\kappa : \SSS^d \times \SSS^d \rightarrow \RR$,
\begin{align}\label{ntkmercerdecomposition}
    \kappa\left(\xb,\xb'\right) = \sum_{k=0}^\infty \mu_k \sum_{j=1}^{N(d,k)} Y_{k,j}\left(\xb\right) Y_{k,j}\left(\xb'\right),
\end{align}
where $Y_{k,j}$ for $j=1,\cdots,N(d,k)$ are linearly independent spherical harmonics of degree $k$ in $d+1$ variables with $N(d,k) = \frac{2k+d-1}{k} \tbinom{k+d-2}{d-1} $ and orders of $\mu_k$ are  given by
\begin{align*}
    &\mu_0 = \mu_{1} = \Omega(1), ~ \mu_{k} = 0,k=2j+1,\\
    &\mu_{k} = \Omega (\max\{ d^{d+1} k^{k-1} (k+d)^{-k-d}, d^{d+1} k^k (k+d)^{-k-d-1}, d^{d+2} k^{k-2} (k+d)^{-k-d-1} \}), k = 2j,
\end{align*}
where $j\in \NN^+$.
Specifically, we have $\mu_k = \Omega\left(k^{-d-1}\right)$ when $k \gg d$ and $\mu_k = \Omega\left(d^{-k+1}\right)$ when $d \gg k$, $k = 2,4,6,\ldots$.

\end{theorem}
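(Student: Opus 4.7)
The plan is to exploit the fact that the NTK is a zonal (dot-product) kernel on the sphere, so $\kappa(\mathbf{x},\mathbf{x}') = \hat\kappa(\langle \mathbf{x},\mathbf{x}'\rangle)$ for some $\hat\kappa : [-1,1] \to \mathbb{R}$, and apply the Funk--Hecke formula, which diagonalizes any such kernel on $\mathbb{S}^d$ in the basis of spherical harmonics. This immediately yields the Mercer decomposition (\ref{ntkmercerdecomposition}) with
\begin{equation*}
\mu_k \;=\; c_{d,k}\int_{-1}^{1} \hat\kappa(t)\, P_k^{(d)}(t)\,(1-t^2)^{(d-2)/2}\,dt,
\end{equation*}
where $P_k^{(d)}$ is the degree-$k$ Gegenbauer polynomial associated with $\mathbb{S}^d$ and $c_{d,k}$ is an explicit Beta-type normalization. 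Plugging in the expressions for $\hat\kappa_1$ and $\hat\kappa_2$ from (\ref{definition:kernel}), I first simplify
\begin{equation*}
\hat\kappa(t) \;=\; \frac{3t(\pi-\arccos t)}{2\pi} + \frac{\sqrt{1-t^2}}{\pi}.
\end{equation*}
The fact that $\mu_k=0$ for odd $k\geq 3$ then follows from a parity argument: the polynomial part of $\hat\kappa$ has only degree-$1$ odd content, while the non-analytic pieces (from $\arccos$ and $\sqrt{1-t^2}$) contribute only to even-degree Gegenbauer coefficients once combined with the weight $(1-t^2)^{(d-2)/2}$.

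Next, I would rewrite $\mu_k$ in a form amenable to asymptotic analysis by Taylor-expanding $\hat\kappa$ about $0$: writing $\hat\kappa(t) = \sum_{j\geq 0} a_j t^j$ with explicit $a_j$ read off from the series of $\arccos$ and $\sqrt{1-t^2}$, the Funk--Hecke integral collapses to
\begin{equation*}
\mu_k \;=\; \sum_{\substack{j \geq k \\ j\equiv k\, (\mathrm{mod}\, 2)}} a_j\, B_{j,k,d},
\end{equation*}
where $B_{j,k,d}$ is the standard integral $\int_{-1}^{1} t^j P_k^{(d)}(t)(1-t^2)^{(d-2)/2}\,dt$ normalized by $c_{d,k}$, and has a clean closed form in terms of $\Gamma$-functions (essentially $\Gamma(d/2)\Gamma((j-k+1)/2)/\Gamma((j+k+d)/2)$ up to combinatorial factors). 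From here the analysis splits into two regimes. For $k \gg d$, the dominant contribution comes from the non-analytic singularities of $\hat\kappa$ at $t=\pm 1$; a standard Tauberian-style estimate of the Funk--Hecke integral near the endpoints reproduces the rate $\mu_k = \Omega(k^{-d-1})$ of Bietti--Mairal, which I would import essentially verbatim.

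The new ingredient is the regime $d \gg k$, where the dominant term in the sum is $j=k$. Plugging in the explicit value of $a_k$ from the Taylor series and applying Stirling to $B_{k,k,d}$ gives $a_k\,B_{k,k,d} = \Theta(d^{-k+1})$ after accounting for the factor $N(d,k)=\Theta(d^k/k!)$ that appears when matching the Funk--Hecke constant against the Mercer normalization. To justify that the $j=k$ term indeed dominates, I would bound consecutive ratios $|a_{j+2}B_{j+2,k,d}|/|a_j B_{j,k,d}|$ by a geometric factor of order $O(1/d)$, which is straightforward from the $\Gamma$-function recursion together with the known decay of the Taylor coefficients of $\arccos$ and $\sqrt{1-t^2}$. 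Combining both regimes via a single $\max$ then yields the stated three-term lower bound, whose asymptotic specializations are $\Omega(k^{-d-1})$ when $k\gg d$ and $\Omega(d^{-k+1})$ when $d\gg k$.

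The main obstacle will be the bookkeeping in the $d\gg k$ regime: the Funk--Hecke constants $c_{d,k}$, the Beta factors $B_{j,k,d}$, and the normalization $N(d,k)$ each carry explicit $\Gamma$-ratios in $d$ and $k$, and the sharp exponent $-k+1$ (rather than $-k$ or $-k-1$) will only emerge after all of these pieces are tracked to leading order in $d$. The parity-based vanishing of odd-$k$ terms also needs to be handled delicately, since the polynomial and non-polynomial parts of $\hat\kappa$ produce cancellations across the expansion that must be verified rather than assumed.
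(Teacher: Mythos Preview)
Your overall strategy is correct and will go through, but it differs from the paper's in a key structural way. Both arguments start from Funk--Hecke to obtain the Mercer form, but the paper does \emph{not} Taylor-expand $\hat\kappa$. Instead it expands the activations themselves in spherical harmonics: writing $\sigma'(\langle\mathbf w,\mathbf x\rangle)=\sum_k \beta_{1,k} N(d,k) P_k(\langle\mathbf w,\mathbf x\rangle)$ and similarly for $\sigma$ with coefficients $\beta_{2,k}$, the integral representations $\kappa_1(\mathbf x,\mathbf x')=\int_{\mathbb S^d}\sigma'(\langle\mathbf v,\mathbf x\rangle)\sigma'(\langle\mathbf v,\mathbf x'\rangle)\,d\tau_d(\mathbf v)$ (and its analogue for $\kappa_2$) immediately give $\mu_{1,k}=\beta_{1,k}^2$ and $\mu_{2,k}=(d{+}1)\beta_{2,k}^2$. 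The extra factor $t=\langle\mathbf x,\mathbf x'\rangle$ in the first-layer piece is then handled by the Legendre three-term recurrence, yielding $\mu_k=\tfrac{k}{2k+d-1}\mu_{1,k-1}+\tfrac{k+d-1}{2k+d-1}\mu_{1,k+1}+2\mu_{2,k}$. The three expressions inside the theorem's $\max$ are precisely these three summands, each lower-bounded via Bach's closed-form formula for $\beta_{\alpha,k}$ and a single application of Stirling.

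The main advantage of the paper's route is that every contribution is a \emph{square}, so positivity is automatic and the $\Omega$ bound reduces to Stirling on one Gamma ratio per term; the vanishing $\mu_k=0$ for odd $k\ge3$ also drops out for free since $\beta_{1,k}$ and $\beta_{2,k}$ vanish on complementary parities. Your Taylor-series route is more self-contained in that it touches only $\hat\kappa$, but the sign-cancellation concern you flag at the end is genuine work you must carry out (it does succeed: one can check that all even Taylor coefficients $a_{2m}$ of $\hat\kappa$ are strictly positive, and the monomial Gegenbauer integrals $B_{j,k,d}$ are positive for $j\ge k$ of the same parity, so the $j=k$ term is a valid lower bound). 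Your method will recover the same asymptotic orders, though the particular three-term $\max$ in the statement will look less natural from your side, since it arises organically from the paper's three nonnegative summands rather than from a single term in your series.
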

\begin{remark}
The $\mu_k$'s in Theorem~\ref{theorem:spectralanalysis} are the distinct eigenvalues of the integral operator $L_\kappa$ on ${L}^2_{\tau_d}(\SSS^d)$ defined by 
\begin{equation*}
\begin{aligned}
L_\kappa(f)(\yb)= \int_{\SSS^d} \kappa(\xb,\yb) f(\xb) d\tau_d(\xb),  ~~~ f \in {L}^2_{\tau_d}(\SSS^d),
\end{aligned}
\end{equation*}
where $\tau_d$ is the uniform probability measure on unit sphere $\SSS^d$. Therefore the eigenvalue $\lambda_{r_k}$ in Theorem~\ref{thm:projectionconvergence} is just $\mu_{k-1}$ given in Theorem~\ref{theorem:spectralanalysis} when $\tau_d$ is uniform distribution. 
\end{remark}
\begin{remark}
\citet{vempala2018gradient} studied two-layer neural networks with sigmoid activation function, and proved that in order to achieve $\epsilon_0 + \epsilon$ error, it requires $T=(d+1)^{\cO(k)\log{(\|f^*\|_2 / \epsilon)}}$ iterations and
$m=(d+1)^{\cO(k)\poly{(\|f^*\|_2/\epsilon)}}$ wide neural networks, where $f^*$ is the target function, and $\epsilon_0$ is certain function approximation error.
Another highly related work is \cite{bietti2019inductive}, which gives $\mu_k = \Omega(k^{-d-1})$. The order of eigenvalues we present appears as $\mu_k = \Omega( \max(k^{-d-1}, d^{-k+1}))$. This is better when $d \gg k$, which is closer to the practical setting.
\end{remark}

\subsection{Convergence for Uniformly Distributed Data}\label{sec:convergence_uniformdistribution}
In this subsection, we combine our results in the previous two subsections and give explicit convergence rate for uniformly distributed data on the unit sphere. 
\begin{corollary}\label{col:largek}
Suppose that $k \gg d$, and the sample $\left\{\xb_i\right\}_{i=1}^n $ follows the uniform distribution $\tau_d$ on the unit sphere $\SSS^d $. For any $\epsilon,\delta >0$ and integer $k$, if $n \geq \tilde\Omega( \epsilon^{-2}\cdot \max\{  k^{2d+2}, k^{2d-2} r_k^2 \}   )$, $m \geq \tilde\Omega( \poly(T, k^{d+1},\epsilon^{-1}) )$, then with probability at least $1 - \delta$,  Algorithm~\ref{alg:GDrandominit} with $\eta = \tilde\cO ( (m \theta^2 )^{-1} )$, $\theta = \tilde\cO( \epsilon)$ satisfies 
\begin{align*}
    &n^{-1/2}\cdot \| \Vb_{r_k}^\top (\yb - \hat\yb^{(T)}) \|_2 \leq 2 \left( 1  - \Omega\left(k^{-d-1}\right) \right)^T \cdot n^{-1/2}\cdot \| \Vb_{r_k}^\top \yb \|_2  + \epsilon,
\end{align*}
where $r_k = \sum_{k'=0}^{k-1} N(d,k')$ and $\Vb_{{r_k}} = ( n^{-1/2}\phi_j(\xb_i) )_{n\times r_k}$ with $\phi_1,\ldots,\phi_{r_k}$ being a set of orthonomal spherical harmonics of degrees up to $k-1$.
\end{corollary}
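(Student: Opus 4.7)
My plan is to derive Corollary~\ref{col:largek} as a direct specialization of Theorem~\ref{thm:projectionconvergence} using the explicit spectral data from Theorem~\ref{theorem:spectralanalysis}. Three ingredients will have to be supplied to Theorem~\ref{thm:projectionconvergence}: the uniform sup-norm bound $M$ on the first $r_k$ eigenfunctions, the eigenvalue $\lambda_{r_k}$ (which enters both the width condition and the exponential rate), and the spectral gap $\lambda_{r_k}-\lambda_{r_k+1}$ (which enters the sample complexity).

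First I will invoke Theorem~\ref{theorem:spectralanalysis} to identify the orthonormal eigenfunctions of $L_\kappa$ on $L^2_{\tau_d}(\SSS^d)$ as the spherical harmonics $Y_{k',j}$, so that enumerating them by degree up to $k-1$ yields the announced index $r_k=\sum_{k'=0}^{k-1}N(d,k')$. The standard addition formula for orthonormal spherical harmonics, $\sum_{j=1}^{N(d,k')} Y_{k',j}(\xb)^2 = N(d,k')$, then gives the pointwise bound $|\phi_j(\xb)|\leq \sqrt{N(d,k-1)}$ for every $j\leq r_k$. In the regime $k\gg d$ one has $N(d,k-1)=\Theta(k^{d-1})$, hence $M^4=O(k^{2d-2})$, which produces the $k^{2d-2}r_k^2$ term of the Corollary's sample complexity when plugged into Theorem~\ref{thm:projectionconvergence}.

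Second I will apply the eigenvalue estimate of Theorem~\ref{theorem:spectralanalysis}, which in the regime $k\gg d$ gives $\lambda_{r_k}=\mu_{k-1}=\Omega(k^{-d-1})$ and hence $\lambda_{r_k}^{-1}=O(k^{d+1})$. Substituting into the width requirement of Theorem~\ref{thm:projectionconvergence} yields $m\geq\tilde\Omega(\poly(T,k^{d+1},\epsilon^{-1}))$, and inserting the same lower bound into the factor $(1-\lambda_{r_k})^T$ reproduces the stated exponential decay. The choices of step size $\eta=\tilde\cO((m\theta^2)^{-1})$ and rescaling $\theta=\tilde\cO(\epsilon)$ are then inherited verbatim from Theorem~\ref{thm:projectionconvergence}. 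For the remaining gap factor $(\lambda_{r_k}-\lambda_{r_k+1})^{-2}$, I will argue $\lambda_{r_k}-\lambda_{r_k+1}=\Omega(k^{-d-1})$, giving the $k^{2d+2}$ term and closing the sample-complexity match.

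The main subtlety will be the bookkeeping for the zero eigenvalues and the spectral gap. Theorem~\ref{theorem:spectralanalysis} asserts $\mu_k=0$ for odd $k\geq 3$, so some of the $\phi_j$'s appearing in the Corollary's statement correspond to zero-eigenvalue eigenfunctions that Theorem~\ref{thm:projectionconvergence} formally excludes from its ordering of strictly positive eigenvalues. I plan to handle this by keeping those zero-eigenvalue spherical harmonics in the almost-orthonormal basis — they still satisfy Lemma~\ref{lemma:projectionconcentration} — while reading $\lambda_{r_k}$ as the smallest positive eigenvalue $\mu_{k-1}$ occurring within the first $k$ blocks. Under this convention, when $k$ is odd the subsequent eigenvalue vanishes and the gap is automatically $\lambda_{r_k}=\Omega(k^{-d-1})$; for even $k$ the gap follows from the asymptotic separation of $\mu_{k-1}$ and $\mu_{k+1}$ guaranteed by Theorem~\ref{theorem:spectralanalysis} in the $k\gg d$ regime, which is again $\Omega(k^{-d-1})$. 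With these substitutions the Corollary becomes an immediate instance of Theorem~\ref{thm:projectionconvergence}.
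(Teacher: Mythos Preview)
Your proposal is correct and follows essentially the same approach as the paper: bound the sup-norm $M$ of the spherical harmonics via the addition formula (the paper cites Proposition~4.16 of Frye--Efthimiou for the equivalent bound $|\phi_j(\xb)|\leq\sqrt{N(d,k)}=\cO(k^{(d-1)/2})$), invoke Theorem~\ref{theorem:spectralanalysis} for $\lambda_{r_k}=\Omega(k^{-d-1})$, and substitute into Theorem~\ref{thm:projectionconvergence}. You are in fact more careful than the paper's own proof, which does not explicitly discuss the spectral-gap term or the bookkeeping for the zero eigenvalues at odd degrees.
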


\begin{corollary}\label{col:larged}
Suppose that $d \gg k$, and the sample $\left\{\xb_i\right\}_{i=1}^n $ follows the uniform distribution $\tau_d$ on the unit sphere $\SSS^d $. For any $\epsilon,\delta >0$ and integer $k$, if $n \geq \tilde\Omega( \epsilon^{-2} d^{2k-2} r_k^2    )$, $m \geq \tilde\Omega( \poly(T, d^{k-2},\epsilon^{-1}) )$, then with probability at least $1 - \delta$,  Algorithm~\ref{alg:GDrandominit} with $\eta = \tilde\cO ( (m \theta^2 )^{-1} )$, $\theta = \tilde\cO( \epsilon)$ satisfies 
\begin{align*}
    &n^{-1/2}\cdot \| \Vb_{r_k}^\top (\yb - \hat\yb^{(T)}) \|_2  \leq 2 \left( 1  - \Omega\left(d^{-k+2} \right)\right)^T \cdot n^{-1/2}\cdot \| \Vb_{r_k}^\top \yb \|_2  + \epsilon,
\end{align*}
where $r_k = \sum_{k'=0}^{k-1} N(d,k')$ and $\Vb_{{r_k}} = ( n^{-1/2}\phi_j(\xb_i) )_{n\times r_k}$ with $\phi_1,\ldots,\phi_{r_k}$ being a set of orthonomal spherical harmonics of degrees up to $k-1$.
\end{corollary}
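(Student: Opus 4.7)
The plan is to invoke Theorem~\ref{thm:projectionconvergence} and substitute the eigenvalue estimates from Theorem~\ref{theorem:spectralanalysis}, together with a uniform bound on the spherical-harmonic eigenbasis. First, I would identify the relevant eigenpair: by the remark following Theorem~\ref{theorem:spectralanalysis}, when the input distribution is $\tau_d$ the $r_k$-th positive eigenvalue of $L_\kappa$ coincides with $\mu_{k-1}$, and a corresponding orthonormal system of eigenfunctions $\phi_1,\ldots,\phi_{r_k}$ may be taken as any $L^2_{\tau_d}$-orthonormal basis of the spherical harmonics of degrees $0,1,\ldots,k-1$. In the regime $d \gg k$, Theorem~\ref{theorem:spectralanalysis} yields $\lambda_{r_k} = \mu_{k-1} = \Omega(d^{-k+2})$, and since the next distinct eigenvalue is of strictly smaller order, the spectral gap $\lambda_{r_k} - \lambda_{r_k+1}$ is also $\Omega(d^{-k+2})$.

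The second step would be to control the uniform norm constant $M$ appearing in Theorem~\ref{thm:projectionconvergence}. For any $L^2_{\tau_d}$-orthonormal basis $\{Y_{k',j}\}_{j=1}^{N(d,k')}$ of spherical harmonics of degree $k'$, the addition formula (normalized to the probability measure $\tau_d$) gives $\sum_{j=1}^{N(d,k')} Y_{k',j}(\xb)^2 = N(d,k')$ for every $\xb \in \SSS^d$, hence $|Y_{k',j}(\xb)| \leq \sqrt{N(d,k')}$. Since $N(d,k') = O(d^{k'})$ when $d \gg k'$, one obtains $M = \max_{\xb,\, j\in [r_k]} |\phi_j(\xb)| \leq \sqrt{N(d,k-1)} = O(d^{(k-1)/2})$, and therefore $M^4 r_k^2 = O(d^{2k-2} r_k^2)$.

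With these two ingredients, the corollary is a direct consequence of Theorem~\ref{thm:projectionconvergence}. The sample-size condition becomes
\begin{align*}
    n \;\geq\; \tilde\Omega\!\left(\epsilon^{-2}\max\{(\lambda_{r_k}-\lambda_{r_k+1})^{-2},\, M^4 r_k^2\}\right) \;=\; \tilde\Omega\!\left(\epsilon^{-2}\max\{d^{2k-4},\, d^{2k-2} r_k^2\}\right) \;=\; \tilde\Omega(\epsilon^{-2} d^{2k-2} r_k^2),
\end{align*}
since $r_k \geq 1$ forces the second term to dominate; the width requirement becomes $m \geq \tilde\Omega(\poly(T, \mu_{k-1}^{-1}, \epsilon^{-1})) = \tilde\Omega(\poly(T, d^{k-2}, \epsilon^{-1}))$; and the contraction factor $(1 - \lambda_{r_k})^T$ in the conclusion of Theorem~\ref{thm:projectionconvergence} becomes $(1 - \Omega(d^{-k+2}))^T$, yielding exactly the stated bound. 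The main delicate point will be the uniform control of $M$: a term-by-term pointwise estimate of individual $\phi_j$'s would cost an extra $\sqrt{r_k}$ factor, so exploiting the rotation-invariant identity implicit in the addition formula is essential for obtaining the sample complexity claimed.
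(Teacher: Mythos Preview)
Your proposal is correct and follows essentially the same route as the paper: invoke Theorem~\ref{thm:projectionconvergence}, identify $\lambda_{r_k}=\mu_{k-1}=\Omega(d^{-k+2})$ via Theorem~\ref{theorem:spectralanalysis} and its remark, and bound $M$ by $\sqrt{N(d,k-1)}=O(d^{(k-1)/2})$ using the addition formula (the paper cites this as Proposition~4.16 in \citet{frye2012spherical}). Your write-up is in fact slightly more careful than the paper's own terse proof, which states the bound as $\sqrt{N(d,k)}$ rather than $\sqrt{N(d,k-1)}$ and does not spell out the spectral-gap term; your observation that $(\lambda_{r_k}-\lambda_{r_k+1})^{-2}=O(d^{2k-4})$ is dominated by $M^4 r_k^2$ is exactly what makes the single condition $n\geq\tilde\Omega(\epsilon^{-2}d^{2k-2}r_k^2)$ suffice.
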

Corollaries~\ref{col:largek} and \ref{col:larged} further illustrate the spectral bias of neural networks by providing exact calculations of $\lambda_{r_k}$, $\Vb_{r_k}$ and $M$ in Theorem~\ref{thm:projectionconvergence}. They show that if the input distribution is uniform over unit sphere, then spherical harmonics with lower degrees are learned first by wide neural networks.

\begin{remark}
    In Corollaries~\ref{col:largek} and \ref{col:larged}, the conditions on $n$ and $m$ depend exponentially on either $k$ or $d$. We would like to emphasize that such exponential dependency is reasonable and unavoidable. 
    Take the $d \gg k$ setting as an example. The exponential dependency in $k$ is a natural consequence of the fact that in high dimensional space, there are a large number of linearly independent low-degree polynomials. When only $n$ data points are used for training, it is only reasonable to expect to learn less than $n$ independent components of the true function. Therefore it is unavoidable to assume 
    \begin{align*}
        n &\geq r_k = \sum_{k'=0}^{k-1} N(d,k') = \sum_{k'=0}^{k-1}\frac{2k'+d-1}{k'} \tbinom{k'+d-2}{d-1} = \sum_{k'=0}^{k-1}\frac{2k'+d-1}{k'} \tbinom{k'+d-2}{k'-1} =\Omega(d^{k-1}).
    \end{align*}
    Similar arguments can apply to the $k \gg d$ setting.
\end{remark}


\section{Experiments}\label{experiments}

In this section we present experimentral results to verify our theory. Across all tasks, we train a two-layer neural networks with 4096 hidden neurons and initialize it exactly as defined in the problem setup. The optimization method is vanilla gradient descent, and the training sample size for all results is $1000$. 

\subsection{Learning Combinations of Spherical Harmonics}
\label{subsec4.1}
We first experimentally verify our theoretical results by learning linear combinations of spherical harmonics with data inputs uniformly distributed over unit sphere. Define
\begin{align*}
    f^*(\xb) = a_1 P_1(\dotp{\bm{\zeta}_1}{\xb}) + a_2  P_2(\dotp{\bm{\zeta}_2}{\xb}) + a_4  P_4(\dotp{\bm{\zeta}_4}{\xb}),
\end{align*}
where the $P_k(t)$ is the Gegenbauer polynomial, and $\bm{\zeta}_k$, $k=1,2,4$ are fixed vectors that are independently generated from uniform distribution on unit sphere in $\RR^{10}$.
Note that according to the addition formula 
$\sum_{j=1}^{N(d,k)} Y_{k,j}(\xb)Y_{k,j}(\yb) = N(d,k) P_k(\dotp{\xb}{\yb})$,
every normalized Gegenbauer polynomial is a spherical harmonic, so $f^*(\xb)$ is a linear combination of spherical harmonics of order 1,2 and 4. The higher odd-order Gegenbauer polynomials are omitted because the spectral analysis showed that $\mu_k = 0$ for $ k = 3, 5, 7 \dots$.

Following our theoretical analysis, 
we denote $\vb_k = n^{-1/2}(P_k(\xb_1),\ldots, P_k(\xb_n))$. By Lemma \ref{thm:projectionconvergence} $\vb_k$'s are almost orthonormal. So we define the (approximate) projection length of residual $\rb^{(t)}$ onto $\vb_k$ at step $t$ as $ \hat{a}_k = |\vb_k^{\top} \rb^{(t)}| $,
where $\rb^{(t)} = (f^*(\xb_1) - \theta f_{\Wb^{(t)}}(\xb_1),\dots,f^*(\xb_n) - \theta f_{\Wb^{(t)}}(\xb_n))$
and $f_{\Wb^{(t)}}(\xb)$ is the neural network function.


\begin{figure}[H]
     \centering
     \subfigure[same scale]{\includegraphics[width=0.47\textwidth]{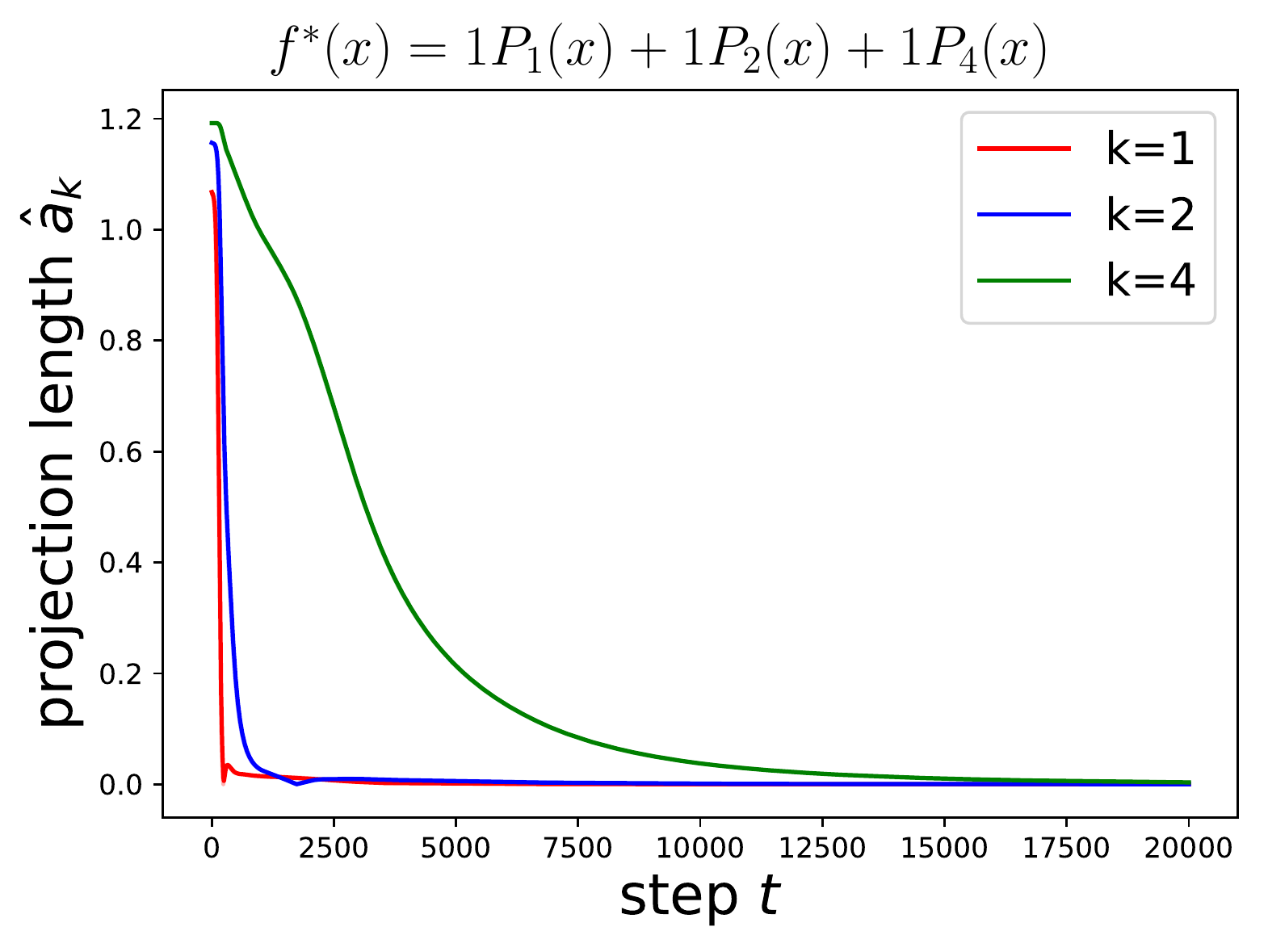}}
     \subfigure[different scale]{\includegraphics[width=0.47\textwidth]{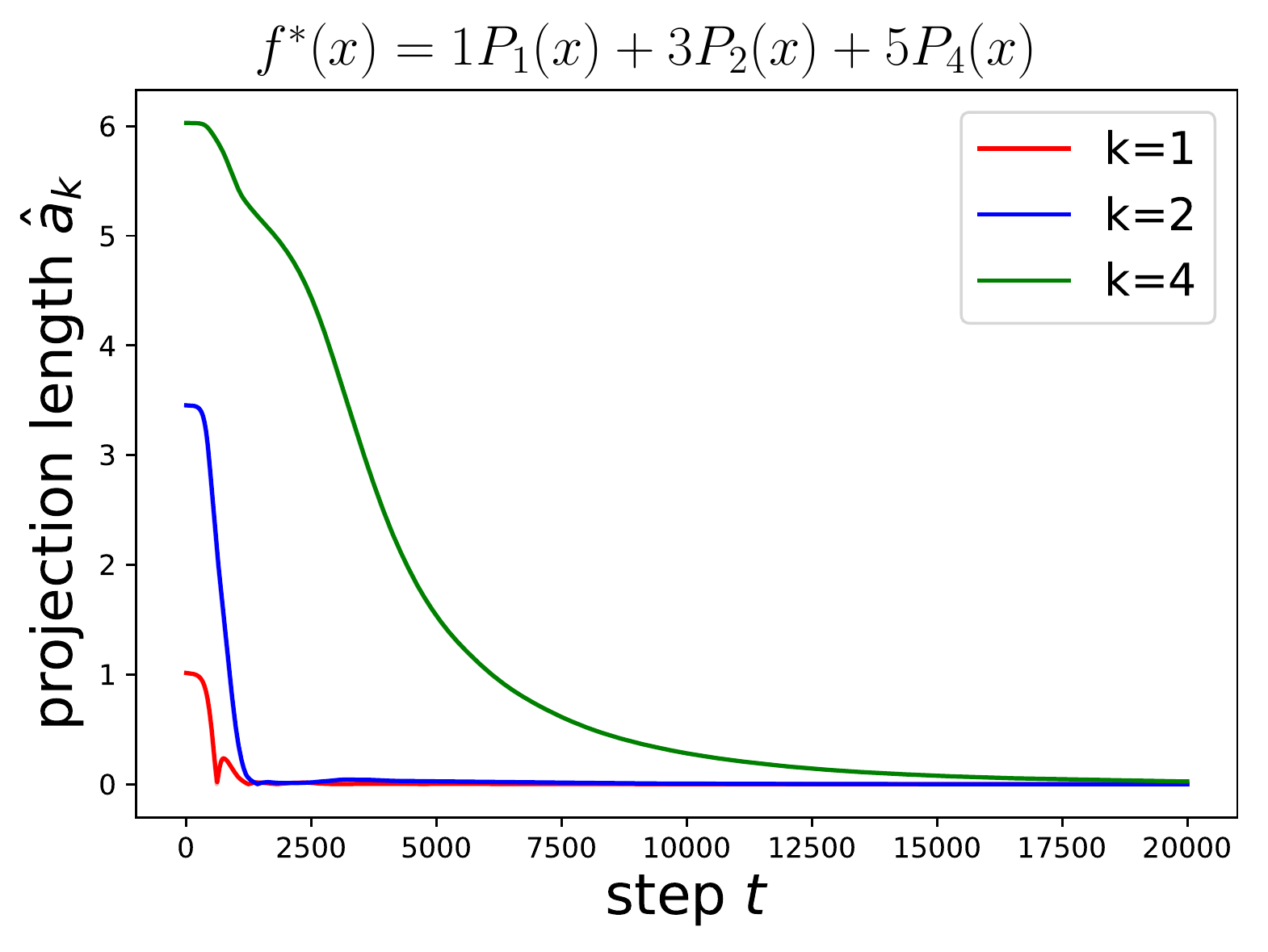}}
    \caption{Convergence curve of projection lengths. (a) shows the curve when the target function have the same scale for different components. (b) shows the curve when the higher-order components have larger scale.
    }
    \label{fig1}
\end{figure}

The results are shown in Figure \ref{fig1}. It can be seen that the residual at the lowest frequency ($k=1$) converges to zero first and then the second lowest ($k=2$). The highest frequency component is the last one to converge.
Following the setting of \citet{rahaman2018spectral} we assign high frequencies a larger scale in Figure \ref{fig1} (b), expecting that larger scale will introduce a better descending speed. Still, the low frequencies are learned first.

Note that $\hat a_k$ is the projection length onto an approximate vector. In the function space, we can also project the residual function $r(\xb) = f^*(\xb) - \theta f_{\Wb^{(t)}}(\xb)$ onto the orthonormal Gegenbauer functions $P_k(\xb)$. Replacing the training data with randomly sampled data points $\xb_i$ can lead to a random estimate of the projection length in function space. Experiments in this setting can be found in the appendix.


We also verify the linear convergence rate proved in Theorem~\ref{thm:projectionconvergence}. Figure \ref{fig4} presents the convergence curve in logarithmic scale. 
We can see from Figure \ref{fig4} that the convergence of different projection length is close to linear convergence, which is well-aligned with our theoretical analysis. Note that we performed a moving average of range $20$ on these curves to avoid the heavy oscillation especially at late stage.

\begin{figure}[htb]
     \centering
     \subfigure[same scale]{\includegraphics[width=0.47\textwidth]{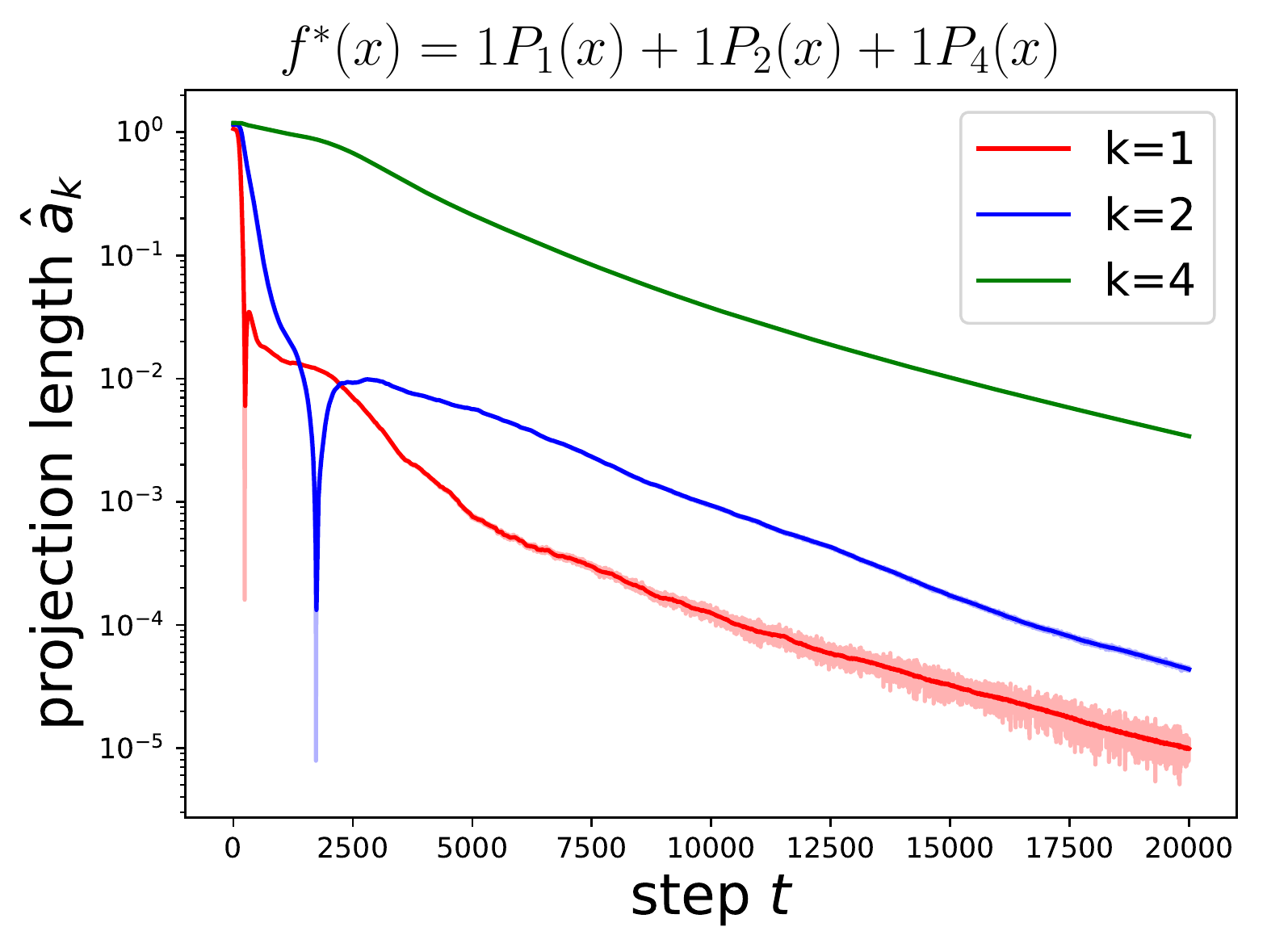}}
     \subfigure[different scale]{\includegraphics[width=0.47\textwidth]{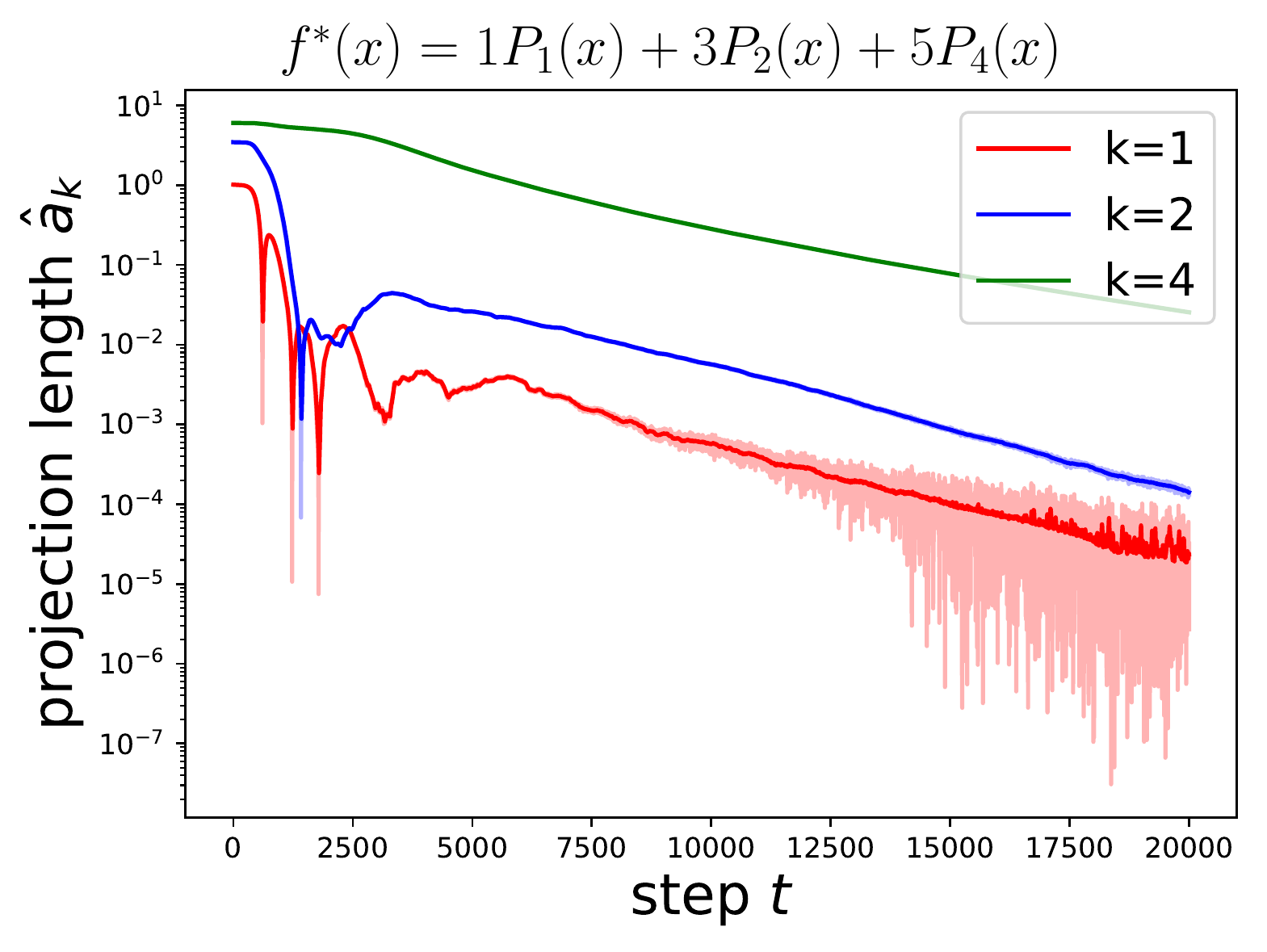}}
    \caption{Log-scale convergence curve for projection lengths. (a) shows the curve when the target function have the same scale for different components. (b) shows the curve when the higher-order components have larger scale.
    }
    \label{fig4}
\end{figure}

\subsection{Learning Functions of Simple Forms}
Apart from the synthesized low frequency function, we also show the dynamics of more general functions' projection to $P_k(x)$. These functions, though in a simple form, have non-zero high-frequency components. In this subsection we show our results still apply when all frequencies exist in the target functions, which
are given by $f^*(\xb)=\sum_{i} \cos(a_i \dotp{\bm{\zeta}}{\xb})$ and $f^*(\xb)=\sum_{i} \dotp{\bm{\zeta}}{\xb}^{p_i}$, where $\bm{\zeta}$ is a fixed unit vector.

Figure \ref{fig2} verifies the result of Theorem~\ref{thm:projectionconvergence} with more general target functions, and backs up our claim that Theorem~\ref{thm:projectionconvergence} does not make any assumptions on the target function. Notice that in the early training stage, not all the curves monotonically descend. We believe this is due to the unseen components' influence on the gradient. Again, as the training proceeds, the residual projections converge at the predicted rates. 

\begin{figure}[H]
     \centering
     \subfigure[cosine function]{\includegraphics[width=0.47\textwidth]{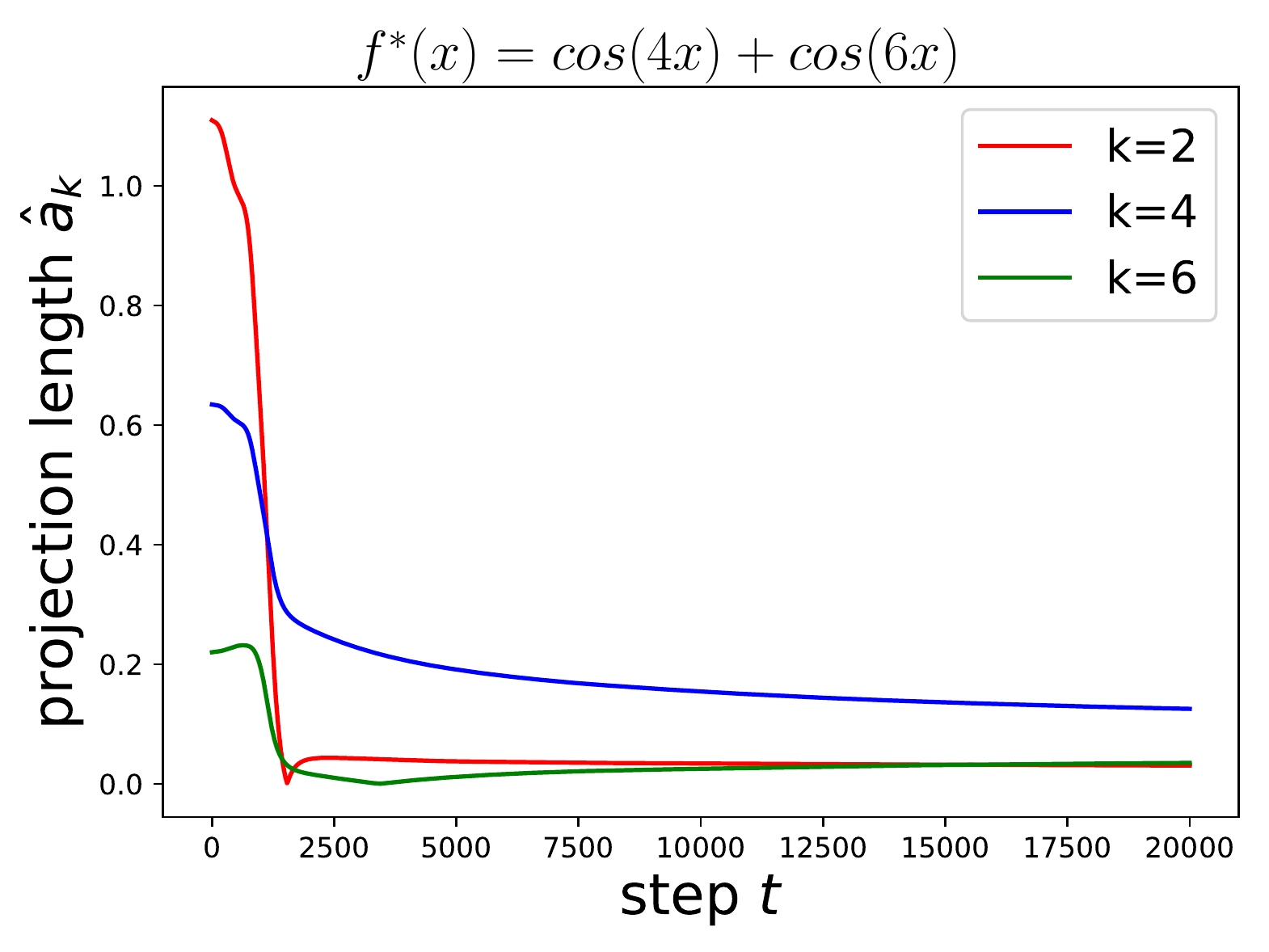}}
     \subfigure[even polynomial]{\includegraphics[width=0.47\textwidth]{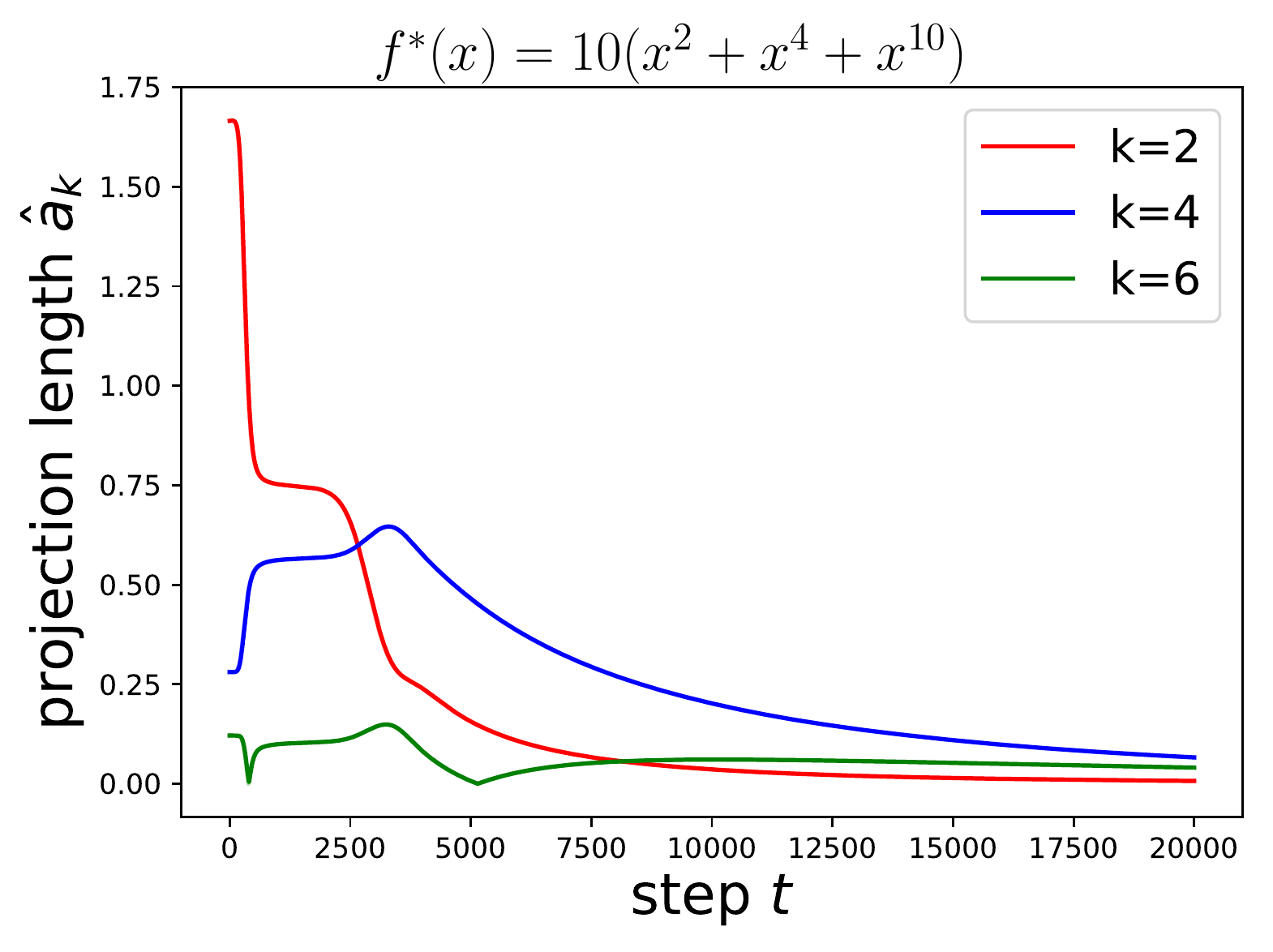}}
    \caption{Convergence curve for different components. (a) shows the curve of a trigonometric function. (b) shows the curve of a polynomial with even degrees.
    }
    \label{fig2}
\end{figure}



\subsection{Non-uniform Input Data  Distributions}

In this subsection, we provide experimental results for non-uniformly distributed input data. Note that the eigendecomposition of NTK for general multi-dimensional input distributions do not necessarily have good analytic forms. Therefore, here we treat the non-uniform distribution of the input data as model misspecification, and test whether residual projections onto spherical harmonics of different degrees can still exhibit various convergence speed. We consider three examples of non-uniform distributions: (i) piece-wise uniform distribution, (ii) normalized non-isotropic Gaussian, and (iii) normalized Gaussian mixture.

\textbf{Piece-wise uniform distribution }
We divide the unit sphere into two semi-spheres along a randomly drawn direction $\bzeta_0$. A data input is then generated as follows: with probability $1/4$, draw the input uniformly over the first unit sphere; with probability $3/4$, draw the input uniformly over the second unit sphere. The results are shown in Figure~\ref{fig1-2}. 

\begin{figure}[H]
     \centering
     \subfigure[same scale]{\includegraphics[width=0.47\textwidth]{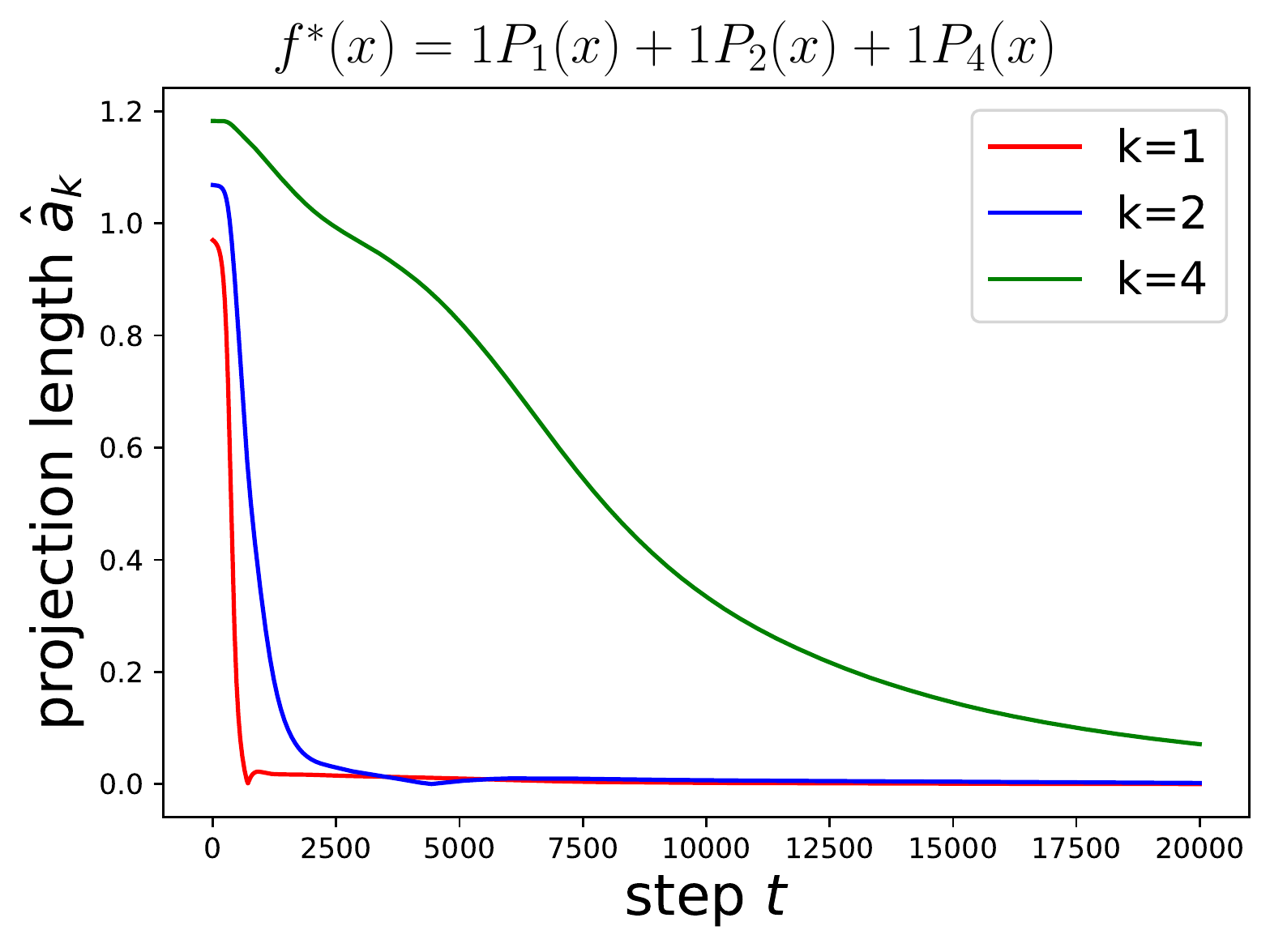}}
     \subfigure[different scale]{\includegraphics[width=0.47\textwidth]{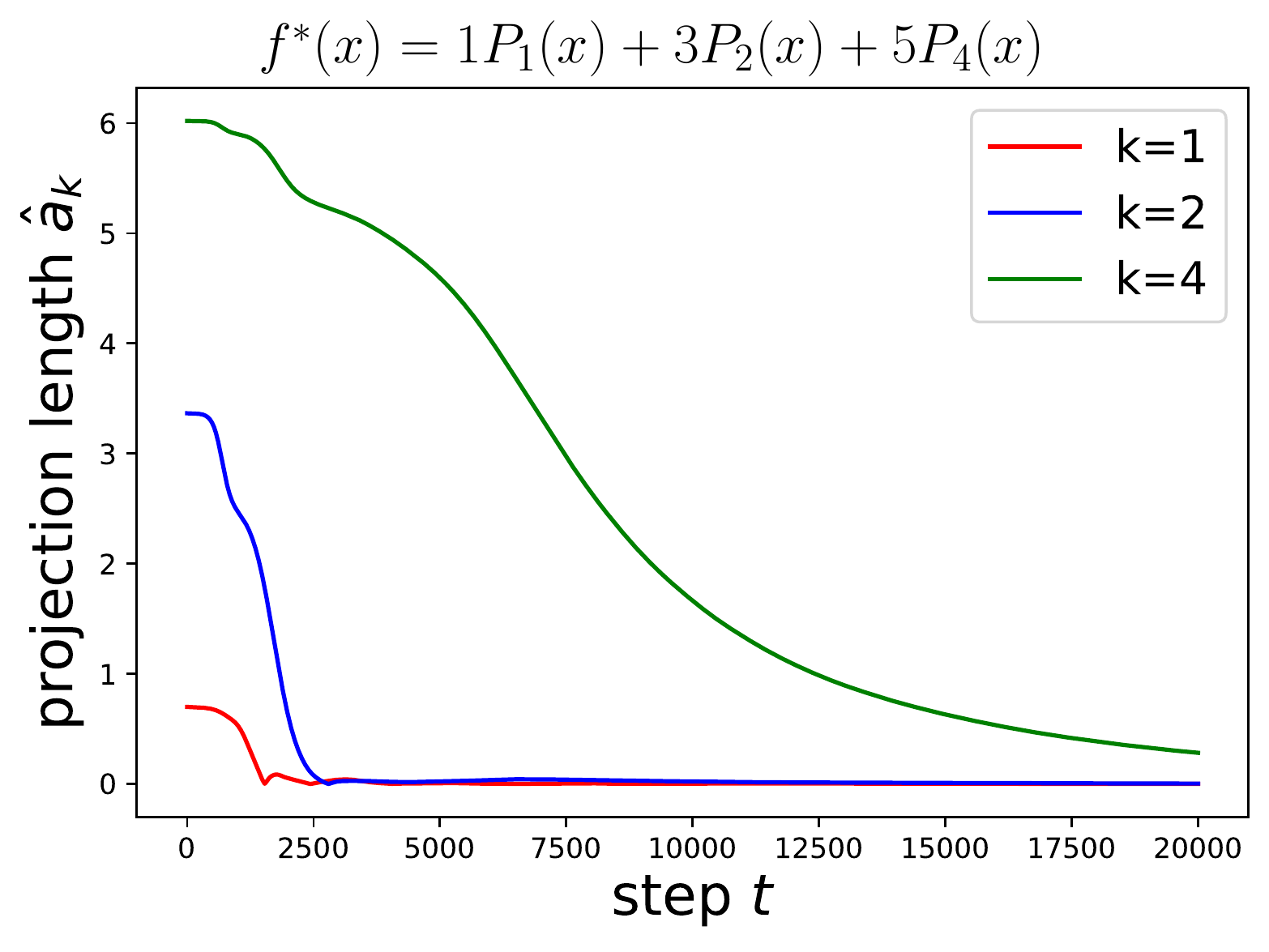}}
    \caption{Convergence curve of projection lengths for the piece-wise uniform distribution example.}
    \label{fig1-2}
\end{figure}

\textbf{Normalized non-isotropic Gaussian} We generate the data by first generating non-zero mean, non-isotropic Gaussian vectors, and then normalize them to unit length. 
The Gaussian mean vector is generated elementwise from $\text{Unif}([-1,1])$; the covariance matrix is generated as $\bSigma = \Ab^{\top} \Ab$, where $\Ab \in \RR_{d \times D}$($d=10, D=20$) has i.i.d. standard Gaussian entries. The results are shown in Figure~\ref{fig1-3}.

\begin{figure}[H]
     \centering
     \subfigure[same scale]{\includegraphics[width=0.47\textwidth]{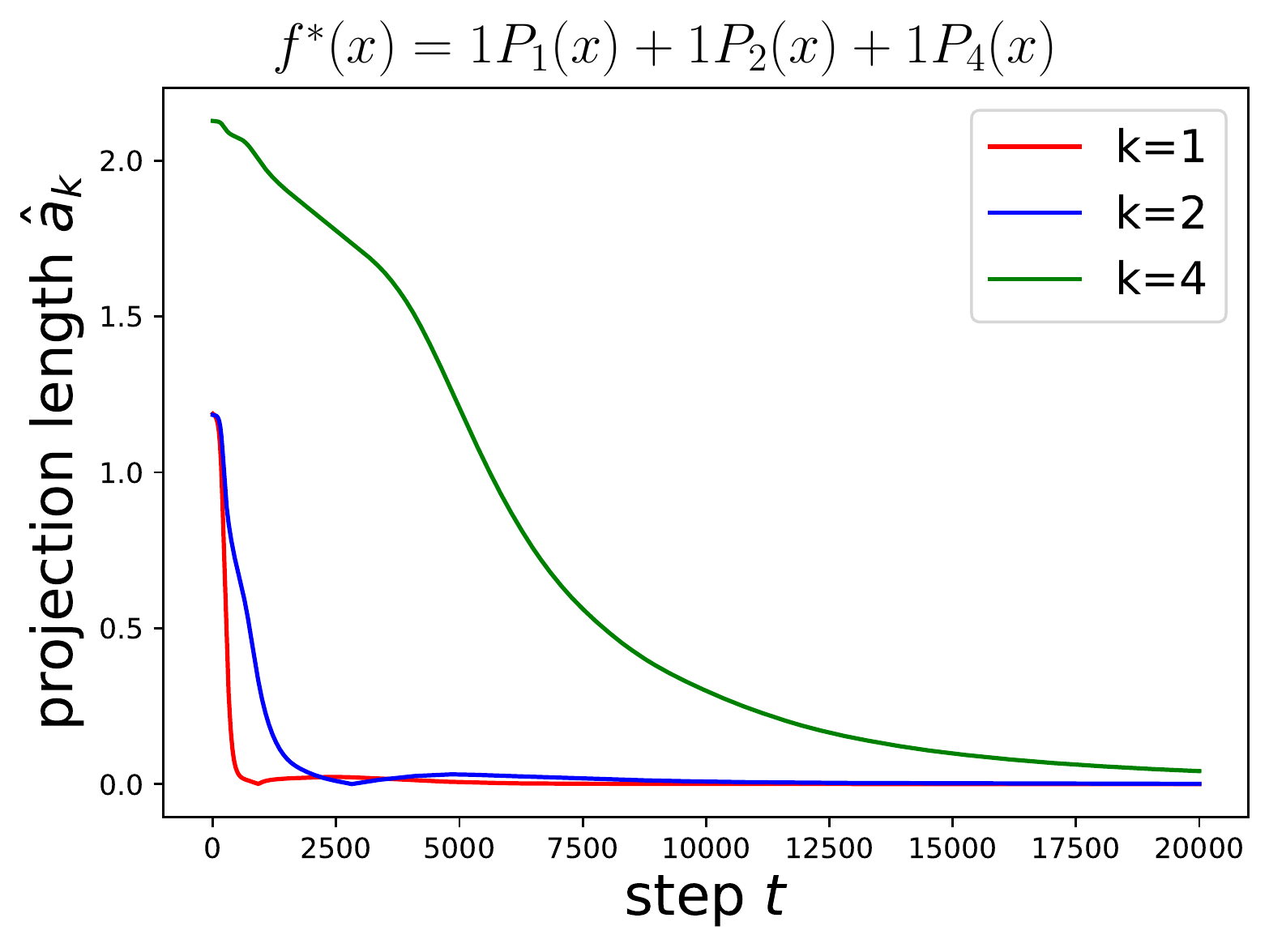}}
     \subfigure[different scale]{\includegraphics[width=0.47\textwidth]{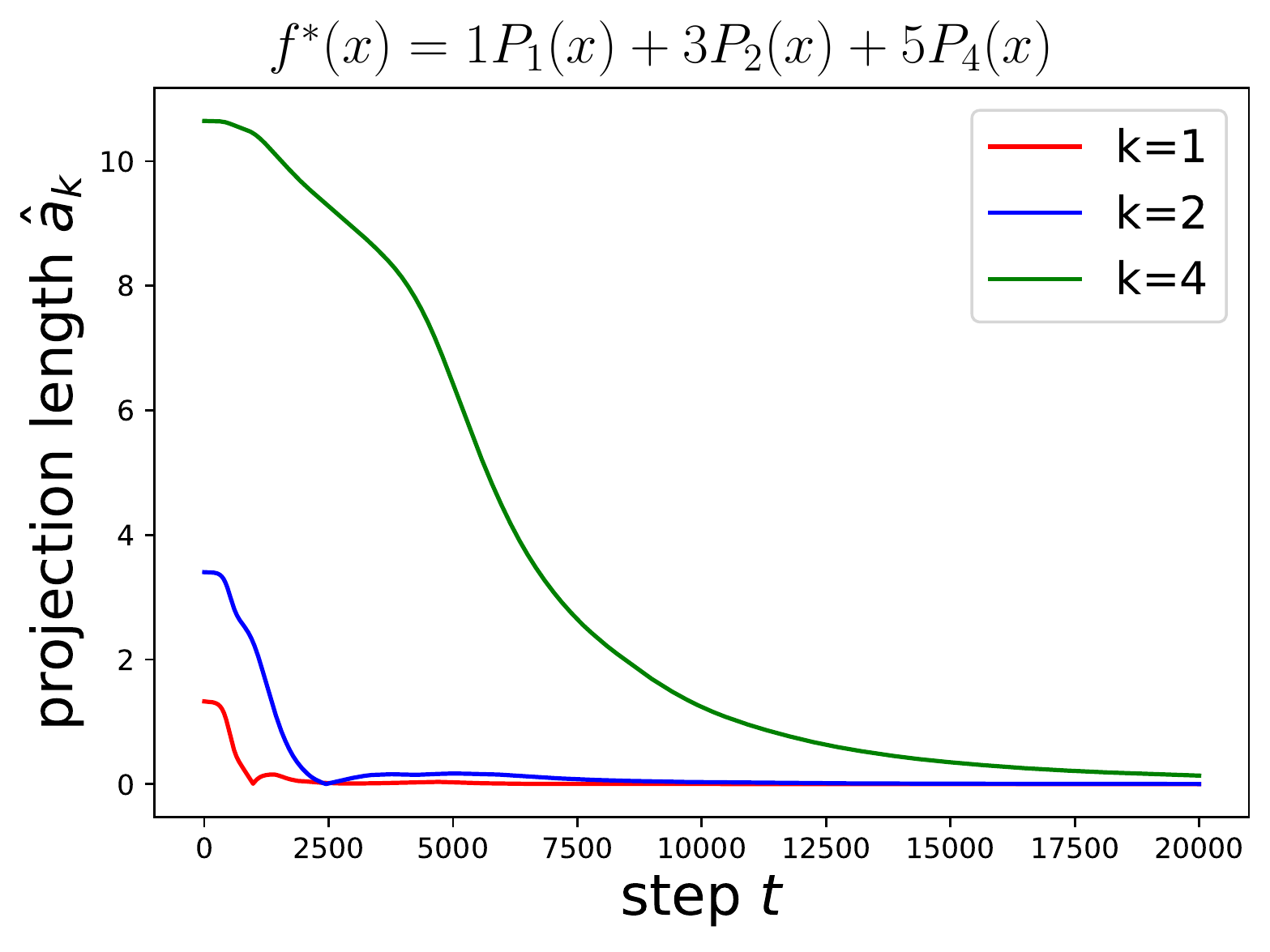}}
    \caption{Convergence curve for projection lengths for the normalized non-isotropic Gaussian example.}
    \label{fig1-3}
\end{figure}

\textbf{Normalized Gaussian mixture} The inputs are drawn from a mixture of 3 non-isotropic Gaussian distributions described above in the second example. 
The results are shown in Figure~\ref{fig1-4}.

Figures~\ref{fig1-2}, \ref{fig1-3}, \ref{fig1-4} show that the residual components corresponding to lower order polynomials are still learned relatively faster, even for the non-uniform distributions described above, where spherical harmonics are no longer exactly the eigenfunctions of NTK. This suggests that our theoretical results  can tolerate certain level of model misspecification, and therefore the spectral bias characterized in Theorem~\ref{thm:projectionconvergence} and Corollaries~\ref{col:largek}, \ref{col:larged} holds in a variety of problem settings. 

\begin{figure}[H]
     \centering
     \subfigure[same scale]{\includegraphics[width=0.47\textwidth]{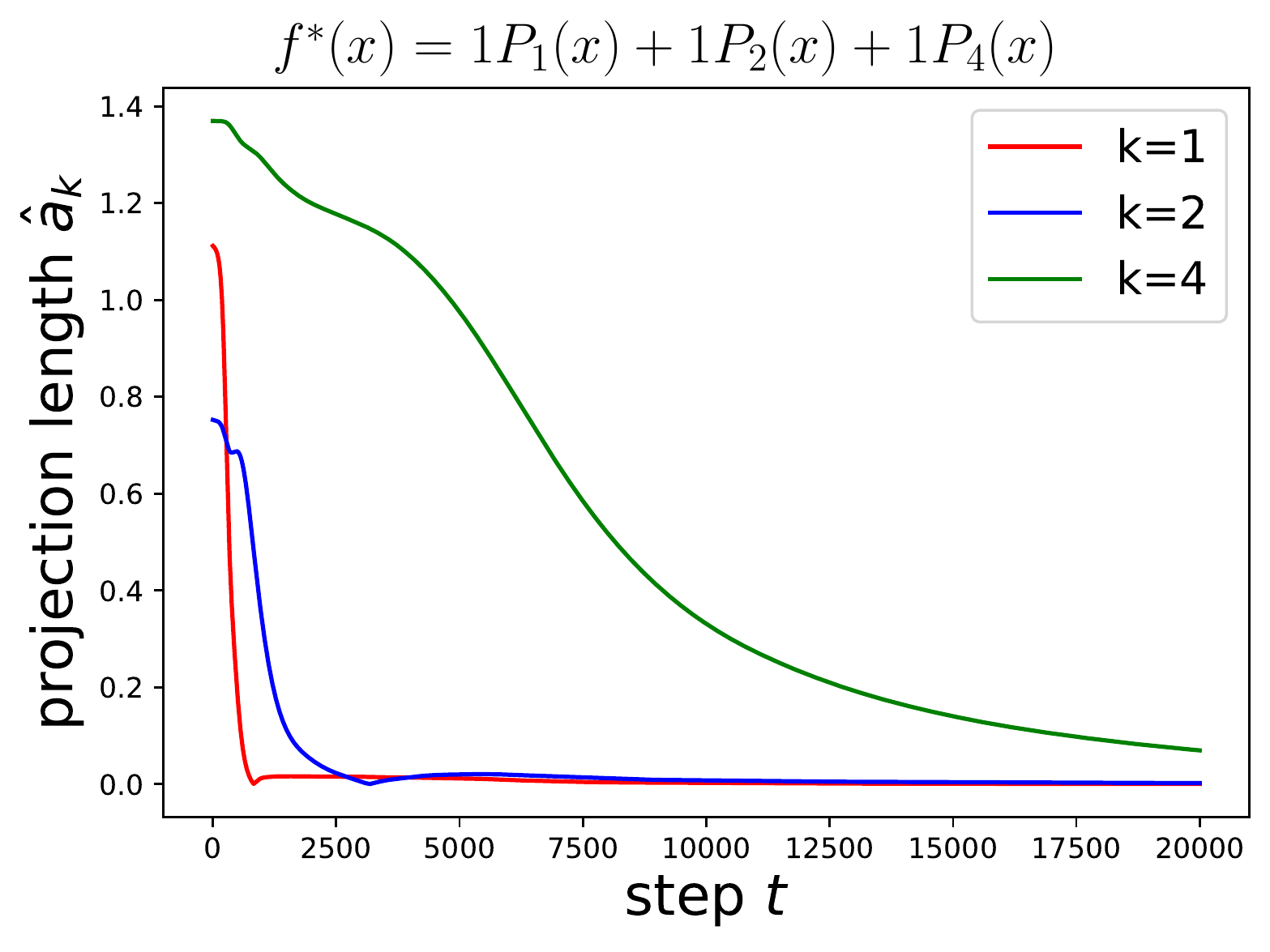}}
     \subfigure[different scale]{\includegraphics[width=0.47\textwidth]{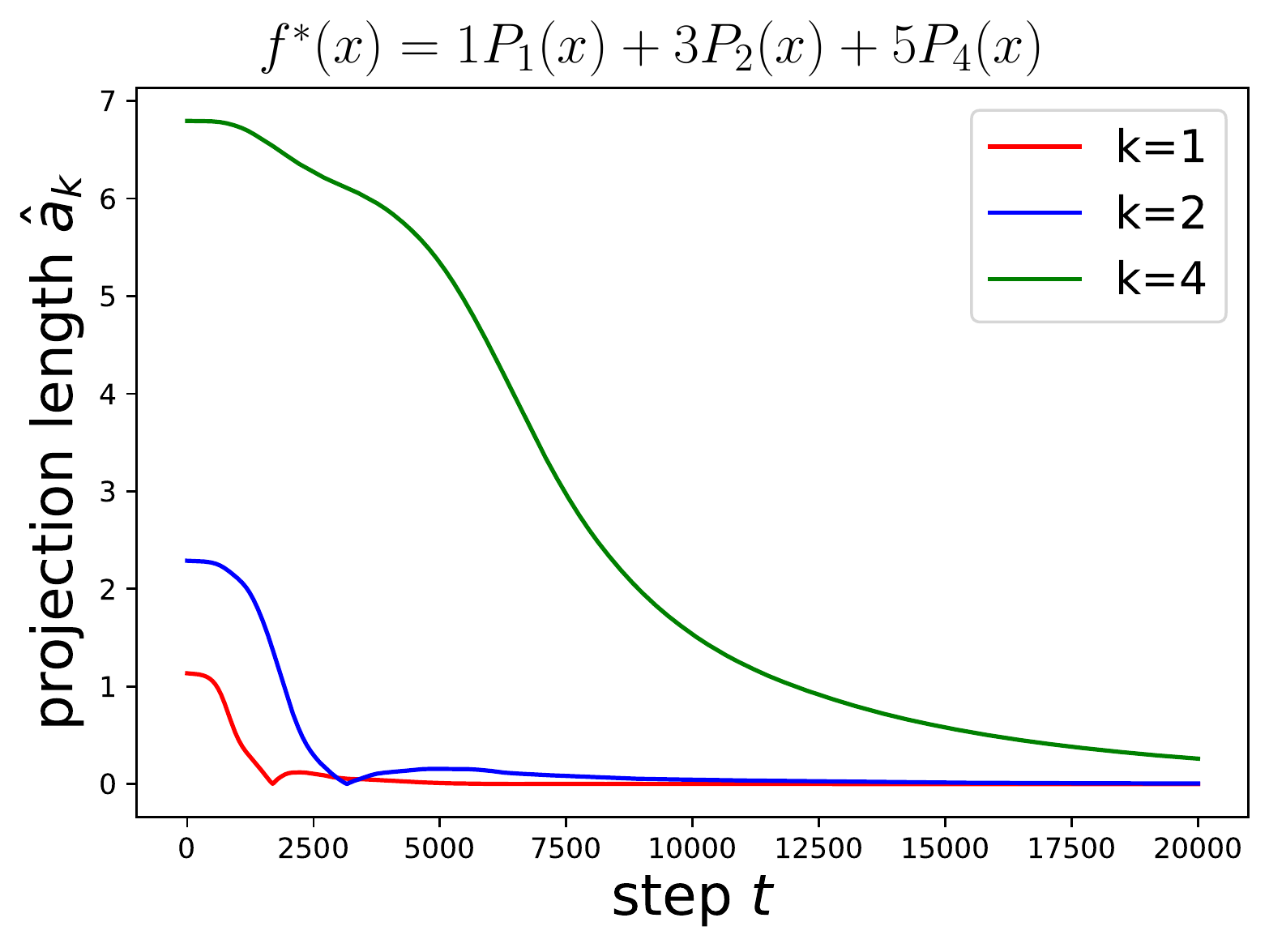}}
    \caption{Convergence curve of projection lengths for the normalized Gaussian mixture example.}
    \label{fig1-4}
\end{figure}






\section{Conclusion}

In this paper, we give theoretical justification for spectral bias through a detailed analysis of the convergence behavior of two-layer ReLU networks. We show that the convergence of gradient descent in different directions depends on the corresponding eigenvalues and essentially exhibits different convergence rates. We show Mercer decomposition of neural tangent kernel and give explicit order of eigenvalues of integral operator with respect to the neural tangent kernel when the data is uniformly distributed on the unit sphere $\SSS^{d}$. Combined with the convergence analysis, we give the exact order of convergence rate on different directions. We also conduct experiments on synthetic data to support our theoretical result.

\appendix


\section{Appendix A: A Review on Spherical Harmonics}
In this section, we give a brief review on relevant concepts in spherical harmonics. For more detials, see \citet{bach2017harmonics,bietti2019inductive,frye2012spherical,atkinson2012spherical} for references.

We consider the unit sphere $\SSS^{d} = \cbr{\xb \in \RR^{d+1}: \|\xb\|_2=1}$, whose surface area is given by $\omega_d = 2 \pi^{(d+1)/2}/\Gamma((d+1)/2)$ and denote $\tau_d$ the uniform measure on the sphere. For any $ k \geq 1 $, we consider a set of spherical harmonics 
$$
\cbr{Y_{k,j}: \SSS^{d} \rightarrow \RR | 1 \le j \le N(d,k)=\frac{2k+d-1}{k} {\binom{k+d-2}{d-1}} }.
$$ 
They form an orthonormal basis and satisfy the following equation $\langle Y_{ki} , Y_{sj} \rangle_{\SSS^d} = \int_{\SSS^d} Y_{ki}(x)  Y_{sj}(x) d\tau_d(x) = \delta_{ij} \delta_{sk}$. Moreover, since they are homogeneous functions of degree $k$, it is clear that $Y_k(x)$ has the same parity as $k$.

We have the addition formula
\begin{align}\label{eq:additionformula}
    \sum_{j=1}^{N(d,k)} Y_{k,j}(\xb)Y_{k,j}(\yb) = N(d,k) P_k(\dotp{\xb}{\yb}),
\end{align}
where $P_k(t)$ is the Legendre polynomial of degree $k$ in $d+1$ dimensions, explicitly given by (Rodrigues' formula)
\begin{align*}
   P_k(t) = \left(-\frac{1}{2}\right)^k   \frac{\Gamma\left(\frac{d}{2}\right)}{\Gamma\left(k + \frac{d}{2}\right)}
   \left(1-t^2\right)^{\frac{2-d}{2}}  \left(\frac{d}{dt}\right)^k
   \left(1-t^2\right)^{k + \frac{d-2}{2}}.
\end{align*}
We can also see that $P_k(t)$, the Legendre polynomial of degree $k$  shares the same parity with $k$. By the orthogonality and the addition formula (\ref{eq:additionformula}) we have,
\begin{align}\label{equation:ppintegral}
    \int_{\SSS^d} P_j(\dotp{\wb}{\xb})P_k(\dotp{\wb}{\yb}) d \tau_d(\wb) = 
    \frac{\delta_{jk}}{N(d,k)}P_k(\dotp{\xb}{\yb}).
\end{align}

The following recurrence relation holds for Legendre polynomials:
\begin{align}\label{equation:precurrence}
    tP_k(t) = \frac{k}{2k+d-1} P_{k-1}(t) +
    \frac{k+d-1}{2k+d-1} P_{k+1}(t),
\end{align}
for $k \ge 1$ and $tP_0(t) = P_1(t)$ for $k=0$.

The Hecke-Funk formula is given for a spherical harmonic $Y_k$ of degree $k$ as follows.
\begin{align}\label{eq:heckefunk}
    \int_{\SSS^d} f(\dotp{\xb}{\yb}) Y_k(\yb) d \tau_d(\yb)
    =
    \frac{\omega_{d-1}}{\omega_{d}}  Y_k(\xb) \int_{-1}^{1} f(t) P_k(t) (1-t^2)^{(d-2)/2}dt.
\end{align}

\section{Appendix B: Proof of Main Theorems}\label{sec:proof_of_lemmas_1}

\subsection{Proof of Lemma~\ref{lemma:projectionconcentration}}
Here we provide the proof of Lemma~\ref{lemma:projectionconcentration}, which is based on direct application of concentration inequalities.

\begin{proof}[Proof of Lemma~\ref{lemma:projectionconcentration}]
By definition, we have
\begin{align*}
    (\Vb_{r_k}^\top \Vb_{r_k} )_{ij} = \vb_i^\top \vb_j = \frac{1}{n} \sum_{s=1}^n \phi_i(\xb_s)\phi_j(\xb_s),
\end{align*}
where $\xb_1,\ldots \xb_n$ are i.i.d. samples from distribution $\tau$. Note that $\phi_i(\xb)$'s are orthonormal functions in $L_\tau^2(\SSS^{d})$. By definition we have
\begin{align*}
    \EE \Bigg[  \frac{1}{n} \sum_{s=1}^n \phi_i(\xb_s)\phi_j(\xb_s) \Bigg] = \int \phi_i(\xb)\phi_j(\xb) d \tau(\xb) = \delta_{i,j},
\end{align*}
where $\delta_{i,j} = 1$ if $i = j$ and $\delta_{i,j} = 0$ otherwise. Now since $\phi_i(\xb) \leq M $ for all $\xb\in \SSS^{d}$ and $i\in[r_k]$, by standard Hoeffding's inequality (see Proposition 5.10 in \citet{vershynin2010introduction}), with probability at least $1 - \delta / r_k^2 $ we have
\begin{align*}
    \big|(\Vb_{r_k}^\top \Vb_{r_k} )_{ij} - \delta_{i,j} \big| \leq C_1 M^2 \sqrt{\frac{ \log(r_k^2 / \delta) }{n}} \leq C_2 M^2 \sqrt{\frac{ \log(r_k / \delta) }{n}} ,
\end{align*}
where $C_1,C_2$ are absolute constants. Applying a union bound over all $i,j \in [r_k]\times [r_k]$ finishes the proof.
\end{proof}

\subsection{Proof of Theorem~\ref{thm:projectionconvergence}}
In this section we give the proof of  Theorem~\ref{thm:projectionconvergence}. The core idea of our proof is to establish connections between neural network gradients throughout training and the neural tangent kernel. To do so, we first introduce the following definitions and notations.

Define $\Kb^{(0)} = m^{-1} ( \la \nabla_{\Wb} f_{\Wb^{(0)}}(\xb_i) , \nabla_{\Wb} f_{\Wb^{(0)}}(\xb_j) \ra )_{n\times n}$, $\Kb^{(\infty)} = (\kappa(\xb_i,\xb_j))_{n\times n}= \lim_{m\rightarrow \infty} \Kb^{(0)}$.
Let $\{ \hat\lambda_i \}_{i=1}^n$, $\hat\lambda_1 \geq\cdots \geq \hat\lambda_n$ be the eigenvalues of $n^{-1} \Kb^\infty$, and $\hat\vb_1,\ldots,\hat\vb_n$ be the corresponding eigenvectors. Set $\hat\Vb_{{r_k}} = (\hat\vb_1,\ldots,\hat\vb_{{r_k}})$, $\hat\Vb_{{r_k}}^\bot = (\hat\vb_{{r_k} + 1},\ldots,\hat\vb_{n})$. 
For notation simplicity, we denote $\nabla_\Wb f_{\Wb^{(0)}}(\xb_i) = [\nabla_{\Wb} f_{\Wb}(\xb_i)]\big|_{\Wb = \Wb^{(0)}}$, $\nabla_{\Wb_l} f_{\Wb^{(0)}}(\xb_i) = [\nabla_{\Wb_l} f_{\Wb}(\xb_i)]\big|_{\Wb = \Wb^{(0)}}$, $l=1,2$.

The following lemma's purpose is to further connect the eigenfunctions of NTK with their finite-width, finite-sample counterparts. Its first statement is proved in \citet{su2019learning}.

\begin{lemma}\label{lemma:projectiondifference}
Suppose that $ |\phi_i(\xb) | \leq M$ for all $\xb\in S^{d}$. There exist absolute constants $C, C',c''>0$, such that for any $\delta >0$ and integer $k$ with ${r_k} \leq n$, if $n \geq C  (\lambda_{{r_k}} - \lambda_{{r_k} + 1})^{-2} \log(1/\delta)$, then with probability at least 
$1 - \delta$, 
\begin{align*}
    &\| \Vb_{{r_k}}^\top \hat\Vb_{{r_k}}^\bot \|_F \leq C'\frac{1}{\lambda_{r_k} - \lambda_{r_k + 1}} \cdot \sqrt{\frac{ \log(1/\delta)}{n} },\\
    &\| \Vb_{{r_k}} \Vb_{{r_k}}^\top -  \hat\Vb_{{r_k}} \hat\Vb_{{r_k}}^\top \|_2 \leq C''\bigg[  \frac{1}{\lambda_{r_k} - \lambda_{r_k + 1}} \cdot \sqrt{\frac{ \log(1/\delta)}{n} }  + M^2 r_k \sqrt{\frac{ \log(r_k / \delta)}{ n}} \bigg].
\end{align*}
\end{lemma}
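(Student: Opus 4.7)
\bigskip

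\noindent\textbf{Proof plan for Lemma~\ref{lemma:projectiondifference} (second statement).} Since the first inequality is quoted from \citet{su2019learning}, the task is to derive the projection bound on $\| \Vb_{r_k}\Vb_{r_k}^\top - \hat\Vb_{r_k}\hat\Vb_{r_k}^\top\|_2$ from the first inequality and Lemma~\ref{lemma:projectionconcentration}. The cleanest route is to insert as an intermediate object the \emph{true} orthogonal projector $\Pi_V$ onto the column span of $\Vb_{r_k}$. Concretely, let $\Gb = \Vb_{r_k}^\top \Vb_{r_k}$, let $\Ub = \Vb_{r_k}\Gb^{-1/2}$ (so that $\Ub^\top\Ub = \Ib$), and set $\Pi_V = \Ub\Ub^\top = \Vb_{r_k}\Gb^{-1}\Vb_{r_k}^\top$. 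The triangle inequality then gives
\begin{align*}
\| \Vb_{r_k}\Vb_{r_k}^\top - \hat\Vb_{r_k}\hat\Vb_{r_k}^\top\|_2 \;\le\; \| \Vb_{r_k}\Vb_{r_k}^\top - \Pi_V\|_2 + \|\Pi_V - \hat\Vb_{r_k}\hat\Vb_{r_k}^\top\|_2,
\end{align*}
so it suffices to handle the two pieces separately, each of which is controlled by one of the hypotheses.

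\medskip

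\noindent\textbf{Step 1 — pseudo-orthogonality correction.} The first piece can be written as $\Vb_{r_k}(\Ib - \Gb^{-1})\Vb_{r_k}^\top$. Applying submultiplicativity yields
\begin{align*}
\| \Vb_{r_k}\Vb_{r_k}^\top - \Pi_V\|_2 \;\le\; \|\Vb_{r_k}\|_2^2 \cdot \|\Ib - \Gb^{-1}\|_2 \;=\; \|\Gb\|_2\cdot \|\Gb^{-1}(\Gb - \Ib)\|_2.
\end{align*}
Lemma~\ref{lemma:projectionconcentration} provides $\|\Gb - \Ib\|_{\max} \lesssim M^2\sqrt{\log(r_k/\delta)/n}$, and combining with $\|\cdot\|_2 \leq r_k\|\cdot\|_{\max}$ for $r_k\times r_k$ matrices yields $\|\Gb - \Ib\|_2 \lesssim M^2 r_k\sqrt{\log(r_k/\delta)/n}$. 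Once $n$ is large enough that this quantity is bounded by $1/2$ (a condition which is absorbed into the sample-complexity condition in Theorem~\ref{thm:projectionconvergence}), we obtain $\|\Gb\|_2\le 3/2$ and $\|\Gb^{-1}\|_2\le 2$, giving the clean bound $\| \Vb_{r_k}\Vb_{r_k}^\top - \Pi_V\|_2 \lesssim M^2 r_k\sqrt{\log(r_k/\delta)/n}$, which accounts for the second term in the stated bound.

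\medskip

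\noindent\textbf{Step 2 — subspace angle.} For the second piece, both $\Pi_V$ and $\hat\Vb_{r_k}\hat\Vb_{r_k}^\top$ are genuine orthogonal projections of rank $r_k$, so the standard identity for equal-rank orthogonal projectors yields $\|\Pi_V - \hat\Vb_{r_k}\hat\Vb_{r_k}^\top\|_2 = \|\Pi_V\,\hat\Vb_{r_k}^\bot(\hat\Vb_{r_k}^\bot)^\top\|_2 = \|\Ub^\top \hat\Vb_{r_k}^\bot\|_2$, because $\Ub$ is an isometry on its column space. Since $\Ub = \Vb_{r_k}\Gb^{-1/2}$, we get
\begin{align*}
\|\Ub^\top \hat\Vb_{r_k}^\bot\|_2 \;\le\; \|\Gb^{-1/2}\|_2 \cdot \| \Vb_{r_k}^\top \hat\Vb_{r_k}^\bot\|_F \;\lesssim\; \| \Vb_{r_k}^\top \hat\Vb_{r_k}^\bot\|_F,
\end{align*}
using $\|\Gb^{-1/2}\|_2\le\sqrt{2}$ from Step 1. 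The first statement of the lemma then bounds this by $\frac{1}{\lambda_{r_k}-\lambda_{r_k+1}}\sqrt{\log(1/\delta)/n}$, which is precisely the first term in the target bound. Adding the two pieces, adjusting $C''$, and taking a union bound over the two high-probability events concludes the proof.

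\medskip

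\noindent\textbf{Anticipated obstacle.} The only mildly subtle point is the comparison $\|A\|_2 \le r_k \|A\|_{\max}$ applied to the $r_k\times r_k$ Gram perturbation, which is what ultimately produces the multiplicative $r_k$ factor in the final bound. One could try to avoid this by invoking a matrix Bernstein or Matrix Hoeffding inequality directly on $\Gb-\Ib = n^{-1}\sum_{s=1}^n[\bm\phi(\xb_s)\bm\phi(\xb_s)^\top - \Ib]$, possibly sharpening $r_k$ to $\sqrt{r_k}$. Since the stated lemma already includes the $r_k$ factor, the elementary route suffices and is notationally lighter; the main care is to ensure that the sample-size threshold used to invert $\Gb$ is compatible with the hypothesis $n\ge\tilde\Omega(\epsilon^{-2}\max\{(\lambda_{r_k}-\lambda_{r_k+1})^{-2}, M^4 r_k^2\})$ of Theorem~\ref{thm:projectionconvergence}, which indeed it is.
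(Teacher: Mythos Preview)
Your proof is correct and reaches the same conclusion, but the decomposition you use is genuinely different from the paper's. The paper expands $\Vb_{r_k}$ in the orthonormal basis $(\hat\Vb_{r_k},\hat\Vb_{r_k}^\bot)$, writing $\Vb_{r_k}=\hat\Vb_{r_k}\Ab+\hat\Vb_{r_k}^\bot\Bb$; it then identifies $\|\Bb\|_F=\|\Vb_{r_k}^\top\hat\Vb_{r_k}^\bot\|_F$ with the first inequality, bounds $\|\Ab\Ab^\top-\Ib\|_2$ via $\Vb_{r_k}^\top\Vb_{r_k}=\Ab^\top\Ab+\Bb^\top\Bb$ and Lemma~\ref{lemma:projectionconcentration}, and expands $\Vb_{r_k}\Vb_{r_k}^\top-\hat\Vb_{r_k}\hat\Vb_{r_k}^\top$ into four terms in $\Ab,\Bb$. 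Your route instead inserts the true orthogonal projector $\Pi_V=\Vb_{r_k}\Gb^{-1}\Vb_{r_k}^\top$ and splits via the triangle inequality, handling one piece with Lemma~\ref{lemma:projectionconcentration} and the other with the equal-rank projector identity $\|\Pi_V-\hat\Vb_{r_k}\hat\Vb_{r_k}^\top\|_2=\|\Ub^\top\hat\Vb_{r_k}^\bot\|_2$. Your argument is arguably cleaner and more geometric; the paper's is purely algebraic and avoids inverting $\Gb$, so it does not need the auxiliary condition $\|\Gb-\Ib\|_2\le 1/2$ that you invoke (though, as you note, this is covered by the sample-size hypothesis of Theorem~\ref{thm:projectionconvergence}). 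Both approaches extract the $r_k$ factor from the same crude bound $\|\cdot\|_2\le r_k\|\cdot\|_{\max}$ applied to the Gram perturbation.
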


The following two lemmas gives some preliminary bounds on the function value and gradients of the neural network around random initialization. They are proved in \citet{cao2019generalizationsgd}.

\begin{lemma}[\citet{cao2019generalizationsgd}]\label{lemma:initialfunctionvaluebound}
For any $\delta > 0$, if $m\geq C\log(n/\delta)$ for a large enough absolute constant $C$, then with probability at least $1 - \delta$, 
$ |f_{\Wb^{(0)}} (\bx_i)| \leq \cO(\sqrt{\log( n / \delta)}) $
for all $i\in[n]$.
\end{lemma}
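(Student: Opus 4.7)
The plan is to reduce the bound to a Gaussian tail estimate after conditioning on the first-layer weights, and then union-bound over the $n$ training points. Fix $i\in[n]$ and recall that $\|\xb_i\|_2=1$ since $\xb_i\in\SSS^d$. Write $f_{\Wb^{(0)}}(\xb_i) = \sqrt{m}\sum_{j=1}^m W_{2,j}^{(0)}\,\sigma\bigl(\la \wb_{1,j}^{(0)},\xb_i\ra\bigr)$, where $\wb_{1,j}^{(0)}$ is the $j$-th row of $\Wb_1^{(0)}$. The key observation is that, \emph{conditional on $\Wb_1^{(0)}$}, the quantity $f_{\Wb^{(0)}}(\xb_i)$ is a linear combination of the independent Gaussians $W_{2,j}^{(0)}\sim N(0,1/m)$, so it is a centered Gaussian with variance $\sigma_i^2 := \sum_{j=1}^m \sigma\bigl(\la\wb_{1,j}^{(0)},\xb_i\ra\bigr)^2$.

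First I would control $\sigma_i^2$. Since $\la\wb_{1,j}^{(0)},\xb_i\ra\sim N(0,2/m)$, the random variable $Z_j := \sigma(\la\wb_{1,j}^{(0)},\xb_i\ra)^2$ satisfies $\EE[Z_j] = 1/m$ (half the second moment of $N(0,2/m)$) and is sub-exponential with parameter of order $1/m$. By Bernstein's inequality applied to the i.i.d.\ sum $\sigma_i^2 = \sum_j Z_j$, a width of $m \ge C\log(n/\delta)$ is sufficient to guarantee $|\sigma_i^2 - 1| \le 1/2$, hence $\sigma_i^2 \le 3/2$, with probability at least $1-\delta/(2n)$.

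Next, on that event I would apply the standard Gaussian tail bound: conditionally on $\Wb_1^{(0)}$, $\Pr\bigl(|f_{\Wb^{(0)}}(\xb_i)| > t\bigr) \le 2\exp(-t^2/(2\sigma_i^2))$, so choosing $t = \cO(\sqrt{\log(n/\delta)})$ gives $|f_{\Wb^{(0)}}(\xb_i)| \le \cO(\sqrt{\log(n/\delta)})$ with conditional probability at least $1-\delta/(2n)$. Combining the two bounds via the tower property yields the desired bound for the single index $i$ with probability at least $1 - \delta/n$, and a union bound over $i\in[n]$ produces the uniform bound over all training inputs.

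The proof has no real obstacle; the main care required is verifying that the sub-exponential parameters of the $Z_j$'s scale like $1/m$ so that the Bernstein deviation condition collapses to $m\gtrsim\log(n/\delta)$ rather than to a stronger polynomial requirement, and ensuring the two probability budgets ($\delta/(2n)$ each for the variance concentration and the Gaussian tail) combine to give the claimed $1-\delta$ overall.
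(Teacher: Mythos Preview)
Your proposal is correct and follows the standard argument for this type of bound: condition on the first layer, observe that the output is a Gaussian with variance $\sum_j \sigma(\la\wb_{1,j}^{(0)},\xb_i\ra)^2$, concentrate that variance around $1$ via Bernstein (the $Z_j$'s have sub-exponential norm $O(1/m)$ since $Z_j\le \la\wb_{1,j}^{(0)},\xb_i\ra^2$ and the latter is the square of an $N(0,2/m)$ variable), and finish with a Gaussian tail plus union bound. The paper itself does not prove this lemma; it quotes it directly from \citet{cao2019generalizationsgd}, so there is no in-paper argument to compare against, but your derivation is exactly the natural one and matches what is done in that reference.
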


\begin{lemma}[\citet{cao2019generalizationsgd}]\label{lemma:NNgradient_uppbound}
There exists an absolute constant $C$ such that, with probability at least $1 - \cO(n) \cdot \exp[-\Omega(m\omega^{2/3} )] $, for all $i\in [n]$, $l\in[L]$ and $\Wb\in \cB(\Wb^{(0)},\omega)$ with $ \omega \leq C [\log(m)]^{-3}$, it holds uniformly that 
\begin{align*}
    \| \nabla_{\Wb_l} f_{\Wb}(\xb_i) \|_F \leq \cO(\sqrt{m}).
\end{align*}
\end{lemma}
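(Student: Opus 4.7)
The plan is to prove the Frobenius-norm bound on $\nabla_{\Wb_l}f_{\Wb}(\xb_i)$ by explicit computation, followed by standard Gaussian concentration at initialization and a trivial perturbation step to handle the ball $\cB(\Wb^{(0)},\omega)$. Since $f_\Wb(\xb) = \sqrt{m}\,\Wb_2\,\sigma(\Wb_1\xb)$, the layer gradients are
\begin{align*}
\nabla_{\Wb_2}f_\Wb(\xb) &= \sqrt{m}\cdot \sigma(\Wb_1\xb)^\top,\\
\nabla_{\Wb_1}f_\Wb(\xb) &= \sqrt{m}\cdot \Diag\bigl(\sigma'(\Wb_1\xb)\bigr)\,\Wb_2^\top\,\xb^\top.
\end{align*}
Because $\sigma$ is $1$-Lipschitz with $\sigma(0)=0$ and $\sigma'\in\{0,1\}$, these identities immediately give the deterministic bounds $\|\nabla_{\Wb_2}f_\Wb(\xb)\|_F^2 \le m\,\|\Wb_1\xb\|_2^2$ and $\|\nabla_{\Wb_1}f_\Wb(\xb)\|_F^2 \le m\,\|\Wb_2\|_2^2\,\|\xb\|_2^2$, reducing the task to controlling $\|\Wb_1\xb\|_2$ and $\|\Wb_2\|_2$ for all $\Wb\in\cB(\Wb^{(0)},\omega)$ and all $i\in[n]$.

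Next I would control these two quantities at the initialization. Since $\|\xb_i\|_2=1$ and the rows of $\Wb_1^{(0)}$ are independent $N(\mathbf{0},(2/m)\Ib)$ vectors, each $\la \Wb_{1,r}^{(0)},\xb_i\ra$ is $N(0,2/m)$, so $\|\Wb_1^{(0)}\xb_i\|_2^2$ is $(2/m)$ times a $\chi^2_m$ random variable with mean $2$. Standard sub-exponential concentration yields $\|\Wb_1^{(0)}\xb_i\|_2\le C_1$ with failure probability $\exp[-\Omega(m)]$. Similarly, $\|\Wb_2^{(0)}\|_2^2 = (1/m)\chi^2_m$ concentrates around $1$. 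A union bound over $i\in[n]$ gives both estimates simultaneously with failure probability $\cO(n)\exp[-\Omega(m)]$.

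For any $\Wb\in\cB(\Wb^{(0)},\omega)$, the triangle inequality yields $\|\Wb_1\xb_i\|_2 \le \|\Wb_1^{(0)}\xb_i\|_2 + \|\Wb_1-\Wb_1^{(0)}\|_F\cdot\|\xb_i\|_2 \le C_1 + \omega$ and $\|\Wb_2\|_2 \le \|\Wb_2^{(0)}\|_2 + \omega\le C_1 + \omega$. Under $\omega\le C[\log m]^{-3}$ both are $\cO(1)$, and plugging back into the deterministic bounds above gives $\|\nabla_{\Wb_l}f_\Wb(\xb_i)\|_F \le \cO(\sqrt{m})$ uniformly in $i$, $l$ and $\Wb$, as claimed.

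The only subtle point is the exact form of the failure probability $\cO(n)\exp[-\Omega(m\omega^{2/3})]$ stated in the lemma. In our two-layer setting the argument above in fact gives the better failure probability $\cO(n)\exp[-\Omega(m)]$ whenever $\omega$ is a small absolute constant; the $\omega^{2/3}$ exponent comes from the deep-network version in \citet{cao2019generalizationsgd}, where one must additionally track how the activation patterns at intermediate layers shift as $\Wb$ varies in $\cB(\Wb^{(0)},\omega)$, and this pattern drift is bounded via an $\omega^{2/3}$-type argument to accommodate the product of $L$ weight matrices. Since the statement quoted here is the specialization of that general result to the two-layer ReLU case, I would simply verify that the weaker (and hence also valid) probability expression $\cO(n)\exp[-\Omega(m\omega^{2/3})]$ holds, which follows because $m\omega^{2/3}\le m$ when $\omega\le 1$. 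No further work is required.
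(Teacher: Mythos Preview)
Your argument is correct. The paper does not actually give a proof of this lemma; it is quoted verbatim as a result from \citet{cao2019generalizationsgd} and used as a black box throughout Appendices B--E. So there is no ``paper's own proof'' to compare against here.

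That said, your self-contained two-layer argument is exactly the right specialization: the explicit gradient formulas you wrote are the same ones the paper later uses in the proof of Lemma~\ref{lemma:gradientdifference}, and your reduction to controlling $\|\Wb_1^{(0)}\xb_i\|_2$ and $\|\Wb_2^{(0)}\|_2$ via $\chi^2$ concentration plus a trivial $\omega$-perturbation is clean and sufficient. Your observation in the last paragraph is also correct: the $\exp[-\Omega(m\omega^{2/3})]$ failure probability is an artifact of the deep-network statement in the cited reference, and in the two-layer case one indeed gets the stronger $\exp[-\Omega(m)]$, which of course implies the stated bound whenever $\omega\le 1$.
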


The following lemma is the key to characterize the dynamics of the residual throughout training. These bounds in Lemma~\ref{lemma:explicitboundinsideball} are the ones that distinguish our analysis from previous works on neural network training in the NTK regime \citep{du2018gradient,su2019learning}, since our analysis provides more careful characterization on the residual along different directions.



\begin{lemma}\label{lemma:explicitboundinsideball}
Suppose that the iterates of gradient descent $\Wb^{(0)},\ldots, \Wb^{(t)}$ are inside the ball $\cB(\Wb^{(0)}, \omega )$. If $\omega\leq \tilde\cO(\min\{[\log(m)]^{-3/2}, \lambda_{r_k}^{3}, (\eta m)^{-3}\} )$ and $ n \geq \tilde\cO(\lambda_{r_k}^{-2})$, then with probability at least $1 -\cO(t^2n^2)\cdot \exp[-\Omega(m \omega^{2/3})]$,
\begin{align}
    \| (\hat\Vb_{r_k}^{\bot})^\top (\yb - \hat\yb^{({t'})}) \|_2 &\leq \| (\hat\Vb_{r_k}^{\bot})^\top (\yb - \hat\yb^{(0)}) \|_2 +  {t'} \cdot \omega^{1/3}\eta m \theta^2 \cdot \sqrt{n}\cdot  \tilde\cO(1 + \omega \sqrt{m})\label{eq:lemma_explicitboundinsideball_eq1}\\
    \| \hat\Vb_{r_k}^\top (\yb - \hat\yb^{({t'})}) \|_2 &\leq ( 1 - \eta m \theta^2 \lambda_{r_k} /2)^{t'} \| \hat\Vb_{r_k}^\top (\yb - \hat\yb^{(0)}) \|_2 + {t'} \lambda_{r_k}^{-1} \cdot \omega^{2/3} \eta m \theta^2\cdot \sqrt{n}\cdot  \tilde\cO(1 + \omega \sqrt{m}) \nonumber\\
    &\quad + \lambda_{r_k}^{-1} \cdot \tilde\cO( \omega^{1/3} )\cdot  \| (\hat\Vb_{r_k}^\bot)^\top (\yb - \hat\yb^{(0)}) \|_2\label{eq:lemma_explicitboundinsideball_eq2}\\
    \| \yb - \hat\yb^{({t'})} \|_2 & \leq \tilde\cO(\sqrt{n})
    \cdot (1 - \eta m \theta^2 \lambda_{r_k} / 2)^{t'}  + \tilde\cO (\sqrt{n}\cdot (\eta m \theta^2 \lambda_{r_k})^{-1} ) \nonumber \\
    &\quad + \lambda_{r_k}^{-1} {t'}  \omega^{1/3}\cdot \sqrt{n} \cdot  \tilde\cO(1 + \omega \sqrt{m}) \label{eq:lemma_explicitboundinsideball_eq4}
\end{align}
for all ${t'}=0,\ldots, t-1$.
\end{lemma}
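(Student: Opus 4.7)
My plan is to derive an explicit recursion for the residual $\rb^{(t')} = \yb - \hat\yb^{(t')}$, show it is well-approximated by a linear recursion driven by the NTK Gram matrix $n^{-1}\Hb^{(0)} \approx m\cdot n^{-1}\Kb^{(\infty)}$, and then analyze the projections onto the two eigen-subspaces $\hat\Vb_{r_k}$ and $\hat\Vb_{r_k}^\bot$ separately. Exponential decay with rate $1 - \eta m\theta^2\lambda_{r_k}/2$ is available along $\hat\Vb_{r_k}$, whereas the orthogonal complement is not contracted and only accumulates error linearly in $t'$. The three bounds \eqref{eq:lemma_explicitboundinsideball_eq1}--\eqref{eq:lemma_explicitboundinsideball_eq4} are proved by a joint induction on $t'$.

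\textbf{Residual recursion and kernel approximation.} The gradient descent update is $\Wb^{(t'+1)} - \Wb^{(t')} = (2\eta\theta/n)\sum_i r_i^{(t')}\nabla f_{\Wb^{(t')}}(\xb_i)$, so a first-order Taylor expansion at $\Wb^{(t')}$ gives
$$\rb^{(t'+1)} = \Bigl(\Ib - \tfrac{2\eta\theta^2}{n}\Hb^{(t')}\Bigr)\rb^{(t')} - \theta\,\bepsilon^{(t')},$$
where $\Hb^{(t')}_{ij} = \la\nabla f_{\Wb^{(t')}}(\xb_i),\nabla f_{\Wb^{(t')}}(\xb_j)\ra$ and $\bepsilon^{(t')}$ is the Taylor remainder. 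For two-layer ReLU networks, standard gradient-perturbation bounds within $\cB(\Wb^{(0)},\omega)$ (of the type used in \citet{cao2019generalizationsgd} and consistent with Lemma~\ref{lemma:NNgradient_uppbound}) yield $\|\nabla f_\Wb(\xb)-\nabla f_{\Wb^{(0)}}(\xb)\|_F \leq \tilde\cO(\sqrt{m}\,\omega^{1/3})$ with high probability, because the ReLU gradient changes only when an activation pattern flips and the fraction of such flips is controlled by Gaussian anti-concentration at initialization. This simultaneously bounds $\|\bepsilon^{(t')}\|_2$ and $\|\Hb^{(t')}-\Hb^{(0)}\|_2$; combined with standard concentration of $m^{-1}\Hb^{(0)}$ around $\Kb^{(\infty)}$ under the over-parameterization condition on $m$, one obtains $\|n^{-1}\Hb^{(t')} - m\cdot n^{-1}\Kb^{(\infty)}\|_2 \leq \tilde\cO(m\omega^{1/3})$ as an operator on $\RR^n$.

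\textbf{Projection analysis.} Write $n^{-1}\Kb^{(\infty)} = \sum_i\hat\lambda_i\hat\vb_i\hat\vb_i^\top$; the sample-size requirement $n\geq\tilde\Omega(\lambda_{r_k}^{-2})$ together with Lemma~\ref{lemma:projectiondifference} ensures $\hat\lambda_{r_k}\geq\lambda_{r_k}/2$. Multiplying the residual recursion by $\hat\Vb_{r_k}^\top$ and decomposing $n^{-1}\Hb^{(t')} = m\,n^{-1}\Kb^{(\infty)} + \Db^{(t')}$ with $\|\Db^{(t')}\|_2\leq\tilde\cO(m\omega^{1/3})$, the principal-subspace block contracts with factor at most $1-\eta m\theta^2\lambda_{r_k}/2$. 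Unrolling and applying the geometric-series bound $\sum_{s\geq 0}(1-\eta m\theta^2\lambda_{r_k}/2)^s = O(1/(\eta m\theta^2\lambda_{r_k}))$ produces \eqref{eq:lemma_explicitboundinsideball_eq2}; the off-diagonal block $\hat\Vb_{r_k}^\top\Db^{(t')}\hat\Vb_{r_k}^\bot$ injects a per-step contribution of order $\eta m\theta^2\omega^{1/3}\cdot\|(\hat\Vb_{r_k}^\bot)^\top\rb^{(0)}\|_2$, which after geometric summation produces the coupling term $\lambda_{r_k}^{-1}\tilde\cO(\omega^{1/3})\|(\hat\Vb_{r_k}^\bot)^\top\rb^{(0)}\|_2$. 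For the orthogonal projection, the update matrix has spectral radius at most $1$, so errors accumulate linearly in $t'$, giving \eqref{eq:lemma_explicitboundinsideball_eq1}. The full-residual bound \eqref{eq:lemma_explicitboundinsideball_eq4} then follows by combining the two projections with $\|\yb-\hat\yb^{(0)}\|_2 \leq \tilde\cO(\sqrt{n})$, which uses Lemma~\ref{lemma:initialfunctionvaluebound} together with $|y_i|\leq 1$.

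\textbf{Main obstacle.} The central difficulty is controlling the coupling between the two eigen-subspaces: since $n^{-1}\Hb^{(t')}$ is not exactly diagonalized by $\{\hat\vb_i\}$, a crude triangle inequality would mix the subspaces and destroy the separation of convergence rates. The conditions $\omega\leq\tilde\cO(\lambda_{r_k}^3)$ and $n\geq\tilde\Omega(\lambda_{r_k}^{-2})$ are precisely calibrated so that, after summing a geometric series against the contraction $1-\eta m\theta^2\lambda_{r_k}/2$, the coupling error contributes an effective factor $\lambda_{r_k}^{-1}\omega^{1/3}\ll 1$. A secondary technicality is that \eqref{eq:lemma_explicitboundinsideball_eq4} is itself required to bound $\|\rb^{(t')}\|_2$ inside the per-step error, so all three bounds must be established simultaneously by induction on $t'$, with the stated conditions on $\omega$ and $m$ ensuring that the induction closes.
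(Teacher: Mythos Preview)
Your strategy matches the paper's almost exactly: derive the residual recursion $\ub^{(t'+1)} = [\Ib - (\eta m\theta^2/n)\Kb^\infty]\ub^{(t')} + \eb^{(t')}$ with $\|\eb^{(t')}\|_2 \leq \tilde\cO(\omega^{1/3}\eta m\theta^2)\|\ub^{(t')}\|_2$, then analyze the two eigen-projections separately, using $\hat\lambda_{r_k}\geq \lambda_{r_k}/2$ and geometric summation on the contracting block. The coupling analysis and the role of the conditions $\omega\leq\tilde\cO(\lambda_{r_k}^3)$, $n\geq\tilde\Omega(\lambda_{r_k}^{-2})$ are correctly identified.

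There is, however, one point where you diverge from the paper and where your proposal would not actually deliver the stated inequalities. You assert that ``\eqref{eq:lemma_explicitboundinsideball_eq4} is itself required to bound $\|\rb^{(t')}\|_2$ inside the per-step error, so all three bounds must be established simultaneously by induction.'' The paper avoids this circularity entirely. It proves a separate \emph{uniform} function value bound over the ball $\cB(\Wb^{(0)},\omega)$: with high probability $|f_{\Wb}(\xb_i)|\leq \cO(\sqrt{\log(n/\delta)}+\omega\sqrt{m})$ for every $\Wb\in\cB(\Wb^{(0)},\omega)$ (obtained from Lemma~\ref{lemma:initialfunctionvaluebound} plus the semilinear/gradient bounds already invoked). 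Combined with $|y_i|\leq 1$ this gives the crude estimate $\|\ub^{(t')}\|_2\leq \sqrt{n}\cdot\tilde\cO(1+\omega\sqrt{m})$ \emph{directly}, for every $t'$. That is precisely where the factor $\tilde\cO(1+\omega\sqrt{m})$ appearing in all three inequalities comes from, and it makes the argument strictly sequential: \eqref{eq:lemma_explicitboundinsideball_eq1} is proved first using only this crude bound, then \eqref{eq:lemma_explicitboundinsideball_eq2} and \eqref{eq:lemma_explicitboundinsideball_eq4} are obtained from \eqref{eq:lemma_explicitboundinsideball_eq1}. Your joint-induction route would feed the bound from \eqref{eq:lemma_explicitboundinsideball_eq4}, whose dominant term is $\tilde\cO(\sqrt{n}/(\eta m\theta^2\lambda_{r_k}))$, back into the per-step error of \eqref{eq:lemma_explicitboundinsideball_eq1}; this yields a per-step increment of order $\omega^{1/3}\sqrt{n}\lambda_{r_k}^{-1}$ rather than $\omega^{1/3}\eta m\theta^2\sqrt{n}\,\tilde\cO(1+\omega\sqrt{m})$, so the induction does not reproduce the stated form of \eqref{eq:lemma_explicitboundinsideball_eq1}.

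One minor correction: the inequality $\hat\lambda_{r_k}\geq \lambda_{r_k}/2$ follows from an eigenvalue concentration result for integral operators (Proposition~1 of \citet{smale2009geometry} / Proposition~10 of \citet{rosasco2010learning}), which gives $|\lambda_i-\hat\lambda_i|\leq\cO(\sqrt{\log(1/\delta)/n})$; Lemma~\ref{lemma:projectiondifference} concerns projection matrices, not eigenvalue perturbation.
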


Now we are ready to prove Theorem~\ref{thm:projectionconvergence}. To illustrate the intuition behind the proof,  the proof can be mainly summarized into three steps:
(i) Using a induction argument together with Lemma~\ref{lemma:explicitboundinsideball} to show that under the theorem assumptions, all $T$ gradient descent iterates $\Wb^{(0)},\ldots,\Wb^{(T)}$ are inside the ball $\cB(\Wb^{(0)},\omega)$ with $\omega = O( T / (\btheta \lambda_{r_k} \sqrt{m}) )$. (ii) We then reapply Lemma~\ref{lemma:explicitboundinsideball} with $t = T$ and obtain a bound on $\| \hat\Vb_{r_k}^\top (\yb - \hat\yb^{(T)}) \|_2$. (iii) At last, we  utilize the concentration results given in Lemma~\ref{lemma:projectiondifference} to finalize the proof. The detaled proof is given as follows.

\begin{proof}[Proof of Theorem~\ref{thm:projectionconvergence}]
Define $\omega = \overline{C} T / (\btheta \lambda_{r_k} \sqrt{m})$ for some small enough absolute constant $\overline{C}$. Then by union bound, as long as $m \geq \tilde\Omega( \poly(T, \lambda_{r_k}^{-1},\epsilon^{-1}) )$, the conditions on $\omega$ given in Lemmas~\ref{lemma:NNgradient_uppbound} and \ref{lemma:explicitboundinsideball} are satisfied. 

We first show that all the iterates $\Wb^{(0)},\ldots,\Wb^{(T)}$ are inside the ball $\cB(\Wb^{(0)},\omega)$. We prove this result by inductively show that $\Wb^{(t)} \in \cB(\Wb^{(0)},\omega)$, $t=0,\ldots, T$. First of all, it is clear that $\Wb^{(0)} \in \cB(\Wb^{(0)},\omega)$. Suppose that $\Wb^{(0)},\ldots,\Wb^{(t)} \in \cB(\Wb^{(0)},\omega )$. 
Then the results of Lemmas~\ref{lemma:NNgradient_uppbound} and \ref{lemma:explicitboundinsideball} hold for $\Wb^{(0)},\ldots, \Wb^{(t)}$. Denote $\ub^{(t)} = \yb - \hat\yb^{(t)}$, $t\in {T}$. Then we have
\begin{align*}
    \| \Wb_l^{(t+1)} - \Wb_l^{(0)} \|_F &\leq \sum_{{t'} = 0}^t \| \Wb_l^{({t'} + 1)} - \Wb_l^{({t'})} \|_F \\
    &= \eta \sum_{{t'} = 0}^t \Bigg\| \frac{1}{n} \sum_{i=1}^n (y_i - \theta\cdot f_{\Wb^{(t)}}(\xb_i) )\cdot  \theta\cdot  \nabla_{\Wb_l} f_{\Wb^{(t)}}(\xb_i) \Bigg\|_F\\
    &\leq \eta \theta \sum_{{t'} = 0}^t \frac{1}{n} \sum_{i=1}^n |y_i - \theta\cdot  f_{\Wb^{(t)}}(\xb_i) |\cdot \| \nabla_{\Wb_l} f_{\Wb^{(t)}}(\xb_i) \|_F\\
    &\leq C_1 \eta \theta \sqrt{m} \sum_{{t'} = 0}^t \frac{1}{n} \sum_{i=1}^n |y_i - \theta\cdot f_{\Wb^{(t)}}(\xb_i) |\\
    &\leq C_1 \eta \theta \sqrt{m/n} \sum_{{t'} = 0}^t \| \yb -\hat\yb^{({t'})} \|_2,
\end{align*}
where the second inequality follows by Lemma~\ref{lemma:NNgradient_uppbound}. 
By Lemma~\ref{lemma:explicitboundinsideball}, we have
\begin{align*}
    \sum_{{t'} = 0}^t \| \yb -\hat\yb^{({t'})} \|_2 &\leq  \tilde\cO(\sqrt{n}/ (\eta m \theta^2 \lambda_{r_k} ) )  + \tilde\cO (T \sqrt{n} / (\eta m \theta^2 \lambda_{r_k}) ) + \lambda_{r_k}^{-1} T^2  \omega^{1/3} \sqrt{n} \cdot  \tilde\cO(1 + \omega \sqrt{m}).
\end{align*}
It then follows by the choice $\omega = \overline{C} T / ( \theta \lambda_{r_k} \sqrt{m})$, $\eta = \tilde\cO ( (m \theta^2 )^{-1} )$, $\theta = \tilde\cO(\epsilon)$ and the assumption $m \geq \tilde\cO(\poly( T, \lambda_{r_k}^{-1}, \epsilon^{-1}))$ that $\| \Wb_l^{(t+1)} - \Wb_l^{(0)} \|_F \leq \omega$, $l=1,2$. Therefore by induction, we see that with probability at least $1 - \cO(T^3n^2)\cdot \exp[-\Omega(m \omega^{2/3})]$, $\Wb{(0)},\ldots, \Wb{(T)} \in \cB(\Wb^{(0)},\omega)$ . 

Applying Lemma~\ref{lemma:explicitboundinsideball} then gives
\begin{align*}
    n^{-1/2}\cdot\| \hat\Vb_{r_k}^\top (\yb - \hat\yb^{(T)}) \|_2 &\leq  ( 1 - \eta m \theta^2 \lambda_{r_k}/2)^T\cdot n^{-1/2}\cdot\| \hat\Vb_{r_k}^\top (\yb - \hat\yb^{(0)}) \|_2 \\
    &\quad + T\lambda_{r_k}^{-1} \cdot \omega^{2/3} \eta m \theta^2 \cdot  \tilde\cO(1 + \omega \sqrt{m}) \nonumber\\
    &\quad + \lambda_{r_k}^{-1} \cdot \tilde\cO( \omega^{1/3} )\cdot n^{-1/2}\cdot \| (\hat\Vb_{r_k}^\bot)^\top (\yb - \hat\yb^{(0)}) \|_2.
\end{align*}
Now by $\omega = \overline{C} T/(\lambda_{r_k} \sqrt{m}) $, $\eta = \tilde\cO(\theta^2 m)^{-1}$ and the assumption that $m \geq m^* = \tilde\cO( \lambda_{r_k}^{-14} \cdot \epsilon^{-6} )$, we obtain 
\begin{align}\label{eq:mainthm_proof_eq1}
    n^{-1/2}\cdot\| \hat\Vb_{r_k}^\top (\yb - \hat\yb^{(T)}) \|_2 \leq ( 1  - \lambda_{r_k})^T \cdot n^{-1/2}\cdot\| \hat\Vb_{r_k}^\top (\yb - \hat\yb^{(0)}) \|_2 + \epsilon / 16.
\end{align}
By Lemma~\ref{lemma:projectionconcentration}, $\theta = \tilde\cO(\epsilon)$ and the assumptions $n \geq \tilde\Omega( \max\{ \epsilon^{-1} ( \lambda_{r_k } - \lambda_{r_k + 1}  )^{-1} , \epsilon^{-2} M^2 r_k^2 \}   )$, the eigenvalues of $\Vb_{r_k}^\top \Vb_{r_k} $ are all between $1/\sqrt{2}$ and $\sqrt{2}$. Therefore we have
\begin{align*}
    \| \hat\Vb_{r_k}^\top (\yb - \hat\yb^{(T)}) \|_2 &= \| \hat\Vb_{r_k} \hat\Vb_{r_k}^\top (\yb - \hat\yb^{(T)}) \|_2\\ 
    &\geq \| \Vb_{r_k} \Vb_{r_k}^\top (\yb - \hat\yb^{(T)}) \|_2 - \| (\Vb_{r_k} \Vb_{r_k}^\top - \hat\Vb_{r_k} \hat\Vb_{r_k}^\top ) (\yb - \hat\yb^{(T)}) \|_2\\
    &\geq  \| \Vb_{r_k}^\top (\yb - \hat\yb^{(T)}) \|_2 / \sqrt{2} - \sqrt{n}\cdot \cO\bigg(  \frac{1}{\lambda_{r_k} - \lambda_{r_k + 1}} \cdot \sqrt{\frac{ \log(1/\delta)}{n} }  + M r_k \sqrt{\frac{ \log(r_k / \delta)}{ n}} \bigg)\\
    &\geq \| \Vb_{r_k}^\top (\yb - \hat\yb^{(T)}) \|_2 / \sqrt{2} - \epsilon \sqrt{n} / 16,
\end{align*}
where the second inequality follows  by Lemma~\ref{lemma:projectiondifference} and the fact $\Vb_{r_k}^\top \Vb_{r_k} \succeq (1/\sqrt{2})\Ib$. 
Similarly, 
\begin{align*}
    \| \hat\Vb_{r_k}^\top (\yb - \hat\yb^{(0)}) \|_2 &\leq \sqrt{2} \cdot \| \Vb_{r_k}^\top (\yb - \hat\yb^{(0)}) \|_2  + \epsilon \sqrt{n} / 16 \\
    &\leq  \sqrt{2} \cdot \| \Vb_{r_k}^\top \yb  \|_2 +  \sqrt{2} \cdot \| \Vb_{r_k}^\top \hat\yb^{(0)} \|_2 + \epsilon \sqrt{n} / 16.
\end{align*}
By Lemma \ref{lemma:projectionconcentration}, with probability at least $1 - \delta$, $ \| \Vb_{r_k}^\top \|_2 \leq 1 + C r_k M^2 \sqrt{\log(r_k / \delta) / n}$. Combining this result with Lemma \ref{lemma:initialfunctionvaluebound} gives $\| \Vb_{r_k}^\top \hat\yb^{(0)} \|_2 \leq \theta \cO(\sqrt{n \log{(n/ \delta)}}) \leq \epsilon \sqrt{n} / 8$.
Plugging the above estimates into \eqref{eq:mainthm_proof_eq1} gives
\begin{align*}
    n^{-1/2}\cdot \| \Vb_{r_k}^\top (\yb - \hat\yb^{(T)}) \|_2 \leq 2 ( 1  - \lambda_{r_k})^T\cdot n^{-1/2} \cdot \| \Vb_{r_k}^\top \yb  \|_2 + \epsilon.
\end{align*}
Applying union bounds completes the proof.
\end{proof}

\subsection{Proof of Theorem \ref{theorem:spectralanalysis}}
\begin{proof}[Proof of Theorem \ref{theorem:spectralanalysis}]
The idea of the proof is close to that of Proposition 5 in \citep{bietti2019inductive} where they consider $k \gg d$ and we present a more general case including $k \gg d$ and $d \gg k$. \\
For any function $g : \SSS^d \rightarrow \RR$, by denoting $g_0(\xb) = \int_{\SSS^d} g(\yb) d\tau_d(\yb)$, it can be decomposed as
\begin{align}
    g(\xb) = \sum_{k=0}^{\infty} g_k(\xb) 
          = \sum_{k=0}^{\infty} \sum_{j=1}^{N(d,k)} 
            \int_{\SSS^d} Y_{kj}(\yb)Y_{kj}(\xb)g(\yb) d\tau_d(\yb), \label{euqation:L2decomposition} 
\end{align}
where we project function $g$ to spherical harmonics.  
For a positive-definite dot-product kernel $\kappa(\xb,\xb') : \SSS^{d} \times \SSS^{d} \rightarrow \RR$ which has the form $\kappa(\xb,\xb') = \hat{\kappa}(\dotp{\xb}{\xb'})$ for $\hat{\kappa}  : [-1,1] \rightarrow \RR$, we obtain the following decomposition by (\ref{euqation:L2decomposition}) 
\begin{align*}
    \kappa(\xb,\xb') &= 
    \sum_{k=0}^{\infty} \sum_{j=1}^{N(d,k)} 
            \int_{\SSS^d} Y_{kj}(\yb)Y_{kj}(\xb) \hat{\kappa}(\dotp{\yb}{\xb'}) d\tau_d(\yb)\\
    &= \sum_{k=0}^{\infty} N(d,k) \frac{\omega_{d-1}}{\omega_{d}} P_k(\dotp{\xb}{\xb'}) \int_{-1}^{1} \hat{\kappa}(t) P_k(t) (1-t^2)^{(d-2)/2}dt, 
\end{align*}
where we apply the Hecke-Funk formula \eqref{eq:heckefunk} and addition formula \eqref{eq:additionformula}. Denote
$$\mu_k =  \left(\omega_{d-1}/\omega_{d}\right)  \int_{-1}^{1} \hat{\kappa}(t) P_k(t) (1-t^2)^{(d-2)/2} dt. $$ 
Then by the addition formula, we have
\begin{align}\label{Mercerdecomposition}
    \kappa(\xb,\xb')
    = \sum_{k=0}^{\infty} \mu_k
    N(d,k) P_k(\dotp{\xb}{\xb'}) 
    = \sum_{k=0}^{\infty} \mu_k
    \sum_{j=1}^{N(p,k)}Y_{k,j}(\xb) Y_{k,j}(\xb').
\end{align}
(\ref{Mercerdecomposition}) is the Mercer decomposition for the kernel function $\kappa(\xb,\xb')$ and $\mu_k$ is exactly the eigenvalue of the integral operator $L_K$ on ${L}_2(\SSS^d)$ defined by 
\begin{equation*}
\begin{aligned}
L_\kappa(f)(\yb)= \int_{\SSS^d} \kappa(\xb,\yb) f(\xb) d\tau_d(\xb),  ~~~ f \in {L}_2(\SSS^d).
\end{aligned}
\end{equation*}

By using same technique as $\kappa(\xb,\xb')$, we can derive a similar expression for $\sigma(\dotp{\wb}{\xb}) =\max\left\{\dotp{\wb}{\xb},0\right\}$ and $\sigma'(\dotp{\wb}{\xb}) =\ind\{\dotp{\wb}{\xb}>0\}$, since they are essentially dot-product function on ${L}_2(\SSS^d)$. We deliver the expression below without presenting proofs.
\begin{align}
   &\sigma'(\dotp{\wb}{\xb}) 
    = \sum_{k=0}^{\infty} \beta_{1,k}
    N(d,k) P_k(\dotp{\wb}{\xb}), \label{equation:sigma}\\
   &\sigma(\dotp{\wb}{\xb}) 
    = \sum_{k=0}^{\infty} \beta_{2,k}
    N(d,k) P_k(\dotp{\wb}{\xb}), \label{equation:sigmaprime}
\end{align}
where $\beta_{1,k} =  \left(\omega_{d-1}/\omega_{d} \right) \int_{-1}^{1} \sigma(t)  P_k(t) (1-t^2)^{(d-2)/2} dt$ and $\beta_{2,k} =  \left(\omega_{d-1}/\omega_{d} \right) \int_{-1}^{1} \sigma'(t)  P_k(t) (1-t^2)^{(d-2)/2} dt$.
We add more comments on the values of $\beta_{1,k}$ and $\beta_{2,k}$. It has been pointed out in \cite{bach2017harmonics} that when $k > \alpha$ and when $k$ and $\alpha$ have same parity, we have $\beta_{\alpha+1,k} = 0$. This is because the Legendre polynomial $P_k(t)$ is orthogonal to any other polynomials of degree less than $k$ with respect to the density function $p(t) = (1-t^2)^{(d-2)/2}$. Then we clearly know that $\beta_{1,k} = 0$ for $k =2j$ and $\beta_{2,k} = 0$ for $k =2j+1$ with $j \in \NN^+$. 

For two kernel function defined in (\ref{definition:kernel}), we have 
\begin{align}
   \kappa_1(\xb,\xb') 
    &= \EE_{\wb\sim N(\mathbf{0}, \Ib)} \left[\sigma'(\la \wb,\xb \ra)\sigma'(\la \wb,\xb' \ra)\right] \nonumber\\ 
    &= \EE_{\wb\sim N(\mathbf{0}, \Ib)} \left[\sigma'(\la \wb / \left\|\wb\right\|_2,\xb \ra)\sigma'(\la \wb /  \left\|\wb\right\|_2,\xb' \ra)\right] \nonumber\\
    &= \int_{\SSS^d} \sigma'(\dotp{\vb}{\xb}) \sigma'(\dotp{\vb}{\xb'}) d \tau_d(\vb).\label{equation:kappa1integral}
\end{align}
The first equality holds because $\sigma'$ is 0-homogeneous function and the second equality is true since the normalized direction of a multivariate Gaussian random variable satisfies uniform distribution on the unit sphere.
Similarly we can derive 
\begin{align}\label{equation:kappa2integral}
   \kappa_2(\xb,\xb') 
   = (d+1) \int_{\SSS^d} \sigma(\dotp{\vb}{\xb}) \sigma(\dotp{\vb}{\xb'}) d \tau_d(\vb).
\end{align}
By combining (\ref{equation:ppintegral}), (\ref{equation:sigma}), (\ref{equation:sigmaprime}), (\ref{equation:kappa1integral}) and (\ref{equation:kappa2integral}), we can get

\begin{align}\label{equation:kappa1lambda}
   \kappa_1(\xb,\xb') 
   = \sum_{k=0}^\infty \beta_{1,k}^2 N(d,k) P_k(\dotp{\xb}{\xb'}),
\end{align}
and
\begin{align}\label{equation:kappa2lambda}
   \kappa_2(\xb,\xb') 
   = (d+1)\sum_{k=0}^\infty \beta_{2,k}^2 N(d,k) P_k(\dotp{\xb}{\xb'}).
\end{align}
Comparing (\ref{Mercerdecomposition}), (\ref{equation:kappa1lambda}) and (\ref{equation:kappa2lambda}), we can easily show that 
\begin{align}\label{euqation:muandlambda}
   \mu_{1,k} = \beta_{1,k}^2~~ \text{and}~~ \mu_{2,k} = (d+1)\beta_{2,k}^2.
\end{align}
In \cite{bach2017harmonics}, for $\alpha = 1,2$, explicit expressions of $\beta_{\alpha,k}$ for $k \geq \alpha + 1$ are presented as follows:
\begin{align*}
    \beta_{\alpha +1, k} 
    &= \frac{d-1}{2 \pi} \frac{\alpha! (-1)^{(k-1-\alpha)/2}}{2^k}
    \frac{\Gamma (d/2) \Gamma (k-\alpha)}{\Gamma(\frac{k-\alpha+1}{2}) \Gamma(\frac{k+d+\alpha+1}{2})}.
\end{align*}
By Stirling formula $\Gamma(x) \approx x^{x-1/2}e^{-x} \sqrt{2\pi}$, we have following expression of $\beta_{\alpha + 1, k}$ for $k \geq \alpha + 1$
\begin{align*}
    \beta_{\alpha + 1, k} = 
    C(\alpha) \frac{(d-1) d^{\frac{d-1}{2}}  (k-\alpha)^{k-\alpha -\frac{1}{2}}}{(k-\alpha+1)^{\frac{k-\alpha}{2}} (k+d+\alpha+1)^{\frac{k+d+\alpha}{2}}}
    = \Omega\left( d^{\frac{d+1}{2}} k^{\frac{k-\alpha-1}{2}} (k+d)^{\frac{-k-d-\alpha}{2}}\right)
\end{align*}
where $ C(\alpha)= \frac{\sqrt{2} \alpha!}{2\pi}  \exp\{\alpha +1\}$.
Also $\beta_{\alpha+1,0}= \frac{d-1}{4\pi} \frac{\Gamma{\left( \frac{\alpha +1}{2}\right)} \Gamma{\left(\frac{d}{2}\right)}}{\Gamma{\left(\frac{d+\alpha+2}{2}\right)}}$, $\beta_{1,1}=\frac{d-1}{2d\pi}$ and $\beta_{2,1} = \frac{d-1}{4\pi d} \frac{\Gamma{\left( \frac{1}{2}\right)} \Gamma{\left(\frac{d+2}{2}\right)}}{\Gamma{\left(\frac{d+3}{2}\right)}}$. Thus combine (\ref{euqation:muandlambda}) we know that $\mu_{\alpha + 1,k} = \Omega \left( d^{d+1+\alpha} k^{k-\alpha-1} (k+d)^{-k-d-\alpha}\right)$.\\
By considering (\ref{equation:precurrence}) and (\ref{Mercerdecomposition}), we have 
\begin{align*}
    \mu_0 = \mu_{1,1} + 2\mu_{2,0},~~~ \mu_{k'} = 0,k'=2j+1, ~j\in \NN^+,
\end{align*}
and
\begin{align*}
    \mu_k = \frac{k}{2k+d-1} \mu_{1,k-1} +
    \frac{k+d-1}{2k+d-1} \mu_{1,k+1} + 2\mu_{2,k},
\end{align*}
for $k \geq 1$ and $k \neq k'$.
From the discussion above, we thus know exactly that for $k \geq 1$
\begin{align*}
    \mu_k = \Omega \left(\max\left\{ d^{d+1} k^{k-1} (k+d)^{-k-d}, d^{d+1} k^k (k+d)^{-k-d-1}, d^{d+2} k^{k-2} (k+d)^{-k-d-1} \right\}\right).
\end{align*}
This finishes the proof.
\end{proof}

\subsection{Proof of Corollaries \ref{col:largek} and \ref{col:larged}}
\begin{proof}[Proof of Corollaries \ref{col:largek} and \ref{col:larged}]
We only need to bound  $ |\phi_j(\xb) | $ for $j\in [r_k]$ to finish the proof. Since now we assume input data follows uniform distribution on the unit sphere $\SSS^{d}$, $\phi_j(\xb)$ would be spherical harmonics of order at most $k$ for $j\in [r_k]$. For any spherical harmonics $Y_k$ of order $k$ and any point on $\SSS^d$, we have an upper bound (Proposition 4.16 in \cite{frye2012spherical})
\begin{align*}
    \left|Y_k(\xb)\right| \leq \left(N(d,k)    \int_{\SSS^d} Y_k^2(\yb) d \tau_d(\yb)\right)^\frac{1}{2}.
\end{align*}
Thus we know that $ |\phi_j(\xb) | \leq\sqrt{ N(d,k)}$. For $ k \gg d$, we have $N(d,k) = \frac{2k+d-1}{k} {\binom{k+d-2}{d-1}} = \cO(k^{d-1})$. For $ d \gg k$, we have $N(d,k) = \frac{2k+d-1}{k} {\binom{k+d-2}{d-1}} = \cO(d^k)$. This completes the proof.
\end{proof}

\section{Appendix C: Proof of Lemmas in Appendix B}\label{sec:proof_of_lemmas_2}
\subsection{Proof of Lemma~\ref{lemma:projectiondifference} }
\begin{proof}[Proof of Lemma~\ref{lemma:projectiondifference}]
The first inequality directly follows by equation (44) in \citet{su2019learning}. To prove the second bound, we write $ \Vb_{r_k} = \hat\Vb_{r_k} \Ab + \hat\Vb_{r_k}^\bot \Bb$, where $\Ab \in \RR^{r_k\times r_k}$, $\Bb \in\RR^{ (n - r_k) \times r_k }$. 
Let $\xi_1 = C'(\lambda_{r_k} - \lambda_{r_k + 1})^{-1} \cdot \sqrt{ \log(1/\delta)/n }$, $\xi_2 =  C''' M^2 \sqrt{ \log(r_k / \delta) / n}$ be the bounds given in the first inequality and Lemma~\ref{lemma:projectionconcentration}. By the first inequality, we have with high probability
\begin{align*}
    \| \Bb \|_F = \| \Bb^\top \|_F =\| \Vb_{{r_k}}^\top \hat\Vb_{{r_k}}^\bot \|_F \leq \xi_1.
\end{align*}
Moreover, since $\Vb_{r_k}^\top \Vb_{r_k} = \Ab^\top \Ab + \Bb^\top \Bb$, by Lemma~\ref{lemma:projectionconcentration} we have
\begin{align*}
    \| \Ab \Ab^\top - \Ib \|_2 = \| \Ab^\top \Ab  - \Ib \|_2 \leq \| \Vb_{r_k}^\top \Vb_{r_k} - \Ib \|_2 + \| \Bb^\top \Bb \|_2 \leq r_k \xi_2 + \xi_1^2.
\end{align*}
Therefore
\begin{align*}
    &\| \Vb_{{r_k}} \Vb_{{r_k}}^\top -  \hat\Vb_{{r_k}} \hat\Vb_{{r_k}}^\top \|_2 \\
    &\qquad= \| \hat\Vb_{{r_k}} \Ab \Ab^\top \hat\Vb_{{r_k}}^\top + \hat\Vb_{{r_k}} \Ab \Bb^\top (\hat\Vb_{{r_k}}^\bot)^\top + \hat\Vb_{{r_k}}^\bot \Bb \Ab^\top \hat\Vb_{{r_k}}^\top  + \hat\Vb_{{r_k}}^\bot \Bb \Bb^\top (\hat\Vb_{{r_k}}^\bot)^\top  -  \hat\Vb_{{r_k}} \hat\Vb_{{r_k}}^\top \|_2\\
    &\qquad\leq \| \hat\Vb_{{r_k}} (\Ab \Ab^\top - \Ib ) \hat\Vb_{{r_k}}^\top \|_2 + \cO(\| \Bb \|_2) \\
    &\qquad=  \| \Ab \Ab^\top - \Ib \|_2 + \cO(\| \Bb \|_2)\\
    &\qquad\leq \cO( r_k \xi_2 + \xi_1)
\end{align*}
Plugging in the definition of $\xi_1$ and $\xi_2$ completes the proof.
\end{proof}

\subsection{Proof of Lemma~\ref{lemma:explicitboundinsideball}}
The following lemma is a direct application of Proposition 1 in \cite{smale2009geometry} or  Proposition~10 in \citet{rosasco2010learning}. It bounds the difference between the eigenvalues of NTK and their finite-width counterparts.
\begin{lemma}\label{lemma:eigenvalueconcentration}
For any $\delta > 0$, with probability at least $1 - \delta$, 
$ | \lambda_i - \hat\lambda_i | \leq \cO(\sqrt{\log(1/\delta) / n })$ for $i \in [n]$.
\end{lemma}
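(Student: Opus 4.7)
The plan is to prove this eigenvalue concentration bound via the standard operator-theoretic argument for kernel integral operators, following the framework of \citet{smale2007learning,rosasco2010learning}. The key idea is that both $L_\kappa$ (continuous) and $n^{-1}\Kb^{(\infty)}$ (discrete) can be viewed as self-adjoint positive compact operators whose nonzero spectra coincide with those of two naturally coupled operators on the RKHS $\mathcal{H}_\kappa$ of the NTK, and the latter are easy to compare by concentration.

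First, I would set up the following operator identifications. Let $\mathcal{H}_\kappa$ be the RKHS associated with $\kappa$, with reproducing elements $K_{\xb}(\cdot) = \kappa(\xb,\cdot)$ and $\|K_\xb\|_{\mathcal{H}_\kappa}^2 = \kappa(\xb,\xb)$. Define the sampling operator $S_n : \mathcal{H}_\kappa \to \RR^n$ by $(S_n f)_i = n^{-1/2} f(\xb_i)$, so that $S_n S_n^\top = n^{-1}\Kb^{(\infty)}$ on $\RR^n$, and its companion $T_n := S_n^\ast S_n = \tfrac{1}{n}\sum_{i=1}^n K_{\xb_i}\otimes K_{\xb_i}$ on $\mathcal{H}_\kappa$. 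Likewise, the continuous version $T_\kappa := \EE_{\xb\sim\tau}[K_\xb \otimes K_\xb]$ on $\mathcal{H}_\kappa$ shares its nonzero eigenvalues with $L_\kappa$ on $L^2_\tau(\SSS^d)$ (these are standard facts: $S_n S_n^\ast$ and $S_n^\ast S_n$ have identical nonzero spectra, and the same holds for the continuous inclusion operator). Thus $\{\lambda_i\}$ and $\{\hat\lambda_i\}$ (padded with zeros as needed) are the ordered eigenvalues of $T_\kappa$ and $T_n$ respectively.

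Second, I would apply a Hilbert-space concentration inequality to control $\|T_n - T_\kappa\|_{\mathrm{HS}}$. Each rank-one operator $\xi_i := K_{\xb_i}\otimes K_{\xb_i} - T_\kappa$ is a zero-mean i.i.d.\ random element of the Hilbert space of Hilbert-Schmidt operators on $\mathcal{H}_\kappa$, with $\|K_{\xb_i}\otimes K_{\xb_i}\|_{\mathrm{HS}} = \kappa(\xb_i,\xb_i) \leq \kappa_\infty$ bounded since $\kappa$ is continuous on the compact set $\SSS^d\times \SSS^d$. By Pinelis' inequality (or equivalently the Hoeffding-type bound in Hilbert space used in Proposition~10 of \citet{rosasco2010learning}), with probability at least $1-\delta$,
\[
\|T_n - T_\kappa\|_{\mathrm{HS}} = \Bigl\|\tfrac{1}{n}\textstyle\sum_{i=1}^n \xi_i\Bigr\|_{\mathrm{HS}} \leq C\kappa_\infty \sqrt{\log(1/\delta)/n}.
\]

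Third, I would convert this operator bound into the desired eigenvalue bound using a Weyl-type perturbation inequality for self-adjoint compact operators: for any two such operators $A,B$ on a Hilbert space with ordered eigenvalues, $|\lambda_i(A)-\lambda_i(B)| \leq \|A-B\|_{\mathrm{op}} \leq \|A-B\|_{\mathrm{HS}}$. Applied to $A = T_\kappa$ and $B = T_n$, and using the identification of spectra from step one, this gives $|\lambda_i - \hat\lambda_i| \leq \cO(\sqrt{\log(1/\delta)/n})$ for all $i$, uniformly on the same high-probability event. A union bound is not needed because the bound on $\|T_n - T_\kappa\|$ already controls all eigenvalue gaps simultaneously.

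There is no substantial obstacle beyond the careful bookkeeping in step one: one must verify that the nonzero eigenvalues of $L_\kappa$ on $L^2_\tau$ really match those of $T_\kappa$ on $\mathcal{H}_\kappa$ (via the standard inclusion $\mathcal{H}_\kappa \hookrightarrow L^2_\tau$) and similarly for the empirical side. The concentration step and the Weyl inequality are off-the-shelf, which is precisely why the lemma is cited as a direct application of the results in \citet{smale2009geometry} and \citet{rosasco2010learning}.
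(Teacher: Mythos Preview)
Your proposal is correct and matches the paper's approach exactly: the paper does not give a proof but simply cites Proposition~1 in \citet{smale2009geometry} and Proposition~10 in \citet{rosasco2010learning}, and your three steps (RKHS operator identification $T_n,T_\kappa$; Hilbert-space Hoeffding/Pinelis concentration on $\|T_n-T_\kappa\|_{\mathrm{HS}}$; Weyl-type perturbation) are precisely the content of those cited results. You have in fact supplied more detail than the paper itself.
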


The following lemma gives a recursive formula with is key to the proof of Lemma~\ref{lemma:explicitboundinsideball}.
\begin{lemma}\label{lemma:recursiveformula}
Suppose that the iterates of gradient descent $\Wb^{(0)},\ldots, \Wb^{(t)}$ are inside the ball $\cB(\Wb^{(0)}, \omega )$. If $\omega\leq \cO([\log(m)]^{-3/2})$, then with probability at least $1 - \cO(n^2) \cdot \exp[-\Omega(m\omega^{2/3} )] $,
\begin{align*}
    \yb - \hat\yb^{({t'}+1)} = [\Ib - (\eta m\theta^2 / n) \Kb^{\infty} ] (\yb - \hat\yb^{({t'})}) + \eb^{(t)}, ~\| \eb^{({t'})} \|_2 \leq \tilde\cO( \omega^{1/3}\eta m \theta^2)\cdot \| \yb - \hat\yb^{({t'})} \|_2 
\end{align*}
for all ${t'}=0,\ldots, t-1$, where $\yb = (y_1,\ldots, y_n)^\top$, $\hat\yb^{({t'})} = \theta\cdot (f_{\Wb^{({t'})}}(\xb_1),\ldots, f_{\Wb^{({t'})}}(\xb_n))^\top$.  
\end{lemma}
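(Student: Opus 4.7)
The plan is to derive the recursion by first-order Taylor expansion of the network output between iterations $t'$ and $t'+1$, identifying the empirical NTK structure in the linear term, and absorbing every remaining piece into $\eb^{(t')}$. Two non-trivial error sources arise: (i) the discrepancy between the empirical kernel $\Kb^{(t')} := m^{-1}(\langle \nabla f_{\Wb^{(t')}}(\xb_i), \nabla f_{\Wb^{(t')}}(\xb_j) \rangle)_{ij}$ along the trajectory and its infinite-width limit $\Kb^\infty$, and (ii) the Taylor remainder coming from the ReLU non-linearity. Under $\omega \leq \cO([\log m]^{-3/2})$, both will be controlled at the rate $\tilde\cO(\omega^{1/3}\eta m\theta^2)\|\yb - \hat\yb^{(t')}\|_2$.

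First I would write out the gradient step $\Delta\Wb := \Wb^{(t'+1)} - \Wb^{(t')} = (2\eta\theta/n)\sum_{i=1}^n r^{(t')}_i \nabla f_{\Wb^{(t')}}(\xb_i)$ with $r^{(t')}_i := y_i - \theta f_{\Wb^{(t')}}(\xb_i)$; Lemma~\ref{lemma:NNgradient_uppbound} then yields $\|\Delta\Wb\|_F \leq \cO(\eta\theta\sqrt{m/n})\|\yb - \hat\yb^{(t')}\|_2$. For each $j$, decomposing $f_{\Wb^{(t'+1)}}(\xb_j) - f_{\Wb^{(t')}}(\xb_j) = \langle \nabla f_{\Wb^{(t')}}(\xb_j), \Delta\Wb \rangle + R_j$ and substituting $\Delta\Wb$ into the linear term gives $\theta(f_{\Wb^{(t'+1)}}(\xb_j) - f_{\Wb^{(t')}}(\xb_j)) = (2\eta\theta^2 m/n)[\Kb^{(t')}(\yb - \hat\yb^{(t')})]_j + \theta R_j$. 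Stacking over $j$ and subtracting from $\yb$ produces
$\yb - \hat\yb^{(t'+1)} = [\Ib - (2\eta\theta^2 m/n)\Kb^\infty](\yb - \hat\yb^{(t')}) + (2\eta\theta^2 m/n)(\Kb^\infty - \Kb^{(t')})(\yb - \hat\yb^{(t')}) - \theta\mathbf{R}$,
where $\mathbf{R} = (R_1,\ldots,R_n)^\top$. This is the advertised recursion (the constant $2$ is absorbed into $\eta$).

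For the kernel term I would split $\|\Kb^\infty - \Kb^{(t')}\|_2 \leq \|\Kb^\infty - \Kb^{(0)}\|_2 + \|\Kb^{(0)} - \Kb^{(t')}\|_2$. The first piece is the standard width concentration of Gaussian arc-cosine gradient inner products at initialization, contributing a lower-order $\tilde\cO(n/\sqrt{m})$ term. For the trajectory piece, perturbing $\Wb_1$ by at most $\omega$ flips the ReLU indicator $\sigma'(\langle \wb_k, \xb_i \rangle)$ only for those neurons whose pre-activation lies within $\cO(\omega)$ of zero; Gaussian anti-concentration of the pre-activations combined with a Bernstein-type union bound over the $m$ neurons and $n^2$ entries yields $\|\Kb^{(0)} - \Kb^{(t')}\|_2 \leq \tilde\cO(\omega^{1/3} n)$ with failure probability $\cO(n^2)\exp[-\Omega(m\omega^{2/3})]$. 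Multiplying by the leading factor $(\eta\theta^2 m/n)$ produces the targeted $\tilde\cO(\omega^{1/3}\eta m\theta^2)\|\yb - \hat\yb^{(t')}\|_2$ contribution.

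For the Taylor remainder $\mathbf{R}$, I would invoke the semi-smoothness property of ReLU networks around Gaussian initialization developed in \cite{allen2018convergence, cao2019generalizationsgd}: uniformly over $\Wb, \Wb' \in \cB(\Wb^{(0)}, \omega)$ and unit-norm $\xb_j$, one has $|f_{\Wb'}(\xb_j) - f_{\Wb}(\xb_j) - \langle \nabla f_{\Wb}(\xb_j), \Wb' - \Wb \rangle| \leq \tilde\cO(\omega^{1/3}\sqrt{m})\|\Wb' - \Wb\|_F$. Combined with the bound on $\|\Delta\Wb\|_F$ and $\|\mathbf{R}\|_2 \leq \sqrt{n}\max_j |R_j|$, this gives $\theta\|\mathbf{R}\|_2 \leq \tilde\cO(\omega^{1/3}\eta m\theta^2)\|\yb - \hat\yb^{(t')}\|_2$, matching the kernel-perturbation contribution. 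The main obstacle is extracting the sharp $\omega^{1/3}$ rate in both places: a direct Lipschitz argument on the ReLU sign pattern only yields $\cO(\omega^{1/2})$ per neuron, and improving this requires carefully combining the Gaussian anti-concentration of the pre-activations with uniform concentration simultaneously over the $n$ training inputs. This is precisely the step that activates the assumption $\omega \leq \cO([\log m]^{-3/2})$ and produces the failure probability $\cO(n^2)\exp[-\Omega(m\omega^{2/3})]$ stated in the lemma.
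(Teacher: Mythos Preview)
Your proposal is correct and follows essentially the same route as the paper: expand the network output between iterations, identify the empirical-kernel linear term, and split the error into a kernel-perturbation piece $(\Kb^\infty - \Kb^{(t')})(\yb - \hat\yb^{(t')})$ (further split into $\Kb^\infty - \Kb^{(0)}$ and $\Kb^{(0)} - \Kb^{(t')}$, handled via Bernstein and the gradient-perturbation Lemma~\ref{lemma:gradientdifference} respectively) and a Taylor remainder handled by the semi-smoothness Lemma~\ref{lemma:semilinear}. The one point to tighten is the initialization concentration: to land exactly on the stated failure probability $\cO(n^2)\exp[-\Omega(m\omega^{2/3})]$, the paper calibrates the Bernstein deviation for $|\Kb^\infty_{i,j} - \Kb^{(0)}_{i,j}|$ at threshold $\cO(\omega^{1/3})$ rather than quoting a generic $\tilde\cO(n/\sqrt{m})$ spectral bound, which would only be absorbed into $\omega^{1/3}n$ under an additional lower bound on $\omega$ that the lemma does not assume.
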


We also have the following lemma, which provides a uniform bound of the neural network function value over $\cB(\Wb^{(0)}, \omega)$. 
\begin{lemma}\label{lemma:functionvaluebound_uniformW}
Suppose that $m \geq \Omega( \omega^{-2/3} \log(n/\delta))$ and $\omega \leq \cO([\log(m)]^{-3})$. Then with probability at least $ 1 - \delta $, $ |f_{\Wb}(\xb_i)| \leq \cO( \sqrt{\log(n/\delta)} + \omega \sqrt{m}) $
for all $\Wb \in \cB( \Wb^{(0)},\omega )$ $i\in [n]$.
\end{lemma}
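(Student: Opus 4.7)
The plan is to control $|f_{\Wb}(\xb_i)|$ by a triangle-inequality decomposition into (i) the value at initialization $|f_{\Wb^{(0)}}(\xb_i)|$ and (ii) the deviation $|f_{\Wb}(\xb_i) - f_{\Wb^{(0)}}(\xb_i)|$ accumulated along a line segment from $\Wb^{(0)}$ to $\Wb$. For the first term, Lemma~\ref{lemma:initialfunctionvaluebound} already gives $|f_{\Wb^{(0)}}(\xb_i)| = \cO(\sqrt{\log(n/\delta)})$ with probability at least $1-\delta/2$, provided $m \geq C\log(n/\delta)$, which is subsumed by our hypothesis on $m$.

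For the second term, I would invoke Lemma~\ref{lemma:NNgradient_uppbound}, which states that uniformly over $\Wb \in \cB(\Wb^{(0)},\omega)$ and $i \in [n]$, $\|\nabla_{\Wb_l} f_{\Wb}(\xb_i)\|_F \leq \cO(\sqrt{m})$ for $l=1,2$, with probability at least $1 - \cO(n)\exp[-\Omega(m\omega^{2/3})]$. The condition $\omega \leq \cO([\log(m)]^{-3})$ in our hypothesis matches the requirement of Lemma~\ref{lemma:NNgradient_uppbound}, and the assumption $m \geq \Omega(\omega^{-2/3}\log(n/\delta))$ ensures this event holds with probability at least $1-\delta/2$. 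Then parameterize the segment by $\Wb(s) = \Wb^{(0)} + s(\Wb - \Wb^{(0)})$ for $s \in [0,1]$ and write
\begin{equation*}
f_{\Wb}(\xb_i) - f_{\Wb^{(0)}}(\xb_i) = \int_0^1 \la \nabla_{\Wb} f_{\Wb(s)}(\xb_i), \Wb - \Wb^{(0)} \ra \, ds,
\end{equation*}
which is valid almost everywhere since ReLU is piecewise linear and the network is differentiable outside a measure-zero set; the integral identity still holds by absolute continuity of $s\mapsto f_{\Wb(s)}(\xb_i)$. Applying Cauchy--Schwarz to each layer and using the gradient bound together with $\|\Wb_l - \Wb_l^{(0)}\|_F \leq \omega$ yields
\begin{equation*}
|f_{\Wb}(\xb_i) - f_{\Wb^{(0)}}(\xb_i)| \leq \sup_{s \in [0,1]} \sum_{l=1}^2 \|\nabla_{\Wb_l} f_{\Wb(s)}(\xb_i)\|_F \cdot \|\Wb_l - \Wb_l^{(0)}\|_F \leq \cO(\omega\sqrt{m}).
\end{equation*}

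Combining the two bounds via the triangle inequality and a union bound on $i \in [n]$ (already absorbed inside Lemmas~\ref{lemma:initialfunctionvaluebound} and~\ref{lemma:NNgradient_uppbound}) gives $|f_{\Wb}(\xb_i)| \leq \cO(\sqrt{\log(n/\delta)} + \omega\sqrt{m})$ uniformly in $\Wb \in \cB(\Wb^{(0)},\omega)$ and $i \in [n]$, as claimed. The main subtle point is ensuring that Lemma~\ref{lemma:NNgradient_uppbound} genuinely provides a \emph{uniform} bound over the entire ball rather than pointwise, so that the supremum inside the integral is controlled; this is exactly what Lemma~\ref{lemma:NNgradient_uppbound} delivers, and it is the reason for the stated probability of failure and the lower bound on $m$. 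The remaining concern --- that the integrand may cross kinks of the ReLU --- is harmless because $s \mapsto f_{\Wb(s)}(\xb_i)$ is a piecewise linear (hence absolutely continuous) function of $s$ on $[0,1]$, so the fundamental theorem of calculus applies.
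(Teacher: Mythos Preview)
Your proof is correct and follows the same overall architecture as the paper's: split $|f_{\Wb}(\xb_i)|$ into the value at initialization (controlled by Lemma~\ref{lemma:initialfunctionvaluebound}) plus the deviation $|f_{\Wb}(\xb_i)-f_{\Wb^{(0)}}(\xb_i)|$, and then combine by the triangle inequality.

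The one genuine difference is in how the deviation term is bounded. The paper does not use a line-integral argument; instead it invokes the semilinearity Lemma~\ref{lemma:semilinear}, writing
\[
|f_{\Wb}(\xb_i)-f_{\Wb^{(0)}}(\xb_i)| \leq \sum_{l=1}^2 \|\nabla_{\Wb_l} f_{\Wb^{(0)}}(\xb_i)\|_F\,\|\Wb_l-\Wb_l^{(0)}\|_F + \cO\big(\omega^{1/3}\sqrt{m\log(m)}\big)\,\|\Wb_1-\Wb_1^{(0)}\|_2,
\]
and then uses the assumption $\omega \leq \cO([\log(m)]^{-3})$ to absorb the remainder into $\cO(\omega\sqrt{m})$. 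Your approach instead parameterizes the segment from $\Wb^{(0)}$ to $\Wb$ and applies the \emph{uniform} gradient bound of Lemma~\ref{lemma:NNgradient_uppbound} along the whole path via the fundamental theorem of calculus. Both routes land on $\cO(\omega\sqrt{m})$ for the deviation; your argument is slightly more elementary in that it bypasses Lemma~\ref{lemma:semilinear} entirely, at the price of needing the gradient bound uniformly over $\cB(\Wb^{(0)},\omega)$ rather than only at $\Wb^{(0)}$ --- which Lemma~\ref{lemma:NNgradient_uppbound} already supplies anyway.
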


\begin{proof}[Proof of Lemma~\ref{lemma:explicitboundinsideball}]
Denote $\ub^{(t)} = \yb - \hat\yb^{(t)}$, $t\in {T}$. 
Then we have
\begin{align*}
    \| (\hat\Vb_{r_k}^{\bot})^\top \ub^{({t'}+1)} \|_2 &\leq \| (\hat\Vb_{r_k}^{\bot})^\top [\Ib - (\eta m \theta^2/ n) \Kb^{\infty}] \ub^{({t'})} \|_2 + \tilde\cO( \omega^{1/3}\eta m \theta^2)\cdot \| \ub^{({t'})} \|_2\\
    &\leq \| (\hat\Vb_{r_k}^{\bot})^\top \ub^{({t'})} \|_2 + \tilde\cO( \omega^{1/3}\eta m \theta^2)\cdot \sqrt{n}\cdot \tilde\cO(1 + \omega \sqrt{m}),
\end{align*}
where the first inequality follows by Lemma~\ref{lemma:recursiveformula}, and the second inequality follows by Lemma~\ref{lemma:functionvaluebound_uniformW}. Therefore we have
\begin{align*}
    \| (\hat\Vb_{r_k}^{\bot})^\top \ub^{({t'})} \|_2 \leq \| (\hat\Vb_{r_k}^{\bot})^\top \ub^{(0)} \|_2 +  {t'} \cdot \omega^{1/3}\eta m \theta^2 \cdot \sqrt{n} \cdot
    \tilde\cO(1 + \omega \sqrt{m}),
\end{align*}
for ${t'} = 0,\ldots, t$. This completes the proof of \eqref{eq:lemma_explicitboundinsideball_eq1}. Similarly, we have
\begin{align*}
    \| \hat\Vb_{r_k}^\top \ub^{({t'}+1)} \|_2 &\leq \| \hat\Vb_{r_k}^\top [\Ib - (\eta m \theta^2/ n) \Kb^{\infty}] \ub^{({t'})} \|_2 + \tilde\cO( \omega^{1/3}\eta m \theta^2)\cdot \| \ub^{({t'})} \|_2\\
    &\leq ( 1 - \eta m \theta^2\hat\lambda_{r_k}) \| \hat\Vb_{r_k}^\top \ub^{({t'})} \|_2 + \tilde\cO( \omega^{1/3}\eta m \theta^2 )\cdot ( \| \hat\Vb_{r_k}^\top \ub^{({t'})} \|_2 + \| (\hat\Vb_{r_k}^\bot)^\top \ub^{({t'})} \|_2)\\
    &\leq ( 1 - \eta m \theta^2 \lambda_{r_k}/2) \| \hat\Vb_{r_k}^\top \ub^{({t'})} \|_2 + \tilde\cO( \omega^{1/3}\eta m \theta^2 )\cdot  \| (\hat\Vb_{r_k}^\bot)^\top \ub^{({t'})} \|_2\\
    &\leq ( 1 - \eta m \theta^2 \lambda_{r_k}/2) \| \hat\Vb_{r_k}^\top \ub^{({t'})} \|_2 + {t'} \cdot (\omega^{1/3}\eta m \theta^2)^2 \cdot\sqrt{n} \cdot \tilde\cO(1 + \omega \sqrt{m}) \\
    &\quad + \tilde\cO( \omega^{1/3}\eta m \theta^2 )\cdot  \| (\hat\Vb_{r_k}^\bot)^\top \ub^{(0)} \|_2
\end{align*}
for ${t'} = 0,\ldots,t-1$, where the third inequality is by Lemma~\ref{lemma:eigenvalueconcentration} and the assumption that $\omega \leq  \tilde\cO(\lambda_{r_k}^{3})$, $ n \geq \tilde\cO(\lambda_{r_k}^{-2})$, and the fourth inequality is by \eqref{eq:lemma_explicitboundinsideball_eq1}. Therefore we have
\begin{align*}
    \| \hat\Vb_{r_k}^\top \ub^{({t'})} \|_2 &\leq ( 1 - \eta m \theta^2 \lambda_{r_k}/2)^{t'} \| \hat\Vb_{r_k}^\top \ub^{(0)} \|_2 + t' \cdot (\eta m \theta^2 \lambda_{r_k}/2)^{-1} \cdot (\omega^{1/3}\eta m \theta^2)^2 \cdot \sqrt{n}\cdot \tilde\cO(1 + \omega \sqrt{m}) \\
    &\quad + (\eta m \theta^2 \lambda_{r_k}/2)^{-1} \cdot \tilde\cO( \omega^{1/3}\eta m \theta^2 )\cdot  \| (\hat\Vb_{r_k}^\bot)^\top \ub^{(0)} \|_2\\
    &= ( 1 - \eta m \theta^2 \lambda_{r_k}/2)^{t'} \| \hat\Vb_{r_k}^\top \ub^{(0)} \|_2 + {t'} \lambda_{r_k}^{-1} \cdot \omega^{2/3} \eta m \theta^2 \cdot\sqrt{n}\cdot \tilde\cO(1 + \omega \sqrt{m}).
\end{align*}
This completes the proof of \eqref{eq:lemma_explicitboundinsideball_eq2}. Finally, for \eqref{eq:lemma_explicitboundinsideball_eq4}, by assumption we have $\omega^{1/3}\eta m \theta^2 \leq \tilde \cO(1)$. Therefore 
\begin{align*}
    \| \ub^{({t'}+1)} \|_2 &\leq \| [\Ib - (\eta m \theta^2 / n) \Kb^{\infty}] \hat\Vb_{r_k} \hat\Vb_{r_k}^\top \ub^{({t'})} \|_2 + \| [\Ib - (\eta m \theta^2 / n) \Kb^{\infty}] \hat\Vb_{r_k}^{\bot} (\hat\Vb_{r_k}^{\bot})^\top \ub^{({t'})} \|_2\\
    &\quad + \tilde\cO( \omega^{1/3}\eta m \theta^2 )\cdot \| \hat\Vb_{r_k}^\top \ub^{({t'})} \|_2 + \tilde\cO( \omega^{1/3}\eta m \theta^2 )\cdot \| (\hat\Vb_{r_k}^\bot)^\top \ub^{({t'})} \|_2 \\
    &\leq (1 - \eta m \theta^2 \hat\lambda_{r_k} ) \| \hat\Vb_{r_k}^\top \ub^{({t'})} \|_2 + \tilde\cO( \omega^{1/3}\eta m \theta^2 )\cdot \| \hat\Vb_{r_k}^\top \ub^{({t'})} \|_2 + \tilde\cO( 1 )\cdot \| (\hat\Vb_{r_k}^\bot)^\top \ub^{({t'})} \|_2\\
    &\leq (1 - \eta m \theta^2 \lambda_{r_k} / 2) \| \hat\Vb_{r_k}^\top \ub^{({t'})} \|_2 + \tilde\cO( 1)\cdot \| (\hat\Vb_{r_k}^\bot)^\top \ub^{({t'})} \|_2\\
    &\leq (1 - \eta m \theta^2 \lambda_{r_k} / 2) \| \hat\Vb_{r_k}^\top \ub^{({t'})} \|_2 + \tilde\cO( 1 )\cdot \| (\hat\Vb_{r_k}^{\bot})^\top \ub^{(0)} \|_2  + {t'} \omega^{1/3}\eta m \theta^2 \sqrt{n} \cdot \tilde\cO(1 + \omega \sqrt{m})
\end{align*}
for ${t'} = 0,\ldots,t-1$, where the third inequality is by Lemma~\ref{lemma:eigenvalueconcentration} and the assumption that $\omega \leq  \tilde\cO(\lambda_{r_k}^{3})$, and the fourth inequality follows by \eqref{eq:lemma_explicitboundinsideball_eq1}. 
Therefore we have
\begin{align*}
    \| \ub^{({t'})} \|_2 &\leq \cO(\sqrt{n}) \cdot (1 - \eta m \theta^2 \lambda_{r_k} / 2)^{t'}  + \tilde\cO ( (\eta m \theta^2 \lambda_{r_k})^{-1} )\cdot \| (\hat\Vb_{r_k}^\bot)^\top \ub^{(0)} \|_2 + \lambda_{r_k}^{-1} {t'}  \omega^{1/3}\sqrt{n} \cdot  \tilde\cO(1 + \omega \sqrt{m}).
\end{align*}
This finishes the proof. 
\end{proof}

\section{Appendix D: Proof of lemmas in Appendix C}\label{sec:proof_of_lemmas_3}

\subsection{Proof of Lemma~\ref{lemma:recursiveformula}}

\begin{lemma}[\citet{cao2019generalizationsgd}]\label{lemma:semilinear}
There exists an absolute constant $\kappa$ such that, with probability at least $1 - \cO(n) \cdot \exp[-\Omega(m\omega^{2/3} )] $ over the randomness of $\Wb^{(0)}$, for all $i\in [n]$ and $\Wb,\Wb'\in \cB(\Wb^{(0)},\omega)$ with $ \omega \leq \kappa [\log(m)]^{-3/2}$, it holds uniformly that 
\begin{align*}
    | f_{\Wb'}(\xb_i) - f_{\Wb}(\xb_i) - \la \nabla_\Wb f_{\Wb}(\xb_i) , \Wb' - \Wb \ra | \leq \cO\Big( \omega^{1/3}\sqrt{m\log(m)} \Big) \cdot \| \Wb_1' - \Wb_1 \|_2.
\end{align*}
\end{lemma}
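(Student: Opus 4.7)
My plan is to exploit the piecewise-linear structure of ReLU so that only neurons whose activation pattern flips between $\Wb$ and $\Wb'$ can contribute to the linearization residual. Writing $f_\Wb(\xb)=\sqrt{m}\sum_{r=1}^m W_{2,r}\sigma(\Wb_{1,r}^\top\xb)$ and setting $a_r=\Wb_{1,r}^\top\xb$, $a_r'=\Wb_{1,r}'^\top\xb$, a direct expansion identifies the per-neuron residual as
\[
W_{2,r}\bigl[\sigma(a_r')-\sigma(a_r)-\sigma'(a_r)(a_r'-a_r)\bigr]+(W_{2,r}'-W_{2,r})\bigl[\sigma(a_r')-\sigma(a_r)\bigr].
\]
The first bracket vanishes whenever $\mathrm{sign}(a_r)=\mathrm{sign}(a_r')$; otherwise it is bounded by $|a_r'-a_r|=|(\Wb_{1,r}'-\Wb_{1,r})^\top\xb|$, and the second bracket is always $\cO(|a_r'-a_r|)$ by Lipschitz continuity of $\sigma$.

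Since $\Wb,\Wb'\in\cB(\Wb^{(0)},\omega)$ bounds the Frobenius norm of each perturbation by $\omega$, I get the row-wise bound $\|\Wb_{1,r}-\Wb_{1,r}^{(0)}\|_2\leq\omega$ for every $r$, so a sign flip at neuron $r$ forces $|\Wb_{1,r}^{(0),\top}\xb|\leq 2\omega$. I would then count the ``bad'' set $S_\xb(R)=\{r:|\Wb_{1,r}^{(0),\top}\xb|\leq R\}$ for a threshold $R$ to be tuned. Because $\Wb_{1,r}^{(0),\top}\xb\sim N(0,2/m)$, Gaussian anti-concentration yields $\Pr[r\in S_\xb(R)]=\cO(R\sqrt{m})$, and a Bernstein-type tail bound gives $|S_\xb(R)|\leq \tilde{\cO}(Rm^{3/2})$ except on an event of probability $\exp[-\Omega(m\omega^{2/3})]$, provided $R$ is chosen of order at least $\omega^{2/3}$.

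The final step is to sum the per-neuron residuals, multiply by $\sqrt m$, and apply Cauchy--Schwarz over the bad set,
\[
\sqrt{m}\sum_{r\in S_\xb(R)}|W_{2,r}|\,|(\Wb_{1,r}'-\Wb_{1,r})^\top\xb|\leq \sqrt{m}\,\Bigl(\sum_{r\in S_\xb(R)}W_{2,r}^2\Bigr)^{1/2}\|\Wb_1'-\Wb_1\|_2,
\]
using $\sum_{r}|(\Wb_{1,r}'-\Wb_{1,r})^\top\xb|^2\leq\|(\Wb_1'-\Wb_1)\xb\|_2^2\leq\|\Wb_1'-\Wb_1\|_2^2$. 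The factor $\bigl(\sum_{r\in S_\xb(R)}W_{2,r}^2\bigr)^{1/2}$ is bounded by $\chi^2$-concentration for $\Wb_2^{(0)}\sim N(0,1/m)$ combined with the perturbation $\|\Wb_2-\Wb_2^{(0)}\|_2\leq\omega$. The second-bracket contribution reduces similarly to $\sqrt{m}\cdot\omega\cdot\|\Wb_1'-\Wb_1\|_2$ via $\|\Wb_2'-\Wb_2\|_2\leq\omega$. Setting $R$ of order $\omega^{2/3}$ then delivers the advertised $\cO(\omega^{1/3}\sqrt{m\log m})\cdot\|\Wb_1'-\Wb_1\|_2$ bound, with the logarithm arising from the uniform bound on $\max_r|W_{2,r}^{(0)}|$.

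The main obstacle is obtaining the precise $\omega^{1/3}$ exponent: it emerges from the tension between the anti-concentration threshold $R$, which controls $|S_\xb(R)|$, and the row-wise perturbation scale $\omega$, which controls how far any single pre-activation can drift. A looser balance yields a strictly worse power of $\omega$, so the Bernstein tail and the $\chi^2$ concentration must be tracked carefully with the same threshold. A union bound over $i\in[n]$ collects all training inputs and gives the stated failure probability $\cO(n)\exp[-\Omega(m\omega^{2/3})]$; uniformity over $\Wb,\Wb'\in\cB(\Wb^{(0)},\omega)$ is automatic because the bad-set event depends only on $\Wb^{(0)}$ and the fixed inputs, and the deterministic row-wise bound absorbs the remaining dependence on $\Wb,\Wb'$.
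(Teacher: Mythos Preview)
The paper does not give its own proof of this lemma; it is quoted directly from \citet{cao2019generalizationsgd} and ultimately rests on Lemma~8.2 of \citet{allen2018convergence}, restated in the paper as Lemma~\ref{lemma:differencesparsity}. So the comparison is against that standard NTK-regime argument.

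Your decomposition of the linearization residual into the two brackets is exactly right, and reducing everything to the set of neurons whose activation sign flips is the correct strategy. The gap is in how you count that set. From the Frobenius bound $\|\Wb_1-\Wb_1^{(0)}\|_F\leq\omega$ you extract only the row-wise consequence $\|\Wb_{1,r}-\Wb_{1,r}^{(0)}\|_2\leq\omega$, conclude that every sign-flip neuron lies in $S_\xb(\cO(\omega))$, and then enlarge to $S_\xb(\omega^{2/3})$. But anti-concentration gives $|S_\xb(R)|=\tilde\cO(Rm^{3/2})$, so with $R=\omega^{2/3}$ you obtain $|S|=\tilde\cO(\omega^{2/3}m^{3/2})$. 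Feeding this into your Cauchy--Schwarz step, and using either $\chi^2$-concentration or the max bound on $W_{2,r}^{(0)}$, yields
\[
\sqrt{m}\,\Bigl(\textstyle\sum_{r\in S}W_{2,r}^2\Bigr)^{1/2}\|\Wb_1'-\Wb_1\|_2
\;=\;\tilde\cO\bigl(\omega^{1/3}m^{3/4}\bigr)\cdot\|\Wb_1'-\Wb_1\|_2,
\]
which is off from the claimed $\cO(\omega^{1/3}\sqrt{m\log m})$ by a factor $m^{1/4}$.

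The missing idea is that the Frobenius constraint is far stronger than its row-wise consequence: at most $\omega^2/R^2$ rows can satisfy $\|\Wb_{1,r}-\Wb_{1,r}^{(0)}\|_2>R$. Hence the sign-flip set is contained in $S_\xb(R)\cup\{r:\|\Wb_{1,r}-\Wb_{1,r}^{(0)}\|_2>R\}$, of size $\tilde\cO(Rm^{3/2})+\omega^2/R^2$. Balancing the two terms at $R=\omega^{2/3}m^{-1/2}$ gives the sharp count $\cO(\omega^{2/3}m)$, which is precisely Lemma~\ref{lemma:differencesparsity}. With this count one gets $(\sum_{r\in S}W_{2,r}^2)^{1/2}\leq \cO(\omega^{1/3}\sqrt{\log m})$ via $\max_r|W_{2,r}^{(0)}|=\cO(\sqrt{\log(m)/m})$, and the stated bound follows. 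Your remark that ``the tension is between $R$ and the row-wise scale $\omega$'' misidentifies the trade-off: the tension is between the anti-concentration count $Rm^{3/2}$ and the Frobenius-budget count $\omega^2/R^2$, and the optimal threshold is $\omega^{2/3}/\sqrt{m}$, not $\omega^{2/3}$.
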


\begin{lemma}\label{lemma:gradientdifference}
If $\omega \leq \cO([\log(m)]^{-3/2})$, 
then with probability at least $1 - \cO(n)\cdot \exp[-\Omega(m \omega^{2/3})]$, 
\begin{align*}
    &\| \nabla_{\Wb} f_{\Wb}(\xb_i) - \nabla_{\Wb} f_{\Wb^{(0)}}(\xb_i) \|_F \leq \cO( \omega^{1/3}\sqrt{m}),\\
    & | \la \nabla_{\Wb} f_{\Wb}(\xb_i), \nabla_{\Wb} f_{\Wb}(\xb_j) \ra - \la \nabla_{\Wb} f_{\Wb^{(0)}}(\xb_i), \nabla_{\Wb} f_{\Wb^{(0)}}(\xb_j) \ra | \leq \cO( \omega^{1/3}m)
\end{align*}
for all $\Wb\in \cB(\Wb^{(0)}, \omega)$ and $i\in [n]$.
\end{lemma}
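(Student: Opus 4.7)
The plan is to decompose the gradient differences layer by layer, and within the first layer to further split by source of change. Writing $f_\Wb(\xb) = \sqrt{m}\cdot \Wb_2 \sigma(\Wb_1\xb)$, the $k$-th row of $\nabla_{\Wb_1} f_\Wb(\xb)$ equals $\sqrt{m}\cdot (\Wb_2)_k\cdot \sigma'((\Wb_1\xb)_k)\cdot \xb^\top$, while $\nabla_{\Wb_2} f_\Wb(\xb) = \sqrt{m}\cdot \sigma(\Wb_1\xb)^\top$. For the second layer, $1$-Lipschitzness of $\sigma$ together with $\|\xb_i\|_2 = 1$ immediately gives $\|\nabla_{\Wb_2} f_\Wb(\xb_i) - \nabla_{\Wb_2} f_{\Wb^{(0)}}(\xb_i)\|_F \leq \sqrt{m}\cdot\|\Wb_1 - \Wb_1^{(0)}\|_F \leq \omega\sqrt{m}$, which is dominated by $\omega^{1/3}\sqrt{m}$ since $\omega \leq 1$. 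For the first-layer gradient I split the per-neuron factor as
$$(\Wb_2)_k\sigma'((\Wb_1\xb_i)_k) - (\Wb_2^{(0)})_k\sigma'((\Wb_1^{(0)}\xb_i)_k) = [(\Wb_2)_k - (\Wb_2^{(0)})_k]\sigma'((\Wb_1\xb_i)_k) + (\Wb_2^{(0)})_k[\sigma'((\Wb_1\xb_i)_k) - \sigma'((\Wb_1^{(0)}\xb_i)_k)],$$
whose first summand contributes at most $\omega\sqrt{m}$ in Frobenius norm via $|\sigma'|\leq 1$ and $\|\Wb_2 - \Wb_2^{(0)}\|_F \leq \omega$.

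The main technical step is bounding the second summand, which is supported on the flipped-sign set $S_i := \{k : \sigma'((\Wb_1\xb_i)_k) \neq \sigma'((\Wb_1^{(0)}\xb_i)_k)\}$. Here I would follow the standard NTK-regime argument (cf.\ \citet{cao2019generalizationsgd,allen2018convergence}): for the threshold $t := \omega^{2/3}/\sqrt{m}$, any $k \in S_i$ must satisfy either $|\la(\Wb_1^{(0)})_k,\xb_i\ra| \leq t$ or $\|(\Wb_1 - \Wb_1^{(0)})_k\|_2 > t$. At most $\omega^2/t^2 = m\omega^{2/3}$ rows of the latter type can occur since $\sum_k \|(\Wb_1 - \Wb_1^{(0)})_k\|_2^2 \leq \omega^2$; meanwhile, since $\la(\Wb_1^{(0)})_k, \xb_i\ra\sim N(0,2/m)$ has density $\cO(\sqrt{m})$ near zero, the expected count of the former type is $\cO(m\cdot t\sqrt{m}) = \cO(m\omega^{2/3})$, which concentrates via a Chernoff bound with failure probability $\exp[-\Omega(m\omega^{2/3})]$. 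Combining $|S_i| \leq \cO(m\omega^{2/3})$ (which holds uniformly in $\Wb\in\cB(\Wb^{(0)},\omega)$, since the random part involves only $\Wb^{(0)}$) with the Gaussian-maximum estimate $\max_k |(\Wb_2^{(0)})_k| \leq \tilde\cO(1/\sqrt{m})$ bounds the squared Frobenius norm of the second summand by $m\cdot \cO(m\omega^{2/3})\cdot \tilde\cO(1/m) = \tilde\cO(m\omega^{2/3})$, i.e., Frobenius norm $\tilde\cO(\omega^{1/3}\sqrt{m})$. A union bound over $i\in[n]$ then finishes the first inequality.

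The second inequality follows by bilinearity and Cauchy--Schwarz: writing $\Delta_\ell := \nabla_\Wb f_\Wb(\xb_\ell) - \nabla_\Wb f_{\Wb^{(0)}}(\xb_\ell)$,
$$|\la \nabla_\Wb f_\Wb(\xb_i), \nabla_\Wb f_\Wb(\xb_j)\ra - \la \nabla_\Wb f_{\Wb^{(0)}}(\xb_i), \nabla_\Wb f_{\Wb^{(0)}}(\xb_j)\ra| \leq \|\Delta_i\|_F\|\nabla_\Wb f_\Wb(\xb_j)\|_F + \|\nabla_\Wb f_{\Wb^{(0)}}(\xb_i)\|_F\|\Delta_j\|_F.$$
The first inequality gives $\|\Delta_\ell\|_F \leq \tilde\cO(\omega^{1/3}\sqrt{m})$, and Lemma~\ref{lemma:NNgradient_uppbound} supplies $\|\nabla_\Wb f_\Wb(\xb_j)\|_F, \|\nabla_\Wb f_{\Wb^{(0)}}(\xb_i)\|_F \leq \cO(\sqrt{m})$; combining these yields the desired $\tilde\cO(\omega^{1/3}m)$ bound. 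The hardest step throughout is the uniform-in-$\Wb$ control of $|S_i|$, which the splitting argument resolves by converting the Frobenius-ball constraint into a per-neuron anti-concentration criterion depending only on the randomness of $\Wb^{(0)}$.
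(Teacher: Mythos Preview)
Your proposal is correct and follows essentially the same route as the paper: the same layerwise split, the same Lipschitz bound for $\nabla_{\Wb_2}$, the same two-term decomposition of the $\nabla_{\Wb_1}$ difference, and the same Cauchy--Schwarz/triangle-inequality combination with Lemma~\ref{lemma:NNgradient_uppbound} for the kernel bound. The paper packages the sign-flip count $|S_i|\leq \cO(m\omega^{2/3})$ as a separate lemma (their Lemma~\ref{lemma:differencesparsity}, citing \citet{allen2018convergence}) rather than rederiving it inline as you do, but the argument is the same.

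The one substantive difference is how you control $\sum_{k\in S_i}(\Wb_2^{(0)})_k^2$. You use the crude bound $|S_i|\cdot\max_k(\Wb_2^{(0)})_k^2 = \tilde\cO(\omega^{2/3})$, which costs a logarithmic factor and yields $\tilde\cO(\omega^{1/3}\sqrt{m})$ rather than the $\cO(\omega^{1/3}\sqrt{m})$ stated. The paper instead invokes Lemma~7.4 of \citet{allen2018convergence}, which gives a uniform bound on $\|\Wb_2^{(0)}\|$ restricted to sparse coordinate sets and avoids the $\log m$ loss. This gap is harmless for the downstream results in the paper (which are all stated with $\tilde\cO$/$\tilde\Omega$ anyway), but to match the lemma statement as written you would need either that sharper concentration result or to exploit the fact that the ``bad'' neurons in $S_i$ coming from anti-concentration of $\Wb_1^{(0)}$ form a \emph{fixed} set independent of $\Wb_2^{(0)}$, on which $\sum_k(\Wb_2^{(0)})_k^2$ concentrates around $|S_i^{(1)}|/m$ without the log.
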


\begin{proof}[Proof of Lemma~\ref{lemma:recursiveformula}]
The gradient descent update formula gives
\begin{align}\label{eq:recursiveformulaproof_gdrule}
    \Wb^{(t+1)} = \Wb^{(t)} + \frac{2\eta}{n} \sum_{i=1}^n (y_i - \theta f_{\Wb^{(t)}}(\xb_i) )\cdot \theta \nabla_{\Wb} f_{\Wb^{(t)}}(\xb_i).
\end{align}
For any $j\in[n]$, subtracting $\Wb^{(t)}$ and applying inner product with $\theta \nabla_{\Wb} f_{\Wb^{(t)}}(\xb_j)$ gives
\begin{align*}
    \theta \la \nabla_{\Wb} f_{\Wb^{(t)}}(\xb_j), \Wb^{(t+1)} - \Wb^{(t)} \ra &= \frac{2\eta \theta^2}{n} \sum_{i=1}^n (y_i - \hat y_i^{(t)} )\cdot \la \nabla_{\Wb} f_{\Wb^{(t)}}(\xb_j), \nabla_{\Wb} f_{\Wb^{(t)}}(\xb_i) \ra. 
\end{align*}
Further rearranging terms then gives
\begin{align}
    y_j - (\hat\yb^{(t+1)})_j &= y_j - (\hat\yb^{(t)})_j -\frac{2\eta m \theta^2}{n} \sum_{i=1}^n (y_i - \theta f_{\Wb^{(t)}}(\xb_i) )\cdot \Kb^{\infty}_{i,j} + I_{1,j,t} + I_{2,j,t} + I_{3,j,t}, \label{eq:recursiveformulaproof_eq1}
\end{align}
where 
\begin{align*}
    &I_{1,j,t} = -\frac{2\eta \theta^2}{n} \sum_{i=1}^n (y_i - \theta f_{\Wb^{(t)}}(\xb_i) )\cdot [\la \nabla_{\Wb} f_{\Wb^{(t)}}(\xb_j), \nabla_{\Wb} f_{\Wb^{(t)}}(\xb_i) \ra - m \Kb_{i,j}^{(0)} ],\\
    &I_{2,j,t} = -\frac{2\eta m \theta^2}{n} \sum_{i=1}^n (y_i - \theta f_{\Wb^{(t)}}(\xb_i) )\cdot (\Kb_{i,j}^{(0)} - \Kb_{i,j}^{\infty} ),\\
    &I_{3,j,t} = -\theta \cdot[f_{\Wb^{(t+1)}}(\xb_j) - f_{\Wb^{(t)}}(\xb_j) - \la \nabla_{\Wb} f_{\Wb^{(t)}}(\xb_j) , \Wb^{(t+1)} - \Wb^{(t)} \ra].
\end{align*}
For $I_{1,j,t}$, by Lemma \ref{lemma:gradientdifference}, we have
\begin{align*}
    |I_{1,j,t}| 
    \leq 
    \cO ( \omega^{1/3} \eta m \theta^2) \cdot \frac{1}{n} \sum_{i=1}^n |y_i - \theta f_{\Wb^{(t)}}(\xb_i) | \leq 
    \cO ( \omega^{1/3} \eta m \theta^2) \cdot \| \yb - \hat\yb^{(t)} \|_2 / \sqrt{n}
\end{align*}
with probability at least $1 - \cO(n)\cdot \exp[-\Omega(m \omega^{2/3})]$. For $I_{2,j,t}$, by Bernstein inequality and union bound, with probability at least $1 - \cO(n^2) \cdot \exp( -\Omega(m \omega^{2/3})) $, we have
\begin{align*}
    \big| \Kb^\infty_{i,j} - \Kb^{(0)}_{i,j} \big| \leq \cO (\omega^{1/3})
\end{align*}
for all $i,j\in [n]$. Therefore
\begin{align*}
    |I_{2,j,t}| \leq \cO( \omega^{1/3} \eta m \theta^2 ) \cdot \frac{1}{n} \sum_{i=1}^n |y_i - \theta f_{\Wb^{(t)}}(\xb_i) | \leq \cO( \omega^{1/3} \eta m \theta^2) \cdot \| \yb - \hat\yb^{(t)} \|_2 / \sqrt{n}.
\end{align*}
For $I_{3,j,t}$, we have
\begin{align*}
    I_{3,j,t} &\leq \tilde\cO(\omega^{1/3} \sqrt{m} \theta) \cdot  \| \Wb_1^{(t+1)} - \Wb_1^{(t)} \|_2\\
    & \leq \tilde\cO(\omega^{1/3} \sqrt{m} \theta) \cdot \frac{2\eta}{n} \sum_{i=1}^n | y_i - \theta f_{\Wb^{(t)}}(\xb_i) | \cdot \theta \cdot \| \nabla_{\Wb_1} f_{\Wb^{(t)}}(\xb_i) \|_2\\
    &\leq \tilde\cO(\omega^{1/3} \eta m \theta^2) \cdot \frac{1}{n} \sum_{i=1}^n | y_i - \theta f_{\Wb^{(t)}}(\xb_i) |\\
    &\leq \tilde\cO(\omega^{1/3} \eta m \theta^2) \cdot \| \yb - \hat\yb^{(t)} \|_2 / \sqrt{n},
\end{align*}
where the first inequality follows by Lemmas~\ref{lemma:semilinear}, the second inequality is obtained from \eqref{eq:recursiveformulaproof_gdrule}, and the third inequality follows by Lemma \ref{lemma:NNgradient_uppbound}. 
Setting the $j$-th entry of $\eb^{(t)}$ as $I_{1,j,t}+I_{2,j,t}+ I_{3,j,t}$ and writing \eqref{eq:recursiveformulaproof_eq1} into matrix form completes the proof.
\end{proof}

\subsection{Proof of Lemma~\ref{lemma:functionvaluebound_uniformW}}
\begin{proof}[Proof of Lemma~\ref{lemma:functionvaluebound_uniformW}]
By Lemmas~\ref{lemma:semilinear} and \ref{lemma:NNgradient_uppbound}, we have
\begin{align*}
    |f_{\Wb}(\xb_i) - f_{\Wb^{(0)}}(\xb_i)| 
    &\leq \| \nabla_{\Wb_1} f_{\Wb^{(0)}}(\xb_i) \|_F \| \Wb_1 - \Wb_1^{(0)} \|_F + \| \nabla_{\Wb_2} f_{\Wb^{(0)}}(\xb_i) \|_F \| \Wb_2 - \Wb_2^{(0)} \|_F \\
    &\quad + \cO ( \omega^{1/3}\sqrt{m \log(m) } ) \cdot \| \Wb_1 - \Wb_1^{(0)} \|_2\\
    &\leq \cO(\omega \sqrt{m}),
\end{align*}
where the last inequality is by the assumption $\omega \leq [\log(m)]^{-3}$. 
Applying triangle inequality and  Lemma~\ref{lemma:initialfunctionvaluebound} then gives 
\begin{align*}
    |f_{\Wb}(\xb_i)| &\leq |f_{\Wb^{(0)}}(\xb_i)| + |f_{\Wb}(\xb_i) - f_{\Wb^{(0)}}(\xb_i)| \leq \cO(\sqrt{\log(n/\delta)}) + 
    \cO(\omega \sqrt{m})= \cO( \sqrt{\log(n/\delta)} + \omega \sqrt{m}),
\end{align*}
This completes the proof.
\end{proof}

\section{Appendix E: Proof of lemmas in Appendix D}
\subsection{Proof of Lemma~\ref{lemma:gradientdifference}}
Denote
\begin{align*}
    &\Db_{i} = \diag\big( \ind\{(\Wb_{1} \xb_{i} )_1 > 0 \},\ldots, \ind\{(\Wb_{1} \xb_{i} )_m > 0 \} \big),\\
    &\Db_{i}^{(0)} = \diag\big( \ind\{(\Wb_{1}^{(0)} \xb_{i} )_1 > 0 \},\ldots, \ind\{(\Wb_{1}^{(0)} \xb_{i} )_m > 0 \} \big).
\end{align*}
The following lemma is given in Lemma~8.2 in \citet{allen2018convergence}.
\begin{lemma}[\citet{allen2018convergence}]\label{lemma:differencesparsity}
If $\omega \leq \cO( [\log(m)]^{-3/2})$, 
then with probability at least $1 - \cO(n)\cdot \exp[-\Omega(m \omega^{2/3})]$, $\| \Db_{i} - \Db_{i}^{(0)} \|_0 \leq \cO(\omega^{2/3} m) $ for all $\Wb \in \cB( \Wb^{(0)},\omega )$, $i\in [n]$.
\end{lemma}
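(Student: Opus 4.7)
The plan is to reduce the count of sign flips to a combination of two easy estimates: a Gaussian anti-concentration count at initialization, and a geometric pigeonhole bound using the size of the ball. The key pointwise observation is that $\ind\{(\Wb_1 \xb_i)_j > 0\}$ differs from $\ind\{(\Wb_1^{(0)} \xb_i)_j > 0\}$ only when the initial preactivation is small enough to be flipped by the perturbation, i.e.,
\begin{align*}
   |(\Wb_1^{(0)} \xb_i)_j| \;\leq\; |((\Wb_1 - \Wb_1^{(0)}) \xb_i)_j| \;\leq\; \| (\Wb_1)_{j,:} - (\Wb_1^{(0)})_{j,:} \|_2,
\end{align*}
where the last step uses $\|\xb_i\|_2 = 1$.

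For any threshold $\tau > 0$, I would partition the neuron index set as $[m] \subseteq S_i \cup T$, where $S_i = \{ j : |(\Wb_1^{(0)} \xb_i)_j| \leq \tau \}$ depends only on $\Wb^{(0)}$ and $\xb_i$, and $T = \{ j : \| (\Wb_1)_{j,:} - (\Wb_1^{(0)})_{j,:} \|_2 > \tau \}$. By the pointwise observation, every index contributing to $\| \Db_i - \Db_i^{(0)} \|_0$ lies in $S_i \cup T$. The ``far'' count is uniformly bounded by a trivial Markov-type argument: $|T|\cdot \tau^2 \leq \sum_{j} \| (\Wb_1)_{j,:} - (\Wb_1^{(0)})_{j,:} \|_2^2 = \|\Wb_1 - \Wb_1^{(0)}\|_F^2 \leq \omega^2$, so $|T| \leq \omega^2/\tau^2$ for \emph{every} $\Wb \in \cB(\Wb^{(0)},\omega)$. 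This is the step that decouples the uniformity-in-$\Wb$ from any distributional assumption on $\Wb$.

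For the ``small preactivation'' count $|S_i|$, since each row of $\Wb_1^{(0)}$ is i.i.d. $N(\mathbf{0}, (2/m)\Ib)$ and $\|\xb_i\|_2 = 1$, we have $(\Wb_1^{(0)} \xb_i)_j \sim N(0, 2/m)$, whose density is bounded by $O(\sqrt{m})$ near the origin, giving $\Pr\{ j \in S_i \} \leq O(\tau \sqrt{m})$. The indicators $\{\ind\{ j\in S_i \}\}_{j=1}^m$ are independent Bernoullis, so by a standard Chernoff/Bernstein bound, $|S_i| \leq O(\tau m^{3/2})$ with probability $1 - \exp(-\Omega(\tau m^{3/2}))$, as long as $\tau m^{3/2}$ dominates a logarithmic term. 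I would then choose $\tau$ to balance the two contributions by setting $\omega^2/\tau^2 \asymp \tau m^{3/2}$, which gives $\tau = \Theta(\omega^{2/3} m^{-1/2})$ and the stated bound $\| \Db_i - \Db_i^{(0)} \|_0 \leq O(\omega^{2/3} m)$, with per-sample failure probability $\exp(-\Omega(m \omega^{2/3}))$. A union bound over $i \in [n]$ then yields the probability $1 - \cO(n)\exp[-\Omega(m\omega^{2/3})]$ asserted in the lemma.

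The only technical point to verify is that the hypothesis $\omega \leq \cO([\log m]^{-3/2})$ makes the Bernstein concentration meaningful: with $\tau m^{3/2} = \omega^{2/3} m$, this condition is exactly what ensures $\omega^{2/3} m \gtrsim \log m$, so that the Chernoff exponent dominates any logarithmic slack and the $\exp(-\Omega(m\omega^{2/3}))$ bound is nontrivial. The main obstacle is subtle rather than computational: one must resist the temptation to bound the activation change via the specific perturbation $\Wb_1 - \Wb_1^{(0)}$ (which would fail to give a uniform-in-$\Wb$ statement), and instead use the norm-ball geometry through the clean decomposition into $S_i$ and $T$.
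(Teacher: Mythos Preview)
The paper does not give its own proof of this lemma; it simply cites Lemma~8.2 of \citet{allen2018convergence}. Your argument---split the flipped neurons into those with small initial preactivation (controlled by Gaussian anti-concentration plus Chernoff) and those with large row perturbation (controlled deterministically by $\|\Wb_1-\Wb_1^{(0)}\|_F\le\omega$), then balance with $\tau=\Theta(\omega^{2/3}m^{-1/2})$---is exactly the standard proof, and is correct.

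One small correction to your last paragraph: the hypothesis $\omega\le\cO([\log m]^{-3/2})$ is an \emph{upper} bound on $\omega$, so it yields $\omega^{2/3}m\le m/\log m$, not $\omega^{2/3}m\gtrsim\log m$ as you wrote. Its actual role here is to guarantee $\tau\sqrt{m}=\omega^{2/3}\le 1$, so that the anti-concentration probability $\Pr\{j\in S_i\}=O(\tau\sqrt m)$ is a genuine (sub-unit) probability and the resulting sparsity bound $O(\omega^{2/3}m)$ is nontrivial (i.e., $\le m$). The Chernoff step itself is valid for any $\mu>0$; the failure probability $\exp(-\Omega(m\omega^{2/3}))$ may simply be uninformative when $m\omega^{2/3}$ is small, which is consistent with the lemma's statement.
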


\begin{proof}[Proof of Lemma~\ref{lemma:gradientdifference}]
By direct calculation, we have
\begin{align*}
    \nabla_{\Wb_1} f_{\Wb^{(0)}}(\xb_i) = \sqrt{m}\cdot \Db_i^{(0)} \Wb_2^{(0)\top} \xb_i^\top, \nabla_{\Wb_1} f_{\Wb}(\xb_i) = \sqrt{m}\cdot \Db_i \Wb_2^\top \xb_i^\top.
\end{align*}
Therefore we have
\begin{align*}
    \| \nabla_{\Wb_1} f_{\Wb}(\xb_i) - \nabla_{\Wb_1} f_{\Wb^{(0)}}(\xb_i)\|_F &= \sqrt{m}\cdot \| \Db_i \Wb_2^\top \xb_i^\top - \Db_i^{(0)} \Wb_2^{(0)\top} \xb_i^\top \|_F\\
    &= \sqrt{m}\cdot \| \xb_i \Wb_2\Db_i - \xb_i \Wb_2^{(0)}\Db_i^{(0)} \|_F\\
    &= \sqrt{m}\cdot \| \Wb_2\Db_i - \Wb_2^{(0)}\Db_i^{(0)} \|_2\\
    &\leq \sqrt{m}\cdot \| \Wb_2^{(0)}(\Db_i^{(0)} - \Db_i)\|_2 + \sqrt{m}\cdot \| (\Wb_2^{(0)} - \Wb_2 )\Db_i\|_2
\end{align*}
By Lemma~7.4 in \citet{allen2018convergence} and Lemma~\ref{lemma:differencesparsity}, with probability at least $1 - n \cdot \exp[ - \Omega(m) ] $, $\sqrt{m}\cdot \| \Wb_2^{(0)}(\Db_i^{(0)} - \Db_i)\|_F \leq \cO(\omega^{1/3} \sqrt{m})$ for all $i\in [n]$. Moreover, clearly $\| (\Wb_2^{(0)} - \Wb_2 )\Db_i\|_2 \leq \| \Wb_2^{(0)} - \Wb_2 \|_2 \leq \omega$. Therefore
\begin{align*}
    \| \nabla_{\Wb_1} f_{\Wb}(\xb_i) - \nabla_{\Wb_1} f_{\Wb^{(0)}}(\xb_i)\|_F \leq \cO(\omega^{1/3} \sqrt{m})
\end{align*}
for all $i\in[n]$. This proves the bound for the first layer gradients. For the second layer gradients, we have
\begin{align*}
    \nabla_{\Wb_2} f_{\Wb^{(0)}}(\xb_i) = \sqrt{m}\cdot [\sigma(\Wb_1^{(0)} \xb_i)]^{\top} , \nabla_{\Wb_2} f_{\Wb}(\xb_i) = \sqrt{m}\cdot [\sigma(\Wb_1 \xb_i)]^{\top}.
\end{align*}
It therefore follows by the $1$-Lipschitz continuity of $\sigma(\cdot)$ that
\begin{align*}
    \| \nabla_{\Wb_2} f_{\Wb}(\xb_i) - \nabla_{\Wb_2} f_{\Wb^{(0)}}(\xb_i) \|_2 \leq \sqrt{m}\cdot \| \Wb_1 \xb_i - \Wb_1^{(0)} \xb_i \|_2 \leq \omega\sqrt{m} \leq \omega^{1/3}\sqrt{m}.
\end{align*}
This completes the proof of the first inequality. The second inequality directly follows by triangle inequality and Lemma~\ref{lemma:NNgradient_uppbound}:
\begin{align*}
    | \la \nabla_{\Wb} f_{\Wb}(\xb_i), \nabla_{\Wb} f_{\Wb}(\xb_j) \ra - m\Kb^{(0)} | 
    &\leq | \la \nabla_{\Wb} f_{\Wb}(\xb_i) - \nabla_{\Wb} f_{\Wb^{(0)}}(\xb_i), \nabla_{\Wb} f_{\Wb}(\xb_j) \ra |\\
    &\quad+ | \la \nabla_{\Wb} f_{\Wb^{(0)}}(\xb_i), \nabla_{\Wb} f_{\Wb}(\xb_j) - \nabla_{\Wb} f_{\Wb^{(0)}}(\xb_j) \ra |\\
    &\leq \cO( \omega^{1/3}m).
\end{align*}
This finishes the proof.
\end{proof}

\section{Appendix F: Additional Experimental Results}

As is discussed in the main paper, when using freshly sampled points, we are actually estimating the projection length of residual function $r(\xb) = f^*(\xb) - \theta f_{\Wb^{(t)}}(\xb)$ onto the given Gegenbauer polynomial $P_k(\xb)$. Here we present in Figure \ref{fig3} a comparison between the projection length using training data and that using test data. An interesting phenomenon is that the network generalizes well on the lower-order Gegenbauer polynomial and along the highest-order Gegenbauer polynomial the network suffers overfitting.

\begin{figure}[H]
     \centering
     \subfigure[projection onto training data]{\includegraphics[width=0.47\textwidth]{figures/linear_train_111.pdf}}
     \subfigure[projection onto test data]{\includegraphics[width=0.47\textwidth]{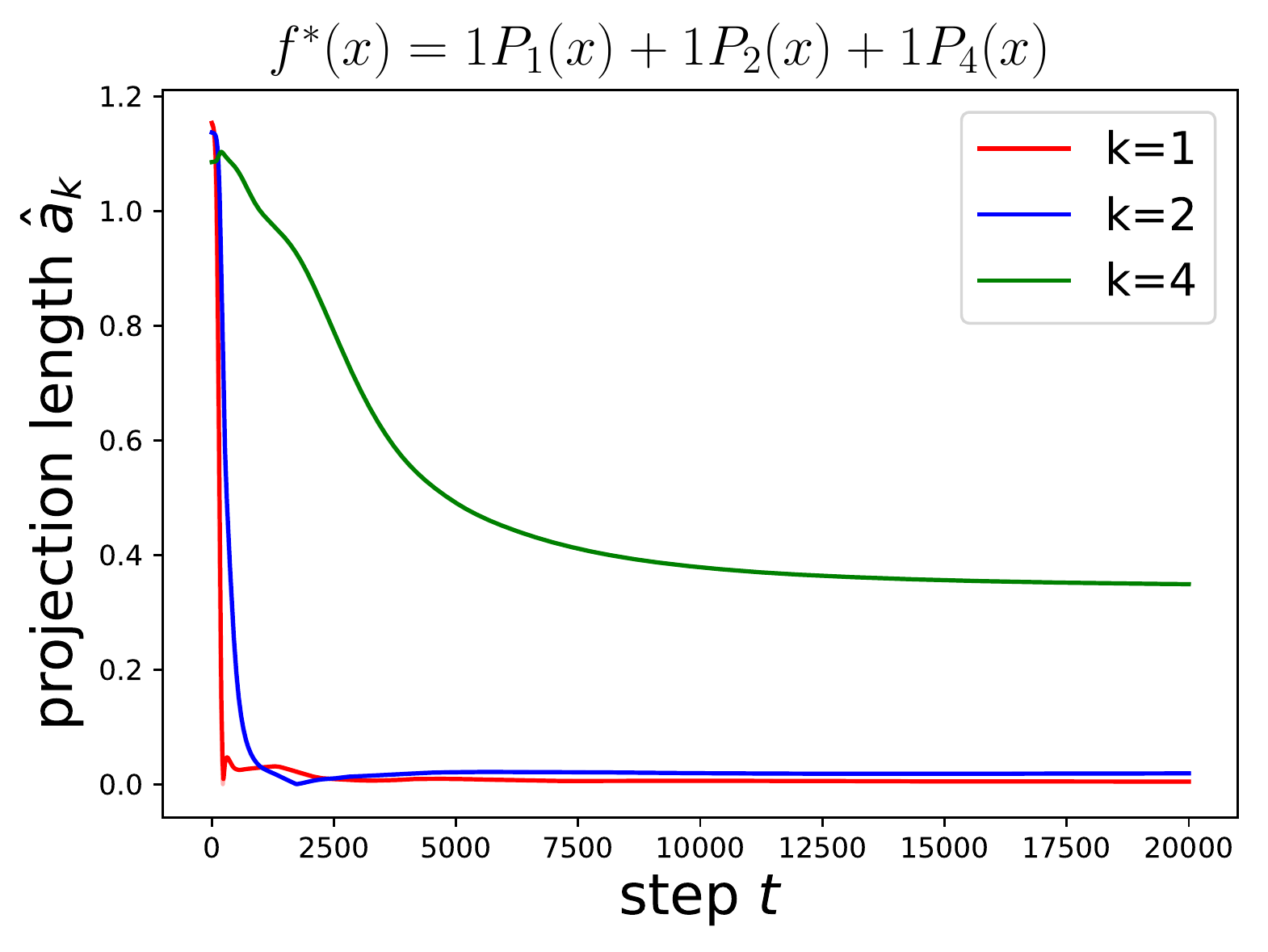}}
     \subfigure[projection onto training data]{\includegraphics[width=0.47\textwidth]{figures/linear_train_135.pdf}}
     \subfigure[projection onto test data]{\includegraphics[width=0.47\textwidth]{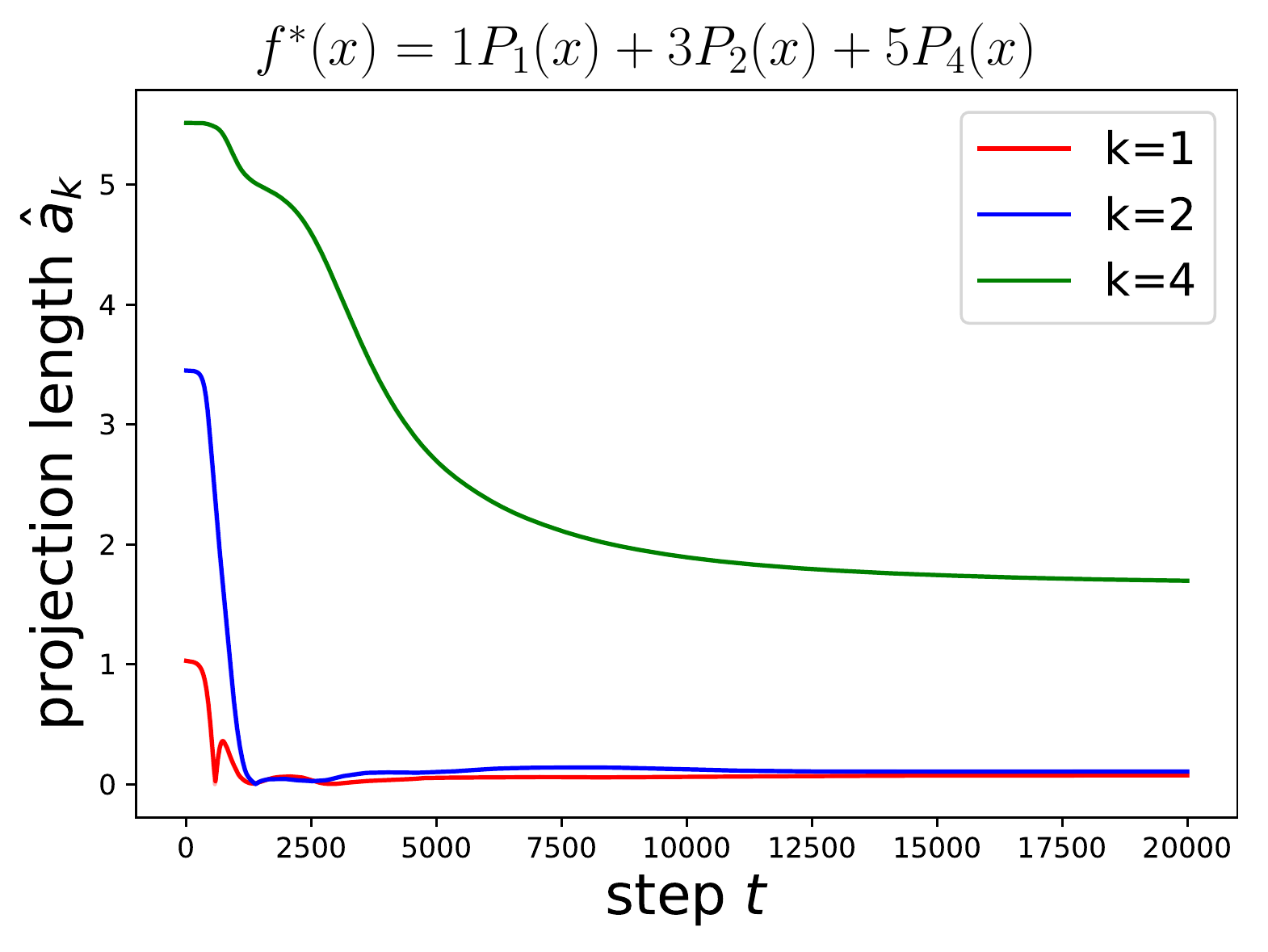}}
    \caption{Convergence curve for projection length onto vectors (determined by training data) and functions (estimated by test data). We can see that for low-order Gegenbauer polynomials, the network learns the function while for the high-order Gegenbauer polynomial, the network overfits the training data.}
    \label{fig3}
\end{figure}

\bibliographystyle{ims.bst}
\bibliography{ReLU.bib}

\end{document}